\theoremstyle{plain}
\newtheorem{thm}{\protect\theoremname}
  \theoremstyle{definition}
  \newtheorem{defn}[thm]{\protect\definitionname}
  \theoremstyle{plain}
  \newtheorem{lem}[thm]{\protect\lemmaname}
  \newtheorem{prop}[thm]{\protect\propname}
  \providecommand{\definitionname}{Definition}
  \providecommand{\lemmaname}{Lemma}
\providecommand{\theoremname}{Theorem}
\providecommand{\propname}{Proposition}
\newtheorem{remark}{Remark}
\newcommand{\man}{
\mathcal{M}
}
\newcommand{\CD}{
\mathbb{C}^\mathcal{D}
}
\DeclareMathOperator*{\argmin}{arg\,min}
\DeclareMathOperator*{\argmax}{arg\,max}
\begin{document}

\title{The steerable graph Laplacian \\ and its application to filtering image datasets}

\author{Boris Landa and Yoel Shkolnisky}


\maketitle

\bigskip

\noindent Boris Landa \\
Department of Applied Mathematics, School of Mathematical Sciences \\
Tel-Aviv University \\
{\tt sboris20@gmail.com}

\bigskip
\bigskip

\noindent Yoel Shkolnisky \\
Department of Applied Mathematics, School of Mathematical Sciences \\
Tel-Aviv University \\
{\tt yoelsh@post.tau.ac.il}

\bigskip
\bigskip

\begin{center}
Please address manuscript correspondence to Boris Landa,
{\tt sboris20@gmail.com}, (972)~549427603.
\end{center}
\newpage

\begin{abstract}
In recent years, improvements in various image acquisition techniques gave rise to the need for adaptive processing methods, aimed particularly for large datasets corrupted by noise and deformations. In this work, we consider datasets of images sampled from a low-dimensional manifold (i.e. an image-valued manifold), where the images can assume arbitrary planar rotations. To derive an adaptive and rotation-invariant framework for processing such datasets, we introduce a graph Laplacian (GL)-like operator over the dataset, termed \textit{steerable graph Laplacian}. Essentially, the steerable GL extends the standard GL by accounting for all (infinitely-many) planar rotations of all images. As it turns out, similarly to the standard GL, a properly normalized steerable GL converges to the Laplace-Beltrami operator on the low-dimensional manifold. However, the steerable GL admits an improved convergence rate compared to the GL, where the improved convergence behaves as if the intrinsic dimension of the underlying manifold is lower by one. Moreover, it is shown that the steerable GL admits eigenfunctions of the form of Fourier modes (along the orbits of the images' rotations) multiplied by eigenvectors of certain matrices, which can be computed efficiently by the FFT. For image datasets corrupted by noise, we employ a subset of these eigenfunctions to ``filter'' the dataset via a Fourier-like filtering scheme, essentially using all images and their rotations simultaneously. We demonstrate our filtering framework by de-noising simulated single-particle cryo-EM image datasets.
\end{abstract}

\section{Introduction}
Developing efficient and accurate processing methods for scientific image datasets is a central research task, which poses many theoretical and computational challenges.
In this work, motivated by certain experimental imaging and tomography problems~\cite{eisenstein2016field,thompson2016introduction}, we put our focus on the task of reducing the noise in a large dataset of images, where the in-plane rotation of each image is arbitrary. 

To accomplish such a task, it is generally required to have prior knowledge, or a model assumption, on the dataset at hand. One popular approach is to assume that the data lies on a low-dimensional linear subspace, whose parameters can then be estimated by the ubiquitous Principal Components Analysis (PCA). In our setting, where images admit arbitrary planar rotations, it is reasonable to incorporate all rotations of all images into the PCA procedure, resulting in what is known as ``steerable PCA'' (sPCA)~\cite{landa2017steerable,zhao2014fast,zhao2013fourier}.

In practice, however, experimental datasets typically admit more complicated non-linear structures. Therefore, we adopt the more flexible notion that the images were sampled from a low-dimensional manifold $\man$ embedded in a high-dimensional Euclidean space, an assumption that lies at the heart of many effective machine learning, dimensionality reduction, and signal processing techniques (see for example~\cite{roweis2000nonlinear,tenenbaum2000global,belkin2003laplacian,coifman2006diffusion,peyre2009manifold}).

When processing and analyzing manifold data, a fundamental object of interest is the Laplace-Beltrami operator $\Delta_\man$~\cite{rosenberg1997laplacian}, which encodes the geometry and topology of $\man$. Essentially, the Laplace-Beltrami operator is a second-order differential operator generalizing the classical Laplacian, and can be therefore considered as accounting for the smoothness of functions on $\man$. In this context, it is a common approach to leverage the Laplace-Beltrami operator and its discrete counterpart, the graph Laplacian~\cite{belkin2003laplacian}, to process surfaces, images, and general manifold data~\cite{meyer2014perturbation,taubin1995signal,vallet2008spectral,liu2014progressive,desbrun1999implicit,hein2006manifold,osher2017low}.
Incorporating the Laplacian in data processing algorithms typically follows one of two approaches. The first is based on solving an inverse problem which includes a regularization term involving the Laplacian of the estimated data coordinates, and the second is based on directly using the Laplacian or its eigenfunctions for filtering the dataset. We mention that the eigenfunctions of the Laplace-Beltrami operator, which we refer to as ``manifold harmonics'', are analogous to classical Fourier modes as they constitute a basis on $\man$ favorable for expanding smooth functions. Here, we focus on the the second approach, namely on filtering the dataset by the manifold harmonics. In particular, we assume (as mentioned above) that each point in the dataset is a high-dimensional point that lies on some manifold $\man$ with low intrinsic dimension. Thus, the coordinate functions of $\man$ (which are functions defined on the manifold) can be expanded by the manifold harmonics of $\man$ and filtered by truncating the expansion (see for example~\cite{meyer2014perturbation,vallet2008spectral} for similar approaches in the contexts of image processing and surface fairing).

As the manifold $\man$ is unknown a priori, we do not have access to its Laplace-Beltrami operator directly. Consequently, it must approximated from the data, which can be achieved through the graph Laplacian~\cite{belkin2003laplacian,coifman2006diffusion}. Specifically, given points $\left\{ x_1,\ldots,x_N\right\}\subset\mathbb{R}^\mathcal{D}$, we consider the fully connected graph Laplacian, denoted as $L\in\mathbb{R}^{N\times N}$ and given by
\begin{equation}
L = D-W, \quad\quad W_{i,j} = \exp{\left\{-\left\Vert x_i -x_j\right\Vert^2/\varepsilon\right\}}, \quad\quad D_{i,i} = \sum_{j=1}^N W_{i,j},	\label{eq:standard GL}
\end{equation}
where $W$ is known as the affinity matrix (using the Gaussian kernel parametrized by $\varepsilon$), and $D$ is a diagonal matrix with $\left\{D_{i,i}\right\}$ on its diagonal.
Then, as was shown in~\cite{coifman2006diffusion,singer2006graph,belkin2003laplacian,belkin2008towards},  the normalized graph Laplacian $\tilde{L} = D^{-1}L$ converges to the negative-defined Laplace-Beltrami operator $\Delta_\man$ when $\varepsilon\rightarrow 0$ and $N\rightarrow\infty$. In particular, it was shown in~\cite{singer2006graph} that for a smooth function $f:\man\rightarrow\mathbb{R}$
\begin{equation}
\frac{4}{\varepsilon} \sum_{j=1}^N\tilde{L}_{i,j} f(x_j) = \Delta_\man f(x_i) + O(\frac{1}{N^{1/2}\varepsilon^{1/2+d/4}}) + O(\varepsilon), \label{eq:standard GL convergence}
\end{equation}
where $d$ is the intrinsic dimension of $\man$.
Therefore, it is evident that for a fixed parameter $\varepsilon$, the error in the approximation of $\Delta_\man$ depends directly on the intrinsic dimension $d$ and inversely on the number of data points $N$. In this context, it is important to stress that the error does not depend on the dimension of the embedding space $\mathcal{D}$, but rather only on the intrisic dimension $d$ which is typically much smaller. If $d$ is large, then we need a large number of samples to achieve high accuracy. In our scenario, as images admit arbitrary planar rotations, the number of images required to use the approximation~\eqref{eq:standard GL convergence} may be prohibitively large as images which differ only by an in-plane rotation may not be encoded as similar by the affinity matrix $W$ (since the Euclidean distance between them may be large).
To overcome this obstacle, we construct the \textit{steerable graph Laplacian}, which is conceptually similar to the standard graph Laplacian, except that it also accounts for all rotations of the images in the dataset. We then propose to employ the eigenfunctions of this operator to filter our image dataset in a Fourier-like filtering scheme, allowing for an efficient procedure for mitigating noise.

Numerous works have proposed incorporating group-action invariance (and rotation invariance in particular) in image processing algorithms (see for example~\cite{eller2017rotation,zimmer2008rotationally,ji2009moment,singer2012vector,singer2011viewing,zhao2014rotationally} and references therin). The common approach towards rotation invariance is defining a rotationally-invariant distance for measureing pairwise affinities and constructiong graph Laplacians. Here, our approach is fundamentally different, as we consider not only the distance between best matching rotations of image pairs (nor any other type of a rotationally-invariant distance), but rather the standard (Euclidean) distance between all rotations of all pairs of images. This enables us to preserve the geometry of the underlying manifold (in contrast to various rotation-invariant distances) while making the resulting operator (the steerable graph Laplacian) invariant to rotations of the images in the dataset.
Furthermore, in the particular context of rotationally-invariant filtering and noise reduction, it is important to mention that classical algorithms such as~\cite{yu1996translation} are only applicable to one image at a time, whereas our approach builds upon large datasets of images and exploits all images simultaneously for noise reduction (see Sections~\ref{section:Analysis under Gaussian noise} and~\ref{section:Denoising rotationally-invariant image datasets}). 

The contributions of this paper are as follows. First, we introduce and analyze the steerable graph Laplacian operator, characterize its eigen-decomposition (together with a general family of operators), and show that it can be diagonalized by Fourier modes multiplied by eigenvectors of certain matrices. Second, we introduce the normalized steerable graph Laplacian, and demonstrate that it is more accurate than the standard graph Laplacian in approximating the Laplace-Beltrami operator, in the sense that it admits a smaller variance error term. Essentially, the improved variance error term can be obtained by replacing $d$ in equation~\eqref{eq:standard GL convergence} with $d-1$. Third, we propose to employ the eigenfunctions of the (normalized) steerable graph Laplacian for filtering image datasets, where the explicit appearance of Fourier modes in the form of the eigenfunctions allows for a particularly efficient filtering procedure. To motivate and justify our approach, we provide a bound on the error incurred by approximating an embedded manifold by a truncated expansion of its manifold harmonics. We also analyze our approach in the presence of white Gaussian noise, and argue that in a certain sense our method is robust to the noise, and moreover, allows us to reduce the amount of noise inversely to the number of images in the dataset.

The paper is organized as follows. Section~\ref{subsection:Problem se-tup} lays down the setting and provides the basic notation and assumptions. Then, Section~\ref{subsection:The steerable graph Laplacian for image-manifolds} defines the steerable graph Laplacian and derives some of its properties, including its eigen-decomposition. Section~\ref{subsection:Normalized steerable graph Laplacians and the Laplace-Beltrami operator} presents the normalized steerable graph Laplacian and derives its convergence rate to the Laplace-Beltrami operator, while providing its eigen-decomposition similarly to the preceding section. Section~\ref{subsection:Numerical example: Steerable graph Laplacian convergence rate} numerically corroborates the convergence rate of the normalized steerable graph Laplacian by a simple toy example, and Section~\ref{subsection:Expanding image datasets by the steerable manifold harmonics} proposes and analyzes a filtering scheme for image datasets based on the eigenfunctions of the (normalized) steerable graph Laplacian. Section~\ref{section:Algorithm summary and computational cost} summarizes all relevant algorithms and presents the computational complexities involved. Section~\ref{section:Analysis under Gaussian noise} provides an analysis of our approach in the presence of white Gaussian noise, followed by Section~\ref{section:Denoising rotationally-invariant image datasets} which demonstrates our method for de-noising a simulated cryo-EM image dataset. Lastly, Section~\ref{section:Summary and discussion} provides some concluding remarks and possible future research directions.

\section{Setting and main results} \label{section:Setting and main results}

\subsection{The setting} \label{subsection:Problem se-tup}
Suppose that we have $N$ points $\left\{x_1,\ldots,x_N\right\}\subset \CD$ sampled from a probability distribution $p(x)$, which is restricted to a smooth and compact $d$-dimensional submanifold $\man$ without boundary.
Furthermore, each point $x\in\man$ is associated with an image through a correspondence between points in the ambient space $\CD$ and images. 
Specifically, each point $x\in\CD$ corresponds to an image $I(r,\theta)\in\mathcal{L}^2(\mathbf{D})$, where $\mathbf{D}$ is the unit disk, by
\begin{equation}
I(r,\theta) = \sum_{m=-M}^M\sum_{\ell=1}^{\ell_m} x_{m,\ell} \psi_{m,\ell}(r,\theta), \quad\quad\quad \psi_{m,\ell}(r,\theta) = R_{m,\ell}(r) e^{\imath m \theta}, \label{eq:image subspace and psi polar form}
\end{equation} 
where $x_{m,\ell}$ is the $(m,\ell)$'th coordinate of $x$, and $\left\{ \psi_{m,\ell}\right\}$ is an orthogonal basis of $\mathcal{L}^2(\mathbf{D})$ whose radial part is $\left\{R_{m,\ell}\right\}_{\ell}$ (orthogonal on $[0,1)$ w.r.t the measure $r dr$). In other words, the points $x_i$ sampled from the manifold $\man$ are the expansion coefficients of some underlying images in the basis $\left\{\psi_{m,\ell}\right\}$. We mention that the points $x_i$ do not correspond to the pixels of the images directly since such a representation does not allow for a natural incorporation of planar rotations.
We shall refer to $m\in\mathbb{Z}$ as the angular index, and to $\ell\in\mathbb{N}_+$ as the radial index, where $\left\{\ell_m\right\}$ of~\eqref{eq:image subspace and psi polar form} are the numbers of radial indices taking part in the expansion for each angular index $m$, satisfying $\sum_{m=-M}^M \ell_m = \mathcal{D}$. Therefore, the dataset can be organized as the $N\times \mathcal{D}$ matrix
\begin{equation}
X = 
\begin{bmatrix}
\overbrace{
\begin{pmatrix}
x_{1,(-M,1)} & \ldots & x_{1,(-M,\ell_{-M})} \\
x_{2,(-M,1)} & \ldots & x_{2,(-M,\ell_{-M})} \\
\vdots & & \vdots \\
x_{N,(-M,1)} & \ldots & x_{N,(-M,\ell_{-M})}
\end{pmatrix}
}^{m=-M}
& \textbf{\ldots} &
\overbrace{
\begin{pmatrix}
x_{1,(M,1)} & \ldots & x_{1,(M,\ell_{M})} \\
x_{2,(M,1)} & \ldots & x_{2,(M,\ell_{M})} \\
\vdots & & \vdots \\
x_{N,(M,1)} & \ldots & x_{N,(M,\ell_{M})}
\end{pmatrix}
}^{m=M}
\end{bmatrix}, \label{eq:dataset X}
\end{equation}
where $x_{i,(m,\ell)}$ denotes the $(m,\ell)$'th coordinate (with angular frequency $m$ and radial frequency $\ell$) of the $i$'th data-point $x_i$.

Representing image datasets via their expansion coefficients obviously does not impose any restrictions, as any image dataset can be first expanded in basis functions of the form of $\psi_{m,\ell}$ (see Remark~\ref{remark:basis fun} below), and all subsequent analysis can be carried out in the domain of the resulting expansion coefficients $\left\{ x_{i,(m,\ell)}\right\}$. Additionally, our framework can also accommodate for images sampled from (or mapped to) a polar grid (see Remark~\ref{remark: polar grid} below).

Basis functions of the form of $\psi_{m,\ell}$, which are separable in polar coordinates into radial functions $R_{m,\ell}(r)$ multiplied by Fourier modes $e^{\imath m \theta}$, are called ``steerable''~\cite{freeman1991design,perona1995deformable}, as they allow for simple and efficient rotations. In particular, every $\psi_{m,\ell}$ can be rotated by multiplying it with a complex constant
\begin{equation}
\psi_{m,\ell}(r,\theta+\varphi) = e^{\imath m \varphi}\psi_{m,\ell}(r,\theta) \label{eq:psi rot}
\end{equation}
and thus, we can describe image rotation by modulation of the expansion coefficients, with each coefficient $x_{m,\ell}$ transformed into $x_{m,\ell} e^{\imath m \varphi}$. Consequently, we endow the ambient space $\CD$ with the rotation operation $\mathcal{R}:\CD\times [0,2\pi) \rightarrow \CD$, defined as
\begin{equation}
\mathcal{R}(x,\varphi) \triangleq x^\varphi, \quad\quad\quad  x^\varphi_{m,\ell} = x_{m,\ell} e^{\imath m \varphi}. \label{eq:extrinsic rotation}
\end{equation}
Therefore, if $x$ is the coefficients vector of the image $I$, then $x^\varphi$ is the coefficients vector of the image $I$ rotated by $\varphi$, obtained by modulating each coefficient appropriately.

Lastly, we assume that the manifold $\man$ is \textit{rotationally-invariant}, that is, it is closed under $\mathcal{R}$, such that for every $x\in\man$ and $\varphi\in [0,2\pi)$ we have that 
$\mathcal{R}(x,\varphi) \in \man$. A key observation here, is that this property enables us to generate new data-points on $\man$ by rotating existing images.

Our goal is to derive adaptive processing methods for our image dataset, allowing for filtering and de-noising, while making use of the rotation-invariance of $\man$ to provide accurate and efficient algorithms.

\begin{remark} \label{remark:basis fun}
Examples for bases of the form of $\psi_{m,\ell}$ include the 2D Prolate Spheroidal Wave Functions (PSWFs)~\cite{slepian1964prolate,landa2016approximation,shkolnisky2007prolate,lederman2017numerical}, the Fourier-Bessel functions~\cite{zhao2013fourier}, and data-adaptive steerable principal components~\cite{landa2017steerable,zhao2014fast}, all of which allow approximating image datasets provided by their samples. We note that the choice of the particular basis may depend on the application and specific model assumptions.
\end{remark}

\begin{remark} \label{remark: polar grid}
It is important to mention that our framework can also support functions/images sampled on a polar grid (for example, see~\cite{beylkin2007grids} for a Cartesian--polar mapping), and as a special case -- 1D periodic signals. That is, in place of eq.~\eqref{eq:image subspace and psi polar form}, every point $x\in\CD$ can be defined via the correspondence
\begin{equation}
I(r_\ell,\theta) = \sum_{m=-M}^M x_{m,\ell} e^{\imath m \theta},
\end{equation}
where $\ell=1,\ldots,\hat{\ell}$ enumerates over the different radii of $I$. In the case that $\hat{\ell}=1$, each point $x$ corresponds exactly to a 1D periodic signal. Then, if images/functions sampled on a polar grid are provided, $x_{m,\ell}$ can be computed efficiently by the FFT of the (equally-spaced) angular samples of $I$ for each radius. 
\end{remark}

\subsection{The steerable graph Laplacian for image-manifolds} \label{subsection:The steerable graph Laplacian for image-manifolds}

To derive a natural basis on the manifold $\man$, we employ graph Laplacian operators which encode the geometry and topology of $\man$. To this end, since the manifold $\man$ is rotationally-invariant, we propose to form a graph Laplacian over the points $\left\{x_1,\ldots,x_N\right\}$ and all of their (infinitely many) rotations. 

We start by defining an appropriate function space for constructing our operators. Consider the domain $\Gamma = \left\{1,...,N\right\} \times \mathbb{S}^1$, where $\mathbb{S}^1$ is unit circle (parametrized by an angle $\vartheta \in [0,2\pi)$), and functions $f:\Gamma\rightarrow \mathbb{C}$ of the form $f(i,\vartheta) = f_i(\vartheta)$, with $\left\{f_i\right\}_{i=1}^N \in \mathcal{L}^2(\mathbb{S}^1)$.
The space of the functions $f$ is defined as $\mathcal{H}=\mathcal{L}^2(\Gamma)$, which is a Hilbert space endowed with the inner product
\begin{equation}
\left\langle g,f\right\rangle_{\mathcal{H}} = \sum_{i=1}^N \int_0^{2\pi} g_i^*(\vartheta) f_i(\vartheta) d\vartheta
\end{equation}
for any $f,g\in\mathcal{H}$, where $(\cdot)^*$ denotes complex-conjugation. Loosely speaking, every $f\in\mathcal{H}$ can be considered as a column vector of periodic functions, namely $f = \left[ f_1(\vartheta),\ldots,f_N(\vartheta)\right]^T$, assigning a value to every index $i\in \left\{1,...,N\right\}$ and an angle $\vartheta$. 
 
In order to capture pairwise similarities between different points and rotations in our dataset, we define the steerable affinity operator $W:\mathcal{H}\rightarrow \mathcal{H}$ as
\begin{equation}
\left\{W f\right\}(i,\vartheta) = \sum_{j=1}^N \int_0^{2\pi} W_{i,j}(\vartheta,\varphi)f_j(\varphi)d\varphi, \quad \quad 
W_{i,j}(\vartheta,\varphi) = {\exp{\left\lbrace-{\left\Vert x_i^\vartheta - x_j^\varphi\right\Vert^2}{/\varepsilon}\right\rbrace}},	\label{eq:steerable affinity matrix}
\end{equation}
where $f\in\mathcal{H}$, $1\leq i,j \leq N$, $\left\{\vartheta,\varphi\right\}\in [0,2\pi)$, $\varepsilon$ is a tunable parameter, and $x_i^\vartheta$ stands for the rotation of $x_i$ by an angle $\vartheta$ (via~\eqref{eq:extrinsic rotation}). Therefore, $W$ can be considered as describing the affinity between any two rotations of any two points in our dataset. Note that since $\left\{ \psi_{m,\ell} \right\}$ (of~\eqref{eq:image subspace and psi polar form}) are orthonormal, the distance $\left\Vert x_i^\vartheta - x_j^\varphi\right\Vert^2$ agrees with the natural distance (in $\mathcal{L}^2(\mathbf{D})$) between the images corresponding to $x_i$ and $x_j$, after rotating them by $\vartheta$ and $\varphi$, respectively.

Before we proceed to define the steerable graph Laplacian over $\mathcal{H}$, we mention that we lift any complex-valued matrix $A\in\mathbb{C}^{N\times N}$ to act over $\mathcal{H}$ by
\begin{equation}
\left\{Af\right\}(i,\vartheta) = \sum_{j=1}^N A_{i,j} f_j(\vartheta),
\end{equation} 
for any $f\in\mathcal{H}$.
Then, we define the (un-normalized) steerable graph Laplacian $L:\mathcal{H}\rightarrow\mathcal{H}$ by
\begin{equation}
Lf = Df - Wf, \quad\quad D_{i,i} = \sum_{j=1}^{N} \int_0^{2\pi} W_{i,j}(0,\alpha) d\alpha, \label{eq:steerable graph Laplacian} 
\end{equation}
where $D$ is a diagonal matrix with $\left\{D_{i,i}\right\}_{i=1}^N$ on its diagonal.
If we implicitly augment our dataset to include all planar rotations of all images, then the steerable graph Laplacian can be viewed as the standard graph Laplacian (equation~\eqref{eq:standard GL}) constructed from the (infinitely-many) data points of the augmented dataset.

Similarly to the standard graph Laplacian, we show in Appendix~\ref{appendix:Quadratic form of L} that $L$ admits the quadratic form 
\begin{equation}
\left\langle f,Lf\right\rangle_{\mathcal{H}} = \frac{1}{2}\sum_{i,j=1}^N \int_0^{2\pi}\int_0^{2\pi} W_{i,j}(\vartheta,\varphi) \left\vert f_i(\vartheta) - f_j(\varphi) \right\vert^2 d\vartheta d\varphi, \label{eq:L quad form}
\end{equation}
which is analogous to the quadratic form of the standard graph Laplacian (see~\cite{belkin2003laplacian}) in the sense that it accounts for the regularity of the function $f$ over the domain $\Gamma$ w.r.t the pairwise similarities $W_{i,j}(\vartheta,\varphi)$ (measured between different data-points and rotations). In other words, the quantity $\left\langle f,Lf\right\rangle_{\mathcal{H}}$ penalizes large differences $\left\vert f_i(\vartheta) - f_j(\varphi) \right\vert$ particularly when $W_{i,j}(\vartheta,\varphi)$ is large, i.e. when the images corresponding to $x_i$ and $x_j$, rotated by $\vartheta$ and $\varphi$, respectively, are similar. Therefore, $\left\langle f,Lf\right\rangle_{\mathcal{H}}$ is expected to be small for functions $f$ which are smooth (in a certain sense) over $\Gamma$ with the geometry induced by $W$.

As we expect the operator $L$ to encode certain geometrical aspects of our dataset, as in the case of the standard graph Laplacian (see~\cite{belkin2003laplacian,coifman2006diffusion}), it is beneficial to investigate its eigen-decomposition. 
In this context, it is important to mention that a naive evaluation of $W$ (and consequently $L$) by discretizing all rotation angles is generally computationally prohibitive, and moreover, is less accurate then considering the continuum of all rotation angles.
To obtain the eigen-decomposition of $L$, we demonstrate in Appendix~\ref{appendix: LRI operators} that the steerable graph Laplacian $L$ is related to a family of operators, which we term \textit{Linear and Rotationally-Invariant} (LRI), that admit eigenfunctions with a convenient analytic form. In particular, we show that LRI operators (and an extened family of operators which includes $L$) can be diagonalized by tensor products between Fourier modes and vectors in $\mathbb{C}^N$, where the vectors can be computed efficiently by diagonalizing a certain sequence of matrices. In the case of the steerable graph Laplacian $L$, this inherently stems from the fact that $W_{i,j}(\vartheta,\varphi)$ is only a function of $\varphi-\vartheta$ (following immediately from~\eqref{eq:extrinsic rotation} and~\eqref{eq:steerable affinity matrix}), and therefore can be expanded in a Fourier series as
\begin{equation}
W_{i,j}(\vartheta,\varphi) = W_{i,j}(0,\varphi-\vartheta) = \frac{1}{2\pi}\sum_{m=-\infty}^{\infty} \hat{W}_{i,j}^{(m)} e^{ -\imath m (\varphi-\vartheta)}, \quad\quad\quad  \hat{W}^{(m)}_{i,j} = \int_0^{2\pi} W_{i,j}(0,\alpha) e^{ \imath m \alpha} d\alpha,\label{eq:w_hat_ij mat fourier}
\end{equation} 
where $1\leq i,j \leq N$. We define the matrix $\hat{W}^{(m)}$ whose $(i,j)$'th entry is $\hat{W}^{(m)}_{i,j}$, and observe from~\eqref{eq:w_hat_ij mat fourier} that the sequence of matrices $\left\{\hat{W}^{(m)}\right\}_{m=-\infty}^\infty$ provides a complete characterization of the steerable affinity operator $W$ (and consequently $L$). Therefore, the sequence of matrices $\left\{\hat{W}^{(m)}\right\}_{m=-\infty}^\infty$ also plays a key role in the evaluation of the eigen-decomposition of $L$, as detailed by the following theorem.

\begin{thm} \label{thm:eigenfunctions and eigenvalues of L}
The steerable graph Laplacian $L$ admits a sequence of non-negative eigenvalues $\left\{\lambda_{m,1},\ldots,\lambda_{m,N}\right\}_{m=-\infty}^\infty$, and a sequence of eigenfunctions $\left\{ \Phi_{m,1},\ldots,\Phi_{m,N} \right\}_{m=-\infty}^\infty$ which are orthogonal and complete in $\mathcal{H}$ and are given by
\begin{equation}
\Phi_{m,k} = v_{m,k}\cdot e^{\imath m \vartheta}, \label{eq:eigenfunctions and eigenvalues of L}
\end{equation}
where $v_{m,k}$ and $\lambda_{m,k}$ are the $k$'th eigenvector and eigenvalue, respectively, of the matrix
\begin{equation}
{S}_m = D - \hat{W}^{(m)}, \label{eq:S_m matrix def}
\end{equation}
and $D$ and $\hat{W}^{(m)}$ are given by~\eqref{eq:steerable graph Laplacian} and~\eqref{eq:w_hat_ij mat fourier}, respectively.
\end{thm}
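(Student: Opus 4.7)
The plan is to exploit the rotational invariance of the affinity kernel---specifically, the fact that $W_{i,j}(\vartheta,\varphi)$ depends only on the difference $\varphi-\vartheta$---via separation of variables in the angular coordinate. I would begin with the ansatz $\Phi(i,\vartheta) = v_i\, e^{\imath m \vartheta}$ for some $v\in\mathbb{C}^{N}$ and some $m\in\mathbb{Z}$, and ask for which pairs $(m,v)$ this is an eigenfunction of $L$. The expectation is that each such $(m,v)$ decouples into a finite-dimensional eigenvalue problem for an $N\times N$ matrix that depends on $m$.

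Next, I would apply $L=D-W$ to the ansatz. The diagonal piece gives $(D\Phi)(i,\vartheta) = D_{i,i}\, v_i\, e^{\imath m \vartheta}$ directly. For the affinity piece, I would substitute the Fourier expansion~\eqref{eq:w_hat_ij mat fourier} of $W_{i,j}(\vartheta,\varphi)$ into the defining integral of $W\Phi$ and swap the sum over $m'$ with the integral over $\varphi$. The inner integral factors as
\begin{equation}
\int_{0}^{2\pi} e^{-\imath m'(\varphi-\vartheta)}\, e^{\imath m \varphi}\, d\varphi \;=\; e^{\imath m' \vartheta}\int_{0}^{2\pi} e^{\imath(m-m')\varphi}\, d\varphi \;=\; 2\pi\, e^{\imath m \vartheta}\, \delta_{m,m'},
\end{equation}
which collapses the sum and leaves $(W\Phi)(i,\vartheta) = \sum_{j}\hat{W}^{(m)}_{i,j}\, v_j\, e^{\imath m \vartheta}$. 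Combining the two contributions yields $L\Phi = (S_m v)\cdot e^{\imath m \vartheta}$ with $S_m$ as in~\eqref{eq:S_m matrix def}, so the ansatz is an eigenfunction of $L$ with eigenvalue $\lambda$ if and only if $S_m v = \lambda v$.

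To obtain a complete orthogonal family, I would verify that each $S_m$ is Hermitian. Using the symmetry $W_{i,j}(\vartheta,\varphi) = W_{j,i}(\varphi,\vartheta)$ of the Gaussian kernel together with the dependence on $\varphi-\vartheta$ and the reality of $W$, a change of variables $\alpha\mapsto -\alpha$ in~\eqref{eq:w_hat_ij mat fourier} yields $\hat{W}^{(m)}_{i,j} = \overline{\hat{W}^{(m)}_{j,i}}$; since $D$ is real and diagonal, $S_m$ is Hermitian and admits an orthonormal eigenbasis $\{v_{m,k}\}_{k=1}^{N}$ with real eigenvalues $\{\lambda_{m,k}\}$. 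Non-negativity of the $\lambda_{m,k}$ follows by substituting $\Phi_{m,k}$ into the quadratic form~\eqref{eq:L quad form}: the right-hand side is manifestly non-negative, while the left-hand side equals $2\pi\,\lambda_{m,k}\,\|v_{m,k}\|^{2}$.

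It remains to establish orthogonality and completeness of $\{\Phi_{m,k}\}$ in $\mathcal{H}$. Orthogonality across distinct angular indices $m\neq m'$ is immediate from $\int_0^{2\pi} e^{-\imath m\vartheta}\,e^{\imath m'\vartheta}\,d\vartheta = 2\pi\delta_{m,m'}$; within a fixed $m$ it reduces to the orthonormality of $\{v_{m,k}\}_{k=1}^{N}$ guaranteed above. For completeness I would use a two-stage decomposition: any $f\in\mathcal{H}$ admits a fiber-wise Fourier expansion $f_i(\vartheta) = \sum_{m} c_i^{(m)} e^{\imath m \vartheta}$ since $\{e^{\imath m\vartheta}\}$ is a basis of $\mathcal{L}^2(\mathbb{S}^1)$, and for each $m$ the vector $c^{(m)}\in\mathbb{C}^{N}$ expands in the orthonormal eigenbasis of $S_m$. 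I do not foresee a deep obstacle; the main care required is the routine justification of interchanging the $m'$-sum with the $\varphi$-integration when computing $W\Phi$, which is controlled by the rapid decay of the Fourier coefficients of the smooth, $2\pi$-periodic function $W_{i,j}(0,\cdot)$.
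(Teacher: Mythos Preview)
Your proposal is correct and follows essentially the same route as the paper: the paper packages the eigenfunction computation into a general Proposition about operators of the form $A+G$ with $G$ ``linear rotationally-invariant,'' but the proof of that proposition is precisely your direct calculation (substitute the Fourier expansion of the kernel, collapse the sum via orthogonality of $e^{\imath m\vartheta}$, and reduce to $S_m v=\lambda v$). Your arguments for Hermiticity of $S_m$, non-negativity via the quadratic form~\eqref{eq:L quad form}, and orthogonality/completeness via the tensor-product structure $\mathbb{C}^N\otimes\mathcal{L}^2(\mathbb{S}^1)$ match the paper's exactly.
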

The the proof is provided in Appendix~\ref{appendix:proof of spectral properties of L}. 

We point out that $D_{i,i} = \sum_{j=1}^N\hat{W}_{i,j}^{(0)}$, hence all quantities involving $S_m$ can be computed directly from the matrices $\hat{W}^{(m)}$, which in turn can be approximated (to arbitrary precision) by 
\begin{equation}
\hat{W}_{i,j}^{(m)} \approx \frac{2\pi}{K} \sum_{k=0}^{K-1} W_{i,j}(0,2\pi k/K) e^{ \imath 2\pi m k/K} \label{eq:W_hat FFT approx}
\end{equation}
for a sufficiently large integer $K$, and evaluated rapidly using the FFT.

Analogously to the separation of variables of the basis functions $\psi_{m,\ell}$ of~\eqref{eq:image subspace and psi polar form}, the basis functions $\Phi_{m,k}$ of~\eqref{eq:eigenfunctions and eigenvalues of L} adopt a separation into products of vectors $v_{m,k}\in\mathbb{C}^N$ and Fourier modes $e^{\imath m \vartheta}$. As such, we consider $\Phi_{m,k}$ as ``steerable'' over $\mathcal{H}$, and hence the term steerable in ``steerable graph Laplacian''. Note that the angular parts of the functions $\Phi_{m,k}$ (given by Fourier modes) correspond to different rotations of the images in the dataset, where these rotations are orbits on the manifold $\man$ passing through the original points (images) of the dataset.

\subsection{Normalized steerable graph Laplacian and the Laplace-Beltrami operator} \label{subsection:Normalized steerable graph Laplacians and the Laplace-Beltrami operator}
In the previous section, we constructed and analyzed the steerable graph Laplacian $L$, which can be considered as a generalization of the standard graph Laplacian. In particular, the steerable graph Laplacian inherits many of the favorable properties of the graph Laplacian. Based upon the construction in Section~\ref{subsection:The steerable graph Laplacian for image-manifolds}, in what follows we consider a certain normalized variant of $L$ which not only provides us with steerable basis functions adapted to our dataset, but moreover, is shown to approximate the continuous (negative-defined) Laplace Beltrami operator $\Delta_\man$. 

We start by defining the normalized steerable graph Laplacian $\tilde{L}:\mathcal{H}\rightarrow\mathcal{H}$, similarly to the normalized variant of the standard graph Laplacian (see~\cite{coifman2006diffusion}), as
\begin{equation}
\tilde{L} = D^{-1}L,	\label{eq:normalized steerable graph laplacian}
\end{equation}
where $D^{-1}$ is the inverse of the matrix $D$ from~\eqref{eq:steerable graph Laplacian}. Explicitly, we have that $\tilde{L}f = f - D^{-1} W f$ for every $f\in\mathcal{H}$.
It then turns out that the normalized steerable graph Laplacian $\tilde{L}$ converges to the negative-defined Laplace-Beltrami operator $\Delta_\man$~\cite{rosenberg1997laplacian} when $\varepsilon\rightarrow 0$ and $N\rightarrow\infty$, while improving on the convergence rate of the standard (normalized) graph Laplacian (equation~\eqref{eq:standard GL convergence}), as reported by the next theorem.
\begin{thm} \label{thm:steerable graph Laplacian convergence}
Suppose that $\sum_{m\neq 0}\sum_{\ell=1}^{\ell_m} \left\vert x_{m,\ell}\right\vert^2 >0 $ for all $x\in\man$ (up to a set of measure zero), and let $\left\{ x_1,\ldots,x_N\right\}\in \man$ be i.i.d with probability distribution $p(x)=1/\operatorname{Vol}\left\lbrace\man\right\rbrace$, i.e. uniform sampling distribution. If $f:\man\rightarrow \mathbb{R}$ is a smooth function, and we define $g\in\mathcal{H}$ s.t. $g(i,\vartheta) = f(x_i^\vartheta)$ (where $x_i^\vartheta$ is given by~\eqref{eq:extrinsic rotation}), then with high probability we have that
\begin{equation} \label{eq:steerable graph Laplacian convegence}
\frac{4}{\varepsilon}\left\{\tilde{L} g\right\}(i,\vartheta) = \Delta_\man f(x_i^\vartheta) + O(\frac{1}{N^{1/2}\varepsilon^{1/2+(d-1)/4}}) + O(\varepsilon).
\end{equation}
\end{thm}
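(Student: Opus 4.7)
The plan is to follow the classical on-manifold analysis of Coifman--Lafon and Singer, modifying it to exploit the \emph{continuous} integration over the rotation angle $\varphi$ that is built into $W_{i,j}(\vartheta,\varphi)$. Define the two i.i.d.\ scalar random variables (indexed by $x_j\sim\operatorname{Unif}(\man)$)
\begin{equation*}
Y_j=\int_0^{2\pi}e^{-\|x_i^\vartheta-x_j^\varphi\|^2/\varepsilon}f(x_j^\varphi)\,d\varphi,\qquad Z_j=\int_0^{2\pi}e^{-\|x_i-x_j^\alpha\|^2/\varepsilon}\,d\alpha,
\end{equation*}
so that $\tfrac{4}{\varepsilon}\{\tilde L g\}(i,\vartheta)=-\tfrac{4}{\varepsilon}\bigl(\sum_j U_j\bigr)\big/\bigl(\sum_j Z_j\bigr)$ with $U_j=Y_j-f(x_i^\vartheta)Z_j$. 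This splits the analysis into a bias contribution (from $\mathbb{E}[U_j]$) and a variance contribution (from the second moment of $U_j$).

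For the bias, rotational invariance of $\man$ together with the uniform sampling density collapses $\mathbb{E}[Y_j]$ and $\mathbb{E}[Z_j]$ to standard Gaussian-weighted integrals over $\man$: the inner $\varphi$-integration merely re-weights by the orbit length. The on-manifold Laplace expansion, applied to $\mathbb{E}[U_j]$ via a Taylor expansion of $f(y)-f(x_i^\vartheta)$ about $x_i^\vartheta$, yields a leading term proportional to $\varepsilon\cdot\varepsilon^{d/2}\Delta_\man f(x_i^\vartheta)$ with an $O(\varepsilon^2\cdot\varepsilon^{d/2})$ remainder. Dividing by $\mathbb{E}[\sum_j Z_j]\sim N\varepsilon^{d/2}$ and multiplying by $4/\varepsilon$ then produces $\Delta_\man f(x_i^\vartheta)+O(\varepsilon)$, exactly as in the standard graph Laplacian case.

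The essential novelty lies in the variance of $U_j$, where two independent $\sqrt\varepsilon$ gains cooperate: (i) the factor $f(x_j^\varphi)-f(x_i^\vartheta)$ is $O(\sqrt\varepsilon)$ wherever the Gaussian kernel is non-negligible (this gain is already present in the standard analysis), and (ii) the inner $\varphi$-integral acts as an \emph{exact} one-dimensional convolution of the Gaussian along the $\mathbb{S}^1$-orbit of $x_j$, contributing an additional $\sqrt\varepsilon$ of smoothing. The hypothesis $\sum_{m\neq 0}\sum_\ell|x_{m,\ell}|^2>0$ guarantees that these orbits are smooth non-degenerate one-dimensional curves on $\man$, so I would introduce orbit-adapted intrinsic coordinates $(s_\|,s_\perp)\in\mathbb{R}\times\mathbb{R}^{d-1}$ near each $y\in\man$, in which $s_\|$ parametrizes the orbit and $s_\perp$ spans the $(d-1)$-dimensional transverse slice. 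For any $y$ whose orbit passes within $O(\sqrt\varepsilon)$ of $x_i^\vartheta$,
\begin{equation*}
\int_0^{2\pi}e^{-\|x_i^\vartheta-y^\varphi\|^2/\varepsilon}\,d\varphi=O(\sqrt\varepsilon),
\end{equation*}
with exponential decay in $|s_\perp|$. Squaring and integrating over $y\in\man$ under the uniform density therefore gives $\mathbb{E}[U_j^2]=O(\varepsilon^{(d+3)/2})$, one power of $\sqrt\varepsilon$ smaller than the $O(\varepsilon^{(d+2)/2})$ that enters the standard graph Laplacian analysis.

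A Chebyshev (or Bernstein) inequality applied to $\sum_j U_j$ then yields, with high probability, a deviation of order $\sqrt N\,\varepsilon^{(d+3)/4}$ from its mean; the relative fluctuation of the denominator $\sum_j Z_j$ is strictly smaller and contributes only a lower-order correction after linearizing the ratio. Combining with the bias and multiplying by $4/\varepsilon$ produces the variance error $O(N^{-1/2}\varepsilon^{-(d+1)/4})=O(N^{-1/2}\varepsilon^{-1/2-(d-1)/4})$ claimed in the theorem. The main obstacle I anticipate is the rigorous setup of the orbit-adapted change of variables: one must verify uniform lower bounds on the Jacobian of the orbit parametrization over the kernel's support, handle the fact that different orbits need not have uniform length or embed as standard circles, and confirm that the measure-zero locus where the rotation action degenerates (ruled out by the theorem's hypothesis) can be excised without disturbing the leading-order bias and variance estimates.
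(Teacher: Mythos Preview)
Your proposal is correct and follows essentially the same route as the paper: the bias computation collapses to the standard Coifman--Lafon/Singer expansion, and the variance gain comes from the orbit-adapted change of variables you describe, which the paper constructs explicitly as a ``rotationally-invariant parametrization'' $x\mapsto(z,\beta)$ with $z$ ranging over a $(d-1)$-dimensional transverse slice $\mathcal{N}$. The only packaging difference is that the paper applies Chernoff's inequality to the Singer-style variable $J_i^\vartheta(x_j)=\mathbb{E}[Z]\,Y_j-\mathbb{E}[Y]\,Z_j+\alpha\,\mathbb{E}[Z](\mathbb{E}[Z]-Z_j)$ rather than to your $U_j$, but the key second-moment cancellation (your two $\sqrt\varepsilon$ gains) is obtained by the same computation and yields the identical $O(\varepsilon^{(d+3)/2})$ estimate.
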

The proof is provided in Appendix~\ref{appendix:proof of steerable graph Laplacian convergence}. 
Comparing~\eqref{eq:steerable graph Laplacian convegence} with~\eqref{eq:standard GL convergence}, it is evident that both graph Laplacians converge to $\Delta_\man$ with the same bias error term of $O(\varepsilon)$. However, the steerable graph Laplacian admits a smaller variance error term (second term from the right in~\eqref{eq:steerable graph Laplacian convegence}), which depends on $d-1$ instead of $d$. Note that the improvement in the convergence rate (from $d$ in~\eqref{eq:standard GL convergence} to $d-1$ in~\eqref{eq:steerable graph Laplacian convegence}) is significant and in no way depends on the dimension of the ambient space $\mathcal{D}$. The intuition behind this improvement is that the steerable graph Laplacian takes all rotations of all images into consideration, and so it analytically accounts for one of the intrinsic dimensions of $\man$, that is, the dimension corresponding to the rotation $\mathcal{R}$ (see~\eqref{eq:extrinsic rotation}).
A numerical example demonstrating the improved convergence rate due to Theorem~\ref{thm:steerable graph Laplacian convergence} can be found in Section~\ref{subsection:Numerical example: Steerable graph Laplacian convergence rate}.

\begin{remark}
The condition $\sum_{m\neq 0} \sum_{\ell=1}^{\ell_m} \left\vert x_{m,\ell}\right\vert ^2>0$ in Theorem~\ref{thm:steerable graph Laplacian convergence} essentially requires that the images associated with the points of $\man$ are not radially-symmetric (i.e. have a non-constant angular part). This is because the coordinates $x_{m,\ell}$ of $x$ corresponding to the angular index $m=0$ contribute only to the radial part of the image (see equation~\eqref{eq:image subspace and psi polar form}). Of course, if the images are all radially-symmetric, then the steerable graph Laplacian would not provide any improvement over the convergence rate of the standard graph Laplacian.
\end{remark}

In the case that the sampling density $p(x)$ in Theorem~\ref{thm:steerable graph Laplacian convergence} is not uniform, we argue in Appendix~\ref{appendix: Non uniform sampling} that instead of the Laplace-Beltrmi operator $\Delta_\man$, the steerable graph Laplacian $\tilde{L}$ approximates the weighted Laplacian (Fokker-Planck operator) $\tilde{\Delta}_\man$ given by
\begin{equation}
\tilde{\Delta}_\man f(x) = \Delta_\man f(x) - 2\frac{\left\langle \nabla_\man f(x),\nabla_\man \tilde{p}(x) \right\rangle}{\tilde{p}(x)},
\end{equation}
where $f:\man\rightarrow \mathbb{R}$ is a smooth function, and $\tilde{p}$ is the rotationally-invariant density
\begin{equation}
\tilde{p}(x) = \frac{1}{2\pi} \int_0^{2\pi} p(x^\varphi) d\varphi.
\end{equation} 
Additionally, we explain in Appendix~\ref{appendix: Non uniform sampling} how to normalize the sampling density such that the resulting operator still converges to the Laplace-Beltrami operator $\Delta_\man$ (analogously to the density-invariant normalization in~\cite{coifman2006diffusion}). We include this procedure as an optional step in the algorithms` summery in Section~\ref{section:Algorithm summary and computational cost}.

Next, we evaluate the eigenfunctions and eigenvalues of the normalized steerable graph Laplacian $\tilde{L}$ of~\eqref{eq:normalized steerable graph laplacian}, where analogously to Theorem~\ref{thm:eigenfunctions and eigenvalues of L}, the next theorem relates the eigenfunctions and eigenvalues of $\tilde{L}$ to the matrices $\hat{W}^{(m)}$ of~\eqref{eq:w_hat_ij mat fourier}. 

\begin{thm} \label{thm:eigenfunctions and eigenvalues of L_tilde}
The normalized steerable graph Laplacian $\tilde{L}$ admits a sequence of non-negative eigenvalues $\left\{\tilde{\lambda}_{m,1},\ldots,\tilde{\lambda}_{m,N}\right\}_{m=-\infty}^\infty$, and a sequence of eigenfunctions $\left\{ \tilde{\Phi}_{m,1},\ldots,\tilde{\Phi}_{m,N} \right\}_{m=-\infty}^\infty$ which are complete in $\mathcal{H}$ and are given by
\begin{equation}
\tilde{\Phi}_{m,k} = \tilde{v}_{m,k} \cdot e^{\imath m \vartheta},  \label{eq:eigenfunctions and eigenvalues of L_tilde}
\end{equation}
where $\tilde{v}_{m,k}$ and $\tilde{\lambda}_{m,k}$ are the $k$'th eigenvector and eigenvalue, respectively, of the matrix
\begin{equation}
\tilde{S}_m = I - D^{-1}\hat{W}^{(m)}, \label{eq:S_tilde_m matrix def}
\end{equation}
$I$ is the $N\times N$ identity matrix, and $D$ and $\hat{W}^{(m)}$ are given by~\eqref{eq:steerable graph Laplacian} and~\eqref{eq:w_hat_ij mat fourier}, respectively.
\end{thm}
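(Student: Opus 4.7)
The plan is to reduce the spectral problem for $\tilde{L}$ to the one already solved for $L$ in Theorem~\ref{thm:eigenfunctions and eigenvalues of L}, by exploiting the fact that the diagonal matrix $D$ acts on $\mathcal{H}$ by pointwise multiplication in the $i$-coordinate and trivially in $\vartheta$, so $D^{-1}$ preserves any separated ansatz of the form $v\cdot e^{\imath m\vartheta}$.

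Concretely, I substitute $\tilde{\Phi}(i,\vartheta)=v_i\,e^{\imath m\vartheta}$ into $\tilde{L}=D^{-1}L$. The computation underlying Theorem~\ref{thm:eigenfunctions and eigenvalues of L}, which uses the Fourier expansion~\eqref{eq:w_hat_ij mat fourier} of $W_{i,j}(\vartheta,\varphi)$ in the relative angle $\varphi-\vartheta$, already shows
\begin{equation}
\{L(v\cdot e^{\imath m\vartheta})\}(i,\vartheta)=[S_m v]_i\,e^{\imath m\vartheta}.
\end{equation}
Applying $D^{-1}$ then gives $\{\tilde{L}(v\cdot e^{\imath m\vartheta})\}(i,\vartheta)=D_{i,i}^{-1}[S_m v]_i\,e^{\imath m\vartheta}=[\tilde{S}_m v]_i\,e^{\imath m\vartheta}$, so that $\tilde{L}\tilde{\Phi}_{m,k}=\tilde{\lambda}_{m,k}\tilde{\Phi}_{m,k}$ is equivalent to the finite-dimensional eigenvalue problem $\tilde{S}_m\tilde{v}_{m,k}=\tilde{\lambda}_{m,k}\tilde{v}_{m,k}$, which is precisely~\eqref{eq:eigenfunctions and eigenvalues of L_tilde}--\eqref{eq:S_tilde_m matrix def}.

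It remains to verify completeness in $\mathcal{H}$ and non-negativity of the eigenvalues. Using the tensor decomposition $\mathcal{H}\cong\mathbb{C}^N\otimes\mathcal{L}^2(\mathbb{S}^1)$, completeness of the family $\{\tilde{\Phi}_{m,k}\}$ reduces to the standard completeness of $\{e^{\imath m\vartheta}\}_{m\in\mathbb{Z}}$ in $\mathcal{L}^2(\mathbb{S}^1)$, together with the assertion that for every fixed $m$ the vectors $\{\tilde{v}_{m,k}\}_{k=1}^N$ span $\mathbb{C}^N$. This last point is the only non-routine step, because $\tilde{S}_m=I-D^{-1}\hat{W}^{(m)}$ is generally non-Hermitian. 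However, a short calculation (change of variables $\alpha\mapsto -\alpha$ in~\eqref{eq:w_hat_ij mat fourier}, using $W_{i,j}(0,\alpha)=W_{j,i}(0,-\alpha)$) shows that $\hat{W}^{(m)}$ is Hermitian, and since $D$ is real positive-diagonal, the similarity transform
\begin{equation}
D^{1/2}\tilde{S}_m D^{-1/2}=I-D^{-1/2}\hat{W}^{(m)}D^{-1/2}
\end{equation}
is Hermitian, hence $\tilde{S}_m$ is diagonalizable with real spectrum and eigenvectors spanning $\mathbb{C}^N$. Non-negativity of the $\tilde{\lambda}_{m,k}$ then follows by combining this similarity with the non-negativity of the spectrum of $S_m$ already provided by Theorem~\ref{thm:eigenfunctions and eigenvalues of L} (or directly from the quadratic form~\eqref{eq:L quad form}): the positive semi-definite matrix $D^{-1/2}S_m D^{-1/2}$ shares its spectrum with $\tilde{S}_m$.

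The main subtlety, and the only real difference from Theorem~\ref{thm:eigenfunctions and eigenvalues of L}, is that $\tilde{L}$ is no longer self-adjoint in the inner product of $\mathcal{H}$, which is why the statement asserts completeness but not orthogonality of the $\tilde{\Phi}_{m,k}$. The similarity of $\tilde{S}_m$ to a Hermitian matrix via the conjugation by $D^{1/2}$ is what rescues diagonalizability; the remainder of the argument is a direct transcription of the Fourier-mode reduction used for $L$.
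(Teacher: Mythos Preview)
Your proposal is correct and follows essentially the same approach as the paper: reduce the eigenproblem for $\tilde{L}$ on $\mathcal{H}$ to the finite-dimensional one for $\tilde{S}_m$ via the separated ansatz (the paper phrases this through its LRI/Proposition~\ref{prop:eigenfunctions and eigenvalues of H} framework applied to $I-D^{-1}W$, but the computation is the same), then use the similarity $D^{1/2}\tilde{S}_m D^{-1/2}=I-D^{-1/2}\hat{W}^{(m)}D^{-1/2}$ to a Hermitian matrix to obtain diagonalizability with real spectrum, and finally deduce non-negativity from that of $S_m$ via $D^{-1/2}S_m D^{-1/2}$. Your observation that non-self-adjointness of $\tilde{L}$ is why only completeness (not orthogonality) is claimed is exactly the right reading.
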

The proof is provided in Appendix~\ref{appendix:proof of spectral properties of L_tilde}.

Let us denote the basis $\left\{\tilde{\Phi}_{m,k}\right\}$ of~\eqref{eq:eigenfunctions and eigenvalues of L_tilde} by $\tilde{\Phi}$. Due to the convergence of $\tilde{L}$ to the Laplace-Beltrami operator $\Delta_\man$, we consider $\tilde{\Phi}$ as a basis adapted to our dataset through the geometry and topology of $\man$, and hence a favorable basis for expanding and filtering our dataset. Since $\left\{\tilde{\Phi}_{m,k}\right\}$ are also steerable, we shall refer to them (with a slight abuse of notation) as \textit{steerable manifold harmonics}. We illustrate one of these eigenfunctions in the numerical example of Section~\ref{subsection:Numerical example: Steerable graph Laplacian convergence rate} (where the manifold is the unit sphere).

\subsection{Toy example} \label{subsection:Numerical example: Steerable graph Laplacian convergence rate}
At this point, we wish to demonstrate our setting as well as the improved convergence rate of the steerable graph Laplacian by the following example.
Consider images of the form
\begin{equation}
I(r,\theta) = x_{0,1}R_{0,1}(r) + x_{1,1}R_{1,1}(r)e^{\imath \theta},	\label{eq:toy exm image def}
\end{equation}
which is a special case of~\eqref{eq:image subspace and psi polar form}, where $R_{0,1}, R_{1,1}$ are arbitrary radial functions, and $M=1,\;\ell_{-1}=0,\;\ell_0=1,\;\ell_1=1$.
Additionally, we take the unit sphere $\mathbb{S}^2$ ($d=2$) in $\mathbb{R}^3$, and embed it in $\mathbb{C}^2$ by mapping every point $p = [p_x,p_y,p_z]\in\mathbb{S}^2$ ($p_x,p_y,p_z$ are the $x,y,z$ coordinates) to the point $x=[x_{0,1},x_{1,1}]\in\man$ via
\begin{equation}
x_{0,1} = p_z, \quad x_{1,1} = p_x + \imath p_y.
\end{equation} 
Note that the rotation operation $\mathcal{R}$ of~\eqref{eq:extrinsic rotation} in this case is
\begin{equation}
\mathcal{R}(x,\varphi) = 
\begin{bmatrix}
1 & 0 \\
0 & e^{\imath \varphi}
\end{bmatrix}
\begin{bmatrix}
x_{0,1} \\
x_{1,1}
\end{bmatrix},
\end{equation}
which is equivalent to rotating the point $p\in\mathbb{R}^3$ (corresponding to $x$) in the $xy$-plane as
\begin{equation}
\begin{bmatrix}
\cos(\varphi) & \sin(\varphi) & 0 \\
-\sin(\varphi) & \cos(\varphi) & 0 \\
0 & 0 & 1
\end{bmatrix}
\begin{bmatrix}
p_x \\
p_y \\
p_z 
\end{bmatrix}.
\end{equation}
Hence, all rotations of all images sampled from the sphere remain on the sphere, and therefore $\man$ is rotationally-invariant (as defined in Section~\ref{subsection:Problem se-tup}).

In order to demonstrate numerically the convergence rate of the (normalized) steerable graph Laplacian to the Laplace-Beltrami operator (as asserted by Theorem~\ref{thm:steerable graph Laplacian convergence}), we chose a test function $f:\man\rightarrow\mathbb{R}$
\begin{equation}
f(x) = \operatorname{Re}\left\{x_{1,1}\right\} + x_{0,1},
\end{equation}
and a testing point $x_0=\left[0,1\right]$ (corresponding to $p=\left[1,0,0\right]$ on $\mathbb{S}^2$), for which $\Delta_\man{f} (x_0) = -2$ (see example in~\cite{singer2006graph}).
We then uniformly sampled $N=2,000$ points $\left\{x_1,\ldots,x_N\right\}$ from $\man$ and approximated $\Delta_\man{f}$ by applying the steerable graph Laplacian $\tilde{L}$. Specifically, $\Delta_\man{f}(x_0)$ was approximated from~\eqref{eq:steerable graph Laplacian convegence} and~\eqref{eq:normalized steerable graph laplacian} by defining $g(i,\vartheta)=f(x_i^\vartheta)$ for $i=0,1,\ldots,N$
and computing
\begin{align}
\frac{4}{\varepsilon}\left\{ \tilde{L}g\right\}(0,0) &= \frac{4}{\varepsilon} \left[ f(x_0) - \sum_{j=0}^N \int_0^{2\pi} D^{-1}_{0,j} W_{0,j}(0,\varphi)f(x_j^\varphi)d\varphi \right]
= \frac{4}{\varepsilon} \left[f(x_0) - \frac{\sum_{j=0}^N \int_0^{2\pi} {W}_{0,j}(0,\varphi)f(x_j^\varphi)d\varphi}{\sum_{j=0}^N \int_0^{2\pi} {W}_{0,j}(0,\varphi)d\varphi}\right] \nonumber \\
&\approx \frac{4}{\varepsilon} \left[f(x_0) - \frac{\sum_{j=0}^N \sum_{k=0}^{K-1} {W}_{0,j}(0,{2\pi k}/{K})f(x_j^{2\pi k/K})}{\sum_{j=0}^N  \sum_{k=0}^{K-1} {W}_{0,j}(0,{2\pi k}/{K})}\right],	\label{eq:toy exm laplacian approx}
\end{align}
where ${W}$ is given by~\eqref{eq:steerable affinity matrix}, $D$ is given in~\eqref{eq:steerable graph Laplacian}, and we replaced integration with summation using a sufficiently large integer $K$. 
Note that $f(x_0) = 1$, and by~\eqref{eq:extrinsic rotation} we have that
\begin{equation}
f(x_j^{2\pi k/K}) = 
\operatorname{Re}{\left\{ x_{j,(1,1)} e^{\imath 2\pi k/K} \right\}}
+ x_{j,(0,1)},
\end{equation}
where $x_{j,(m,\ell)}$ is the $(m,\ell)$'th coordinate of the $j$'th point.
Figure~\ref{fig:steerable laplacian toy exm} depicts the errors of estimating $\Delta_\man{f} (x_0)$ using the steerable graph Laplacian (equation~\eqref{eq:toy exm laplacian approx}) versus the standard graph Laplacian (equations~\eqref{eq:standard GL} and~\eqref{eq:standard GL convergence}), for $K=256$ and different values of $\varepsilon$. The slope of the log-error in the variance-dominated region (obtained by a linear curve fit and averaged over $1,000$ experiments) is $-0.97$ for the standard graph Laplacian, and $-0.74$ for the steerable graph Laplacian, agreeing with equation~\eqref{eq:standard GL convergence} and Theorem~\ref{thm:steerable graph Laplacian convergence}, which predict slopes of $-1$ and  $-0.75$, respectively, when substituting $d=2$. Moreover, the errors due to the steerable and standard graph Laplacians coincide in the region where the errors are dominated by the bias error term, also in agreement with Theorem~\ref{thm:steerable graph Laplacian convergence}. 

\begin{figure}
  \centering  	
    \includegraphics[width=0.5\textwidth]{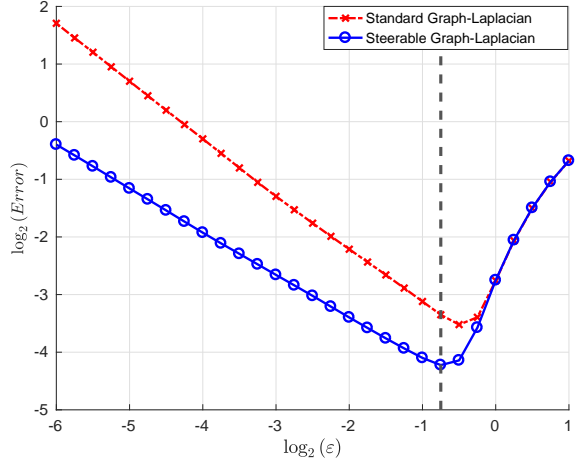}
	\caption[LaplacianError] 
	{Errors in approximating $\Delta_\man f(x_0)$ by the standard graph Laplacian (equations~\eqref{eq:standard GL} and~\eqref{eq:standard GL convergence}) and by the steerable graph Laplacian (equation~\eqref{eq:toy exm laplacian approx}) as a function of $\varepsilon$ (in $\log$ scale). The region to the left of the dashed vertical line is dominated by the variance error term, whereas the region to the right of the dashed vertical line is dominated by the bias error term.
	}  \label{fig:steerable laplacian toy exm}
\end{figure}

Additionally, we computed the eigenvalues of $\tilde{L}$ as described in Section~\ref{subsection:Normalized steerable graph Laplacians and the Laplace-Beltrami operator}, and compared them with the eigenvalues of the standard (normalized) graph Laplacian. The results can be seen in Figure~\ref{fig:sphere eigenvalues}. It is evident that the eigenvalues in both cases agree with the well-known multiplicities of the spherical harmonics (the eigenfunctions of the Laplacian on the unit sphere). However, is clear that the eigenvalues of $\tilde{L}$ admit smaller fluctuations compared to the eigenvalues of the standard (normalized) graph Laplacian, owing to the improved convergence rate of $\tilde{L}$ to the Laplace-Beltrami operator. 

Lastly, in Figure~\ref{fig:sphere eigenfunctions} we illustrate a single eigenfunction of the steerable graph Laplacian (computed via Theorem~\ref{thm:eigenfunctions and eigenvalues of L_tilde}), corresponding to the indices $m=3,k=4$, where we used $N=512$ points and $\varepsilon=1$.  The figure highlights the difference between the vector $\tilde{v}_{m,k}$ in~\eqref{eq:eigenfunctions and eigenvalues of L_tilde} and the eigenfunction $\tilde{\Phi}_{m,k}$ itself. While the former is analogous to an eigenvector of the standard graph Laplacian (in the sense that it is defined only over the original data points), the latter extends its domain of definition by additionally assigning values to all rotations of the original data points (images). Note that the behavior of the eigenfunctions $\tilde{\Phi}_{m,k}$ over the orbits of the images' rotations is given by Fourier modes, which is in agreement with the explicit formula for the spherical harmonics (given by Fourier modes in the azimuthal direction).

\begin{figure}
  \centering
  	\subfloat  	
  	{
    \includegraphics[width=0.5\textwidth]{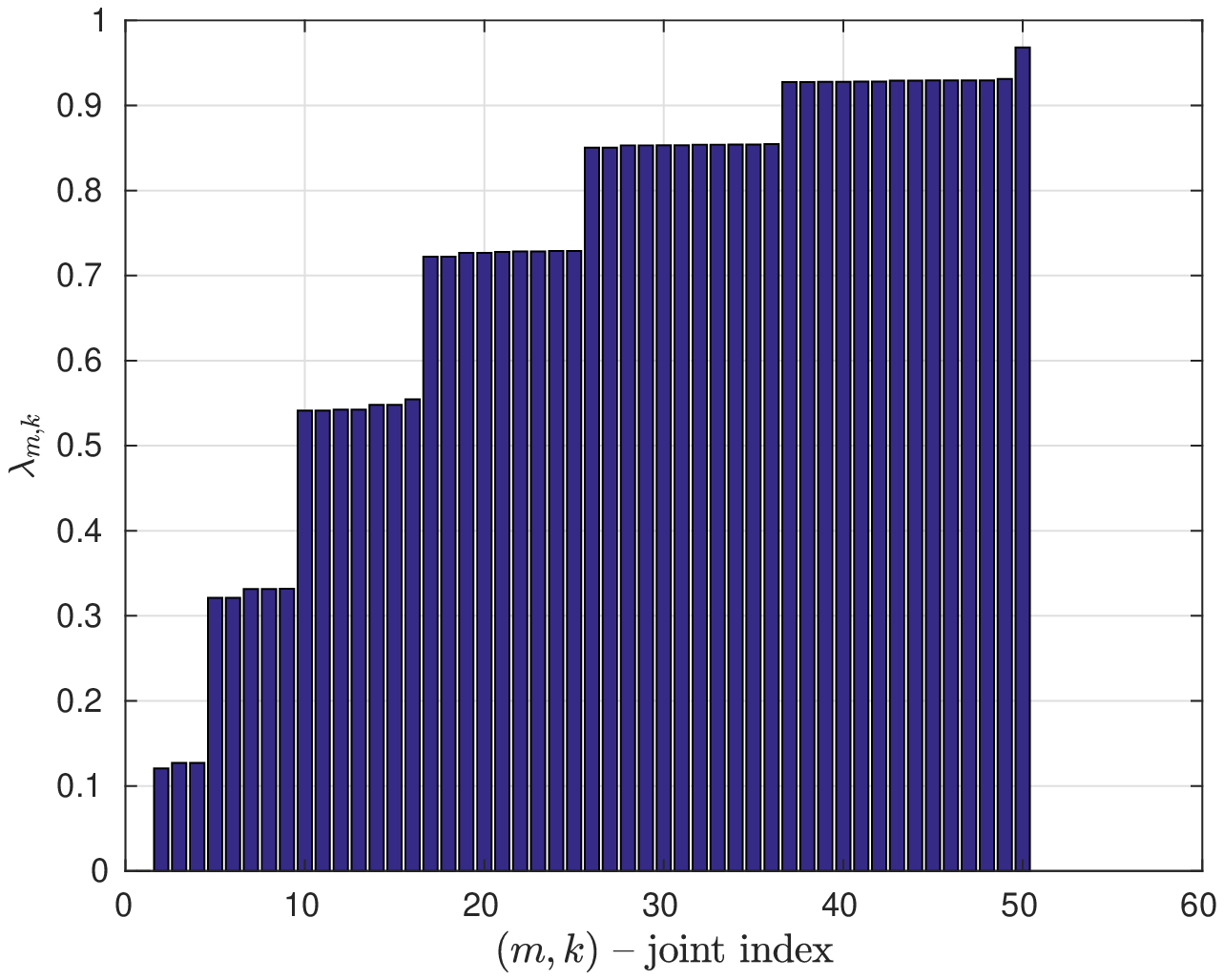}
    }
    \subfloat    
    { 
    \includegraphics[width=0.5\textwidth]{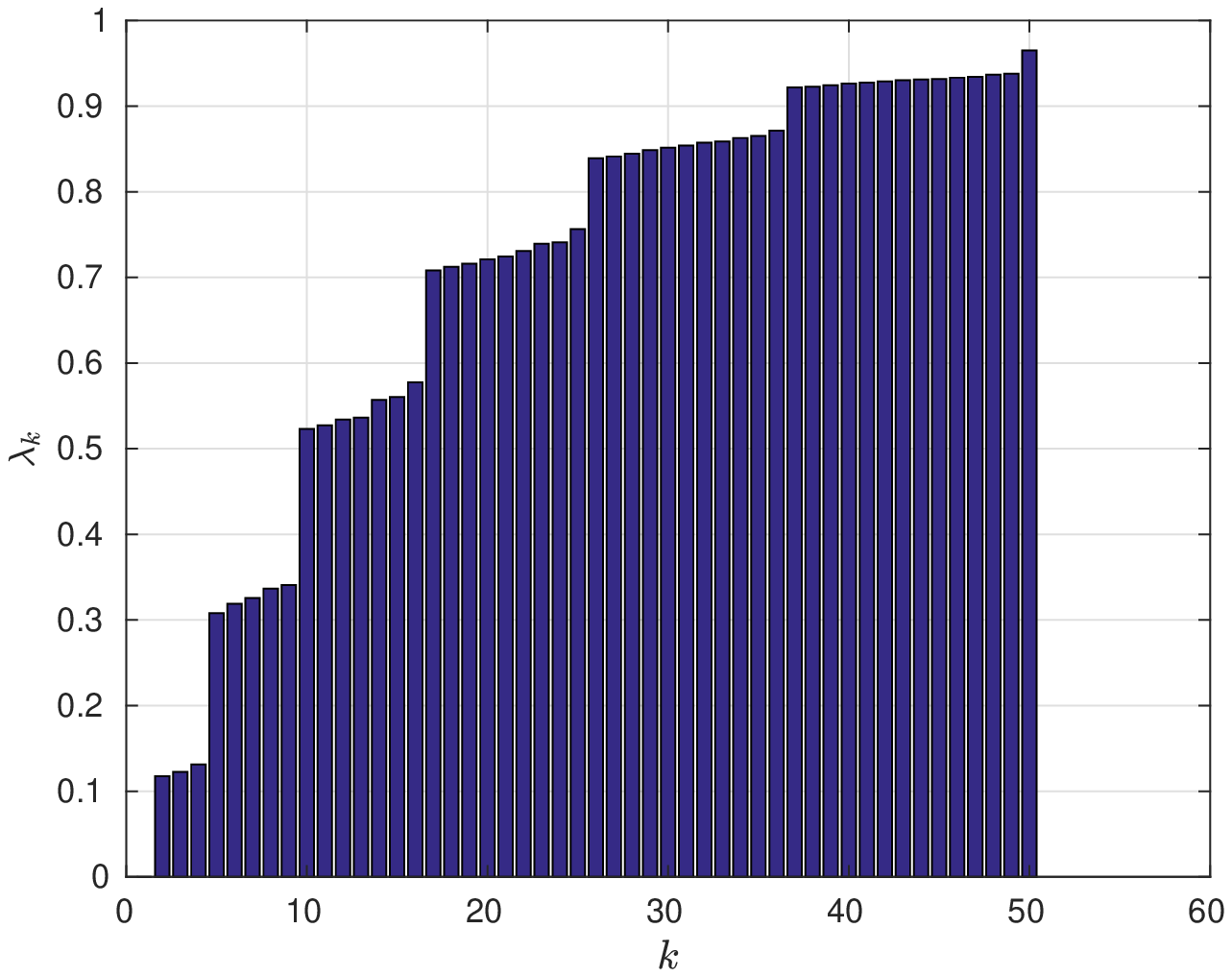}
    }
	\caption[sphere eigenvalues]
	{Eigenvalues of the steerable (left) and standard (right) normalized graph Laplacians, for $\varepsilon=1/4$ and $2,000$ data points sampled uniformly from the sphere. For the steerable graph Laplacian, the eigenvalues were sorted in ascending order and enumerated over $(m,k)$ using a single joint index.} \label{fig:sphere eigenvalues}
\end{figure}

\begin{figure}
  \centering
  	\subfloat[$\tilde{v}_{m,k}$]  	
  	{
    \includegraphics[width=0.5\textwidth]{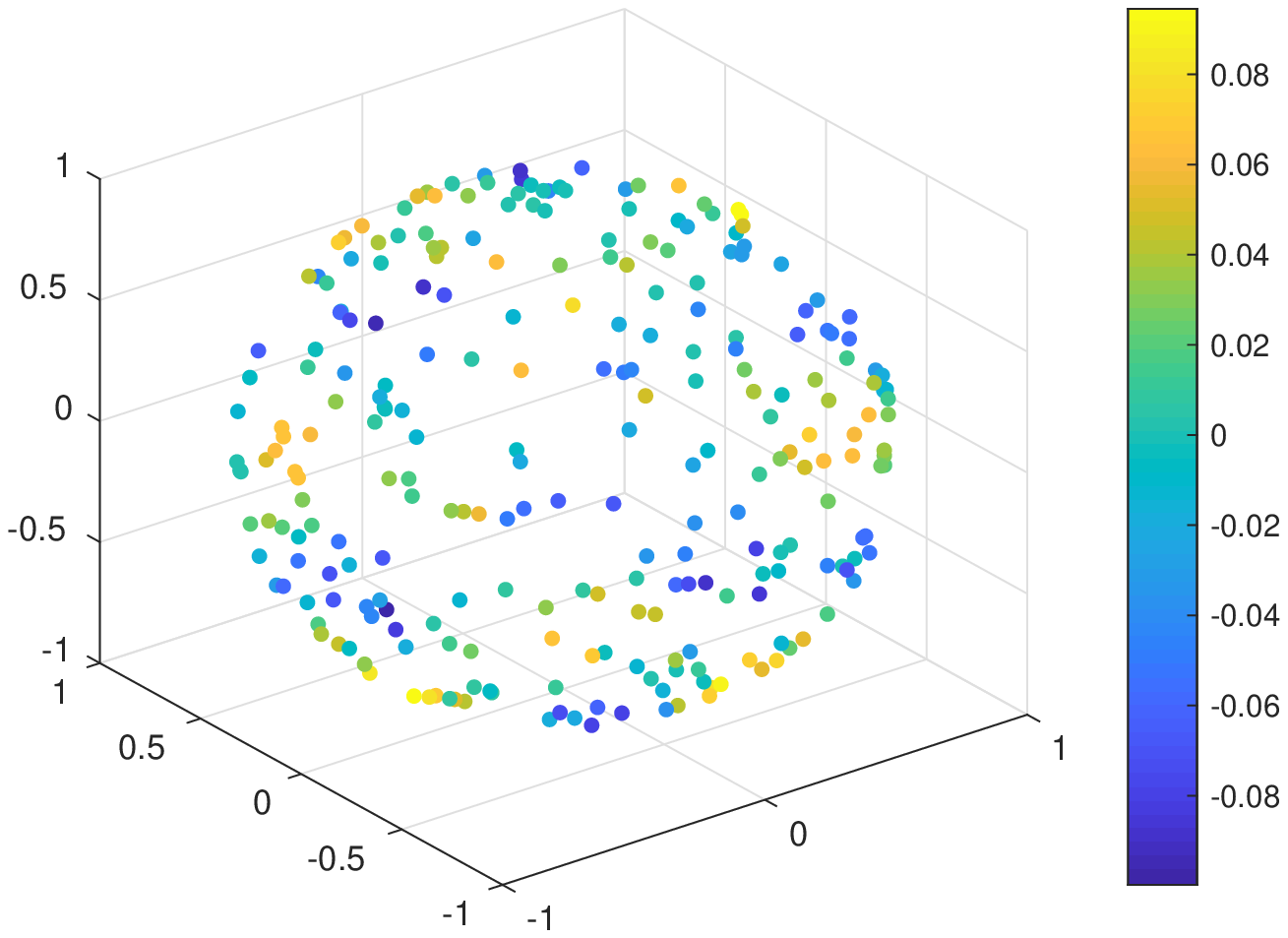}
    }
    \subfloat[$\tilde{\Phi}_{m,k} = \tilde{v}_{m,k}\cdot e^{\imath m \vartheta}$]    
    { 
    \includegraphics[width=0.5\textwidth]{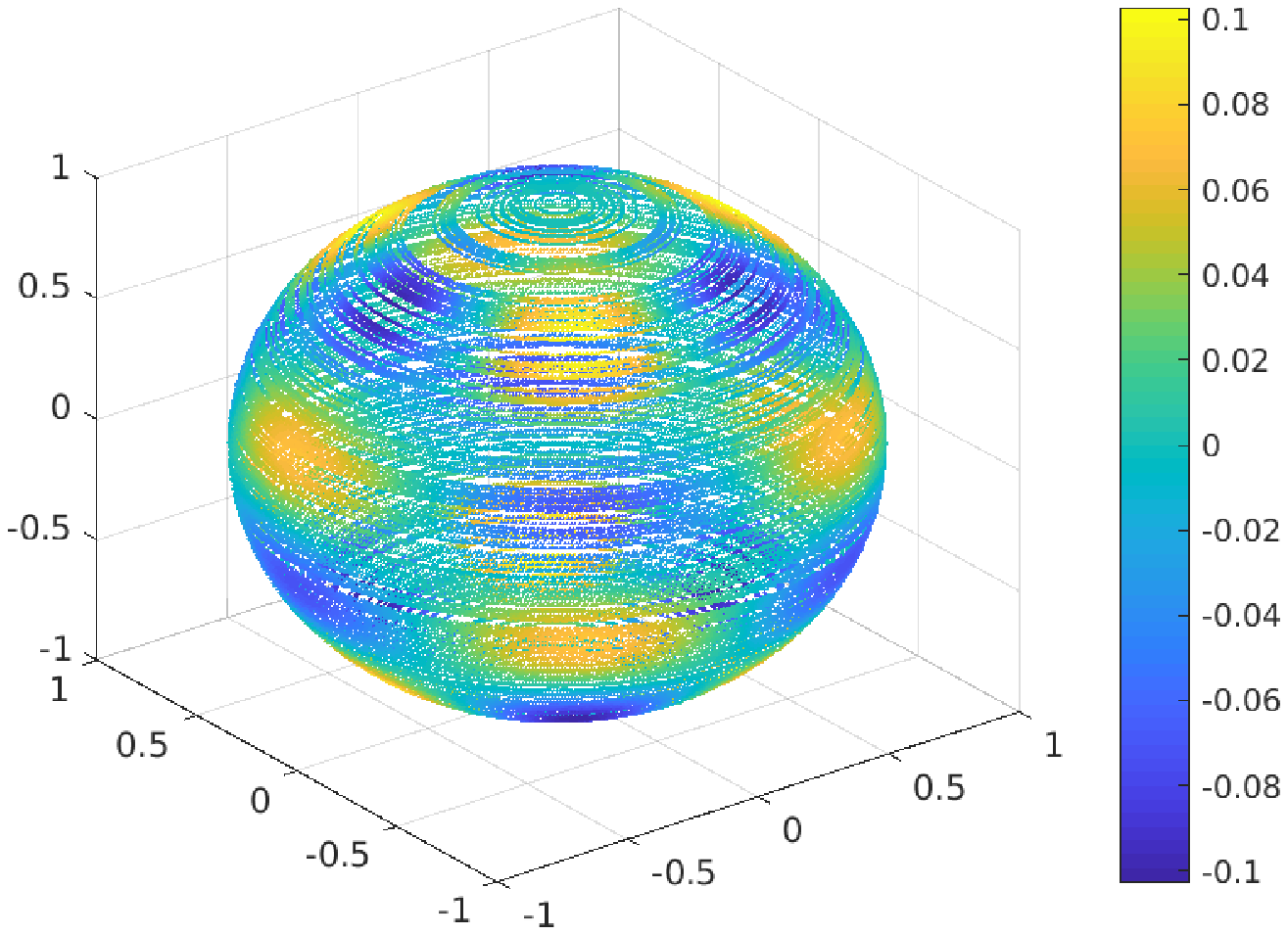}
    }
	\caption[sphere eigenvalues]
	{An eigenfunction (real part only) of the steerable graph Laplacian $\tilde{L}$, corresponding to $m=3$ and $k=4$ (see Theorem~\ref{thm:eigenfunctions and eigenvalues of L_tilde}), using $N=512$ points sampled uniformly from the sphere, and $\varepsilon=1$. On the left we show the values of the eigenfunction only for the original $512$ data points (given by the vector $\tilde{v}_{m,k}$ of~\eqref{eq:eigenfunctions and eigenvalues of L_tilde}), and on the right we show all values of the eigenfunction $\tilde{\Phi}_{m,k}$, including the angular part (given explicitly by Fourier modes $e^{\imath m \vartheta}$) assigning values to all rotations of the original data points (visible as orbits in the shape of horizontal rings covering the sphere).} \label{fig:sphere eigenfunctions}
\end{figure}

\subsection{Filtering image datasets by the steerable manifold harmonics} \label{subsection:Expanding image datasets by the steerable manifold harmonics}
Next, we propose to expand our dataset of images and all of their rotations by a carefully-chosen subset of the steerable manifold harmonics (the eigenfunctions of the steerable graph Laplacian $\tilde{L}$, see Theorem~\ref{thm:eigenfunctions and eigenvalues of L_tilde}).

Consider the function $F_{m,\ell}\in\mathcal{H}$ given by
\begin{equation}
F_{m,\ell}(i,\varphi)=x_{i,(m,\ell)}^\varphi,
\end{equation}
where $x_{i,(m,\ell)}^\varphi$ stands for the $(m,\ell)$'th coordinate of the $i$'th data-point rotated by $\varphi$ (via~\eqref{eq:extrinsic rotation}). In essence, the function $F_{m,\ell}$ describes the $(m,\ell)$'th coordinate of all points in the dataset and all of their rotations. As $F_{m,\ell} \in \mathcal{H}$, it can be expanded in the basis $\tilde{\Phi}$, and we can write
\begin{equation}
F_{m,\ell} = \sum_{m^{'}=-\infty}^\infty \sum_{k=1}^N  A_{({m^{'},k}),({m,\ell})} \tilde{\Phi}_{m^{'},k} \label{eq:X_ml full expansion}
\end{equation}
for all $(m,\ell)$ pairs, where $A_{({m^{'},k}),({m,\ell})}$ are some associated expansion coefficients. 
We propose to ``filter'' the functions $F_{m,\ell}$ for each pair $(m,\ell)$ by considering a truncated expansion of the form of~\eqref{eq:X_ml full expansion}, with expansion coefficients obtained by solving
\begin{equation}
\min_A \left\Vert F_{m,\ell} - \sum_{m^{'}=-M^{'}}^{M^{'}}\sum_{k=1}^{k_{m^{'}}} A_{({m^{'},k}),({m,\ell})}\tilde{\Phi}_{m^{'},k} \right\Vert_{\mathcal{H}}^2, \label{eq:manifold regression}
\end{equation}
where $A$ is a matrix of expansion coefficients with rows indexed by $({m^{'},k})$ and columns indexed by $({m,\ell})$.

As for the numbers of chosen basis functions $\left\{k_{m^{'}}\right\}$ and $M^{'}$, we propose the following natural truncation rule based on a cut-off frequency $\lambda_c\in \mathbb{R}_+$:
\begin{equation}
k_{m^{'}} = \max\left\{k:\tilde{\lambda}_{m^{'},k} < \lambda_c\right\}, \label{eq:truncation rule}
\end{equation}
where $\left\{ \tilde{\lambda}_{m,k} \right\}$ are the eigenvalues (sorted in non-decreasing order w.r.t $k$) of the normalized steerable graph Laplacian $\tilde{L}$. Then, $M^{'}$ is simply the largest $\left\vert m^{'} \right\vert$ s.t. $k_{m^{'}}>0$. Fundamentally, this truncation rule can be viewed as the analogue of the classical truncation of Fourier expansions.
Figure~\ref{fig:Lambda indices configuration} illustrates a typical configuration of index-pairs $(m^{'},k)$ resulting from the truncation rule of~\eqref{eq:truncation rule}.

We motivate the above-mentioned approach (series expansion and truncation rule) as follows. It is well known that for smooth and compact manifolds the Laplace-Beltrami operator $\Delta_\man$ admits a sequence of eigenvalues $\left\lbrace\mu_k\right\rbrace_{k=0}^\infty$ and eigenfunctions $\left\lbrace\phi_k\right\rbrace_{k=0}^\infty$, which are orthogonal and complete in the class of square-integrable functions on $\mathcal{M}$, denoted by $\mathcal{L}^2(\mathcal{M})$. 
Therefore, every function $f\in\mathcal{L}^2(\man)$ can be expanded as
\begin{equation}
f(x) = \sum_{k=1}^\infty a_k \phi_k(x), \quad\quad a_k = \int_\man f(x) {\phi_k^*}(x) dx.
\end{equation}
In this context, it is possible to consider the coordinates of $\man$ in the ambient space, i.e. $x_{m,\ell}$ for every $x\in\man$, as smooth functions over $\man$, which can be approximated by truncating the above-mentioned expansion. 
In particular, we provide the following proposition, which bounds the error in approximating the coordinates of $\man$ using a truncated series of manifold harmonics.

\begin{prop} \label{prop:man approx bias error}
Let $\left\{\phi_k\right\}_{k=1}^\infty$ and $\left\{ \mu_k \right\}_{k=1}^{\infty}$ be the eigenfunctions and eigenvalues (sorted in non-decreasing order), respectively, of the negative-defined Laplace-Beltrami operator $\Delta_\man$. Then, we have that
\begin{equation}
\frac{1}{\operatorname{Vol}{\left\lbrace\man\right\rbrace}} \int_{\man} \left\Vert \sum_{k=1}^{n} c_k \phi_k(x) - x \right\Vert^2_2 dx \leq \frac{d}{\mu_{n+1}}, \quad\quad c_k = \int_{\man} x \phi_k^* (x) dx,	\label{eq:bias error term}
\end{equation}
where $\operatorname{Vol}{\left\lbrace\man\right\rbrace}$ is the volume of $\man$, $d$ is the intrinsic dimension of $\man$, and $c_k\in\mathbb{C}^\mathcal{D}$ are the expansion coefficients of $x\in\CD$ (i.e. of every coordinate function of the embedded manifold) w.r.t $\phi_k$.
\end{prop}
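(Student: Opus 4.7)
The plan is to prove the bound coordinate by coordinate, combining the standard Bessel inequality for the truncated $L^2$ expansion with a key geometric identity that the pointwise sum of squared gradients of the ambient coordinate functions equals the intrinsic dimension $d$.

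First I would work one coordinate at a time. Writing $x_{m,\ell}(y)$ for the coordinate function $y \mapsto y_{m,\ell}$ on $\man$, and noting that $\{\phi_k\}$ is orthonormal in $\mathcal{L}^2(\man)$, Bessel/Parseval gives
\begin{equation}
\int_{\man}\Bigl\vert x_{m,\ell}(y) - \sum_{k=1}^{n}(c_k)_{m,\ell}\,\phi_k(y)\Bigr\vert^{2}dy = \sum_{k=n+1}^{\infty}\bigl\vert (c_k)_{m,\ell}\bigr\vert^{2},
\end{equation}
where $(c_k)_{m,\ell}=\int_{\man} x_{m,\ell}(y)\phi_k^{*}(y)\,dy$. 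Summing over $(m,\ell)$ turns the left side into the desired integrated squared error and the right side into $\sum_{k>n}\Vert c_k\Vert^{2}_{2}$.

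Next I would upgrade Bessel to a Poincaré-type inequality using the eigenvalues. Since $\man$ is closed, integration by parts (Green's identity) yields, for any smooth $f=\sum_{k}a_{k}\phi_{k}$,
\begin{equation}
\int_{\man}\vert\nabla_{\man}f\vert^{2}\,dy = \int_{\man} f\cdot(-\Delta_{\man}f)\,dy = \sum_{k}\mu_{k}\vert a_{k}\vert^{2}.
\end{equation}
Because the $\mu_{k}$ are non-decreasing, $\mu_{k}\ge\mu_{n+1}$ for all $k\ge n+1$, so
\begin{equation}
\sum_{k=n+1}^{\infty}\vert a_{k}\vert^{2}\le\frac{1}{\mu_{n+1}}\int_{\man}\vert\nabla_{\man}f\vert^{2}\,dy.
\end{equation}
Applying this to each coordinate function (treating real and imaginary parts of $x_{m,\ell}$ separately so the identity above applies to real-valued $f$) and summing over $(m,\ell)$, the total error is at most $\frac{1}{\mu_{n+1}}\int_{\man}\sum_{m,\ell}\vert\nabla_{\man}x_{m,\ell}\vert^{2}\,dy$.

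The main point is now the geometric identity $\sum_{m,\ell}\vert\nabla_{\man}x_{m,\ell}(y)\vert^{2}=d$ pointwise on $\man$. To see this, fix $y\in\man$ and pick an orthonormal basis $\{v_{1},\dots,v_{d}\}$ of $T_{y}\man$ under the induced metric from $\mathbb{C}^{\mathcal{D}}\cong\mathbb{R}^{2\mathcal{D}}$. Then $\vert\nabla_{\man}x_{m,\ell}\vert^{2}=\sum_{j=1}^{d}\vert v_{j}(x_{m,\ell})\vert^{2}=\sum_{j}\vert(v_{j})_{m,\ell}\vert^{2}$, and reversing the order of summation gives $\sum_{m,\ell}\sum_{j}\vert(v_{j})_{m,\ell}\vert^{2}=\sum_{j}\Vert v_{j}\Vert^{2}=d$. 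This is the familiar fact that, for an isometric embedding $\iota:\man\hookrightarrow\mathbb{R}^{2\mathcal{D}}$, the sum $\sum_{i}\vert\nabla_{\man}\iota_{i}\vert^{2}$ equals the trace of the orthogonal projection onto $T_{y}\man$, namely $\operatorname{tr}(P_{T_{y}\man})=d$.

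Putting the pieces together, the total error is bounded by $\frac{1}{\mu_{n+1}}\cdot d\cdot\operatorname{Vol}\{\man\}$, and dividing by $\operatorname{Vol}\{\man\}$ yields the claimed bound $d/\mu_{n+1}$. The only mildly delicate step is the identification of $\sum_{m,\ell}\vert\nabla_{\man}x_{m,\ell}\vert^{2}$ with $d$, and especially making sure one is using the ambient $\mathbb{C}^{\mathcal{D}}$-norm convention that matches the norm $\Vert\cdot\Vert_{2}$ appearing in the statement; everything else is standard elliptic spectral theory on a compact manifold without boundary.
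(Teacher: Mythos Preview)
Your proof is correct and self-contained. The paper's own ``proof'' is simply a two-line citation: it invokes equation~(3.1) of Aflalo--Kimmel (2015) and Proposition~3.1 of Osher et al.\ (2017) and declares the result immediate. Your argument effectively unpacks and reproves exactly those two ingredients: the Parseval-plus-spectral-gap bound $\sum_{k>n}\vert a_k\vert^2 \le \mu_{n+1}^{-1}\int_{\man}\vert\nabla_{\man}f\vert^2$ is the content of the Aflalo--Kimmel inequality, and the pointwise identity $\sum_{m,\ell}\vert\nabla_{\man}x_{m,\ell}\vert^2 = d$ for an isometrically embedded manifold is precisely the statement of the Osher et al.\ proposition. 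So while the route is the same in substance, you supply the full argument where the paper outsources it; the only thing you give up is brevity, and what you gain is that the reader does not need to chase two references to see why the bound holds.
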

\begin{proof}
The proof follows immediately from combining equation~($3.1$) in~\cite{aflalo2015optimality} and Proposition~$3.1$ in~\cite{osher2017low}.
\end{proof}
It is important to note that by the properties of $\Delta_\man$ we have that $\mu_n{\rightarrow} \infty$ when $n\rightarrow\infty$~\cite{rosenberg1997laplacian}, and therefore we can get an arbitrarily small approximation error for the coordinates of $\man$ using a sufficiently large number of manifold harmonics. As we have shown in Section~\eqref{subsection:Normalized steerable graph Laplacians and the Laplace-Beltrami operator} that $\tilde{L}$ approximates the Laplace-Beltrami operator $\Delta_\man$, we follow the common practice and use the eigenfunctions and eigenvalues of $\tilde{L}$, i.e. $\left\{\tilde{\Phi}_{m,k}\right\}$ and $\left\{\tilde{\lambda}_{m,k}\right\}$, as discrete proxies for $\left\{\phi_k\right\}$ and $\left\{\mu_k\right\}$ in~\eqref{eq:bias error term}.

Next, we proceed to derive a simple and efficient solution to problem~\eqref{eq:manifold regression}. By our construction of the Hilbert space $\mathcal{H}$, one can write~\eqref{eq:manifold regression} explicitly as 
\begin{equation}
\min_A \left\{ \sum_{i=1}^N \int_0^{2\pi} \left\vert x_{i,(m,\ell)}^\varphi - \sum_{m^{'}=-M^{'}}^{M^{'}}\sum_{k=1}^{k_{m^{'}}} A_{({m^{'},k}),({m,\ell})} \tilde{\Phi}_{m^{'},k}(i,\varphi) \right\vert^2 d\varphi\right\}, \label{eq:manifold regression explicit}
\end{equation}
which is interpreted as performing regression over the entire dataset of images and all of their planar rotations using the functions $\tilde{\Phi}_{m,k}$ restricted to $k\in\left\{1,\ldots,k_{m^{'}}\right\}, m\in\left\{-M^{'},\ldots,M^{'}\right\}$. 
Recall that by~\eqref{eq:extrinsic rotation}, we have that
\begin{equation}
x_{i,(m,\ell)}^\varphi = x_{i,(m,\ell)} e^{\imath m \varphi}, \label{eq:x_i steering}
\end{equation}
where $x_{i,(m,\ell)}$ stands for the $(m,\ell)$'th coordinate of the $i$'th data-point. It turns out that~\eqref{eq:manifold regression explicit} can be significantly simplified by substituting~\eqref{eq:x_i steering} into~\eqref{eq:manifold regression explicit} together with the steerable form of $\tilde{\Phi}_{m,\ell}$ (i.e.~\eqref{eq:eigenfunctions and eigenvalues of L_tilde}), while making use of the orthogonality of the Fourier modes $\left\{e^{\imath m \varphi}\right\}_{m=-\infty}^\infty$ over $[0,2\pi)$. It then immediately follows that the matrix of coefficients $A$ in the solution of~\eqref{eq:manifold regression explicit} is block-diagonal, where the blocks can be obtained by solving ordinary least-squares problems. In particular, we have that
\begin{equation}
A_{({m^{'},k}),({m,\ell})} = 
\begin{dcases}
B^{(m)}_{k,\ell}, & m=m^{'}, \\
0, &  m\neq m^{'},
\end{dcases}
\end{equation}
where $B^{(m)}$ is the $m$'th block on the diagonal of $A$, obtained by solving the least-squares system
\begin{equation}
\min_{B^{(m)}} \left\Vert
X^{(m)} - \widetilde{V}^{(m)}B^{(m)}\right\Vert^2_F, \label{eq:B^m least squares}
\end{equation}
where $\left\Vert \cdot\right\Vert_F$ stands for the Frobenius norm,
and $X^{(m)}$ and $\widetilde{V}^{(m)}$ are given by
\begin{equation}
X^{(m)} = 
\begin{pmatrix}
x_{1,(m,1)} & \ldots & x_{1,(m,\ell_{m})} \\
\vdots & \ddots & \vdots \\
x_{N,(m,1)} & \ldots & x_{N,(m,\ell_{m})}
\end{pmatrix},
\quad\quad\quad
\widetilde{V}^{(m)} = 
\begin{pmatrix}
| &  & | \\
\tilde{v}_{m,1} & \cdots & \tilde{v}_{m,k_m} \\
| &  & |
\end{pmatrix}, \label{eq:X^m and V_tilde^m def}
\end{equation}
with $\tilde{v}_{m,k}$ given by~\eqref{eq:eigenfunctions and eigenvalues of L_tilde} and~\eqref{eq:S_tilde_m matrix def}.
We mention that $k_{m^{'}}$ changes with the angular index $m^{'}$, and in particular, is typically smaller for higher angular frequencies (larger $|m|$). Therefore, the size of the blocks $B^{(m)}$ reduces with $|m|$, as illustrated by Figure~\ref{fig:A block-diagonal typical}. 
Once the coefficients matrices $\left\{B^{(m)}\right\}$ were obtained by solving~\eqref{eq:B^m least squares}, we define 
\begin{equation}
\hat{X}^{(m)} \triangleq \widetilde{V}^{(m)}B^{(m)} \label{eq:x_hat filtering}
\end{equation}
as the filtered dataset corresponding to the angular index $m$. 

\begin{figure}[!htb]
    \centering    
    \begin{minipage}[t]{0.48\textwidth}
        \centering
        \includegraphics[width=0.6\linewidth, height=0.4\textheight]{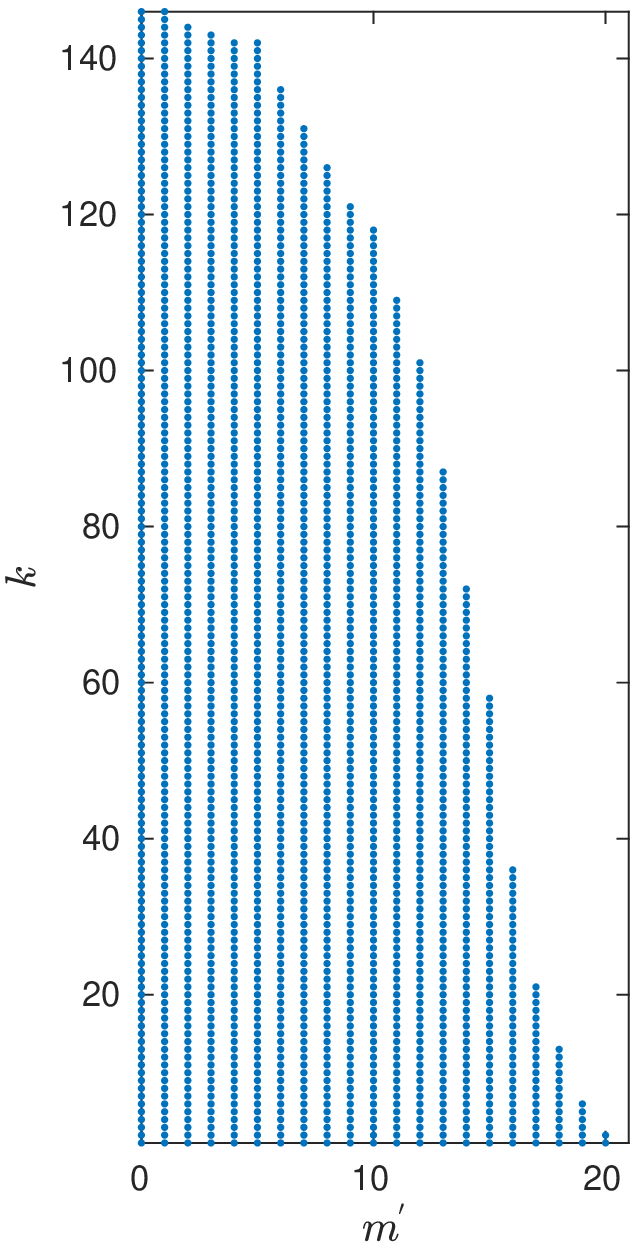}
        \caption{A typical configuration of index pairs $(m^{'},k)$ obtained by the truncation rule of~\eqref{eq:truncation rule}, for non-negative angular indices~$m$.}
        \label{fig:Lambda indices configuration}
    \end{minipage}
    \hfill
    \begin{minipage}[t]{.48\textwidth}
        \centering
        \includegraphics[width=0.6\linewidth, height=0.4\textheight]{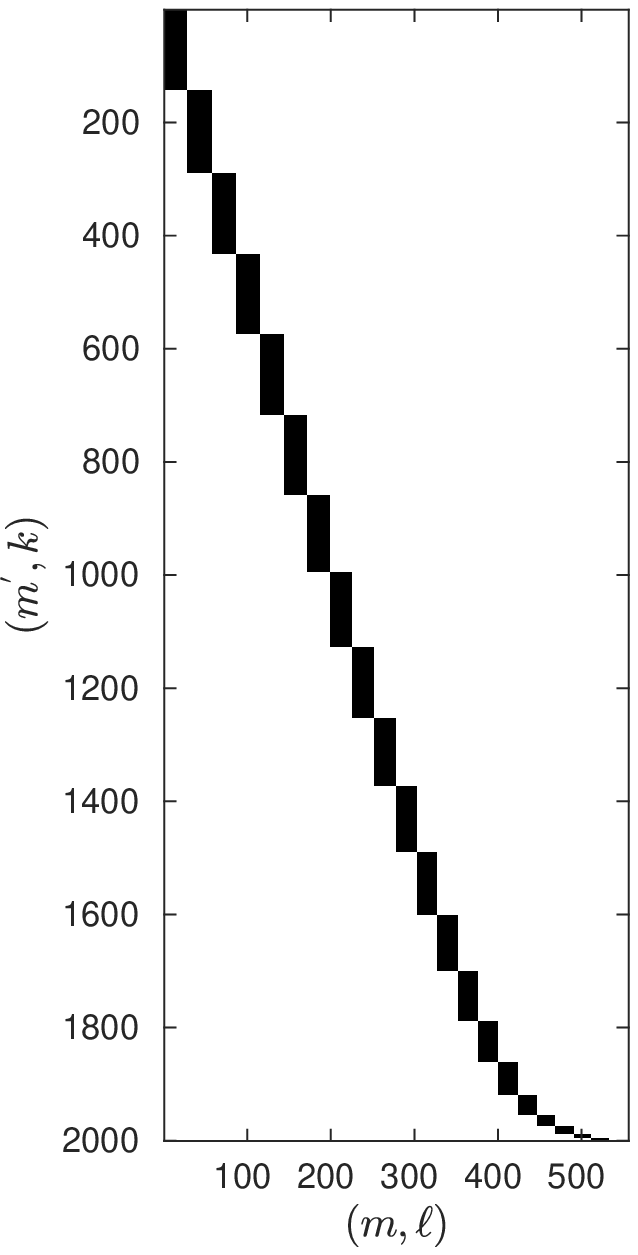}
        \caption{A typical structure of the matrix $A$ for non-negative angular indices $m$ and $m^{'}$. Black-coloured regions correspond non-zero entries, where each block corresponds to a different angular index~$m=m^{'}$.}
        \label{fig:A block-diagonal typical}
    \end{minipage}
\end{figure}

Lastly, a favorable interpretation of this procedure can be derived as follows. If we denote by $Q^{(m)}$ a matrix whose columns are orthonormal and span the columns of $\widetilde{V}^{(m)}$, then $\hat{X}^{(m)}$ can be written equivalently as
\begin{equation}
\hat{X}^{(m)} = Q^{(m)} \left[Q^{(m)}\right]^{*} X^{(m)} = C^{(m)} X^{(m)},	\label{eq:x_hat_m filtering}
\end{equation}
where $(\cdot)^*$ denotes complex-conjugate and transpose, and we defined the $N\times N$ ``filtering'' matrices 
\begin{equation}
C^{(m)} \triangleq Q^{(m)} \left[Q^{(m)}\right]^{*},
\end{equation}
which are applied to our dataset for every angular index separately. Essentially, $C^{(m)}$ acts as a ``low-pass filter'', in the sense that it retains only the contribution of steerable manifold harmonics with low frequencies (i.e. eigenvalues below the threshold $\lambda_c$). In this context,  the cut-off frequency $\lambda_c$ controls the rank of $C^{(m)}$, which is equal to $k_m$, and the degree to which $C^{(m)}$ suppresses oscillations in the data. 

\section{Algorithms summary and computational cost} \label{section:Algorithm summary and computational cost}
We outline the algorithms for evaluating the steerable manifold harmonics and employing them for filtering image datasets in Algorithms~\ref{alg:Evaluating the steerable manifold harmonics} and~\ref{alg:Rotationally-invariant dataset filtering}, respectively. We note that two optional modifications to the procedure of evaluating the steerable manifold harmonics are proposed in Section~\ref{section:Analysis under Gaussian noise} and Appendix~\ref{appendix: Non uniform sampling}, respectively. The first modification is for improving the robustness of the procedure to noise, and was added to Algorithm~\ref{alg:Evaluating the steerable manifold harmonics} in step~\ref{alg1-step:implicit debiasing} under the label ``Implicit debiasing (optional)''. The second modification, which is used for normalizing non-uniform sampling densities, was added to Algorithm~\ref{alg:Evaluating the steerable manifold harmonics} in step~\ref{alg1-step:density normalization} under the label ``Density normalization (optional)''.

\begin{algorithm}
\caption{Evaluating the steerable manifold harmonics}\label{alg:Evaluating the steerable manifold harmonics}
\begin{algorithmic}[1]
\Statex{\textbf{Required:} A dataset of $N$ points $\left\{x_1,\ldots,x_N\right\}\subset\CD$, where $x_{i,(m,\ell)}$ is the $(m,\ell)$'th coordinate of $x_i$ (see Section~\ref{subsection:Problem se-tup}).}
\State Choose a numerical-integration parameter $K$ (see~\eqref{eq:W_hat FFT approx}), and a Gaussian kernel parameter $\varepsilon$.
\State For every $1\leq i,j \leq N$, $k\in\left\{0,\ldots,K-1\right\}$, compute the affinities
\begin{equation}
{W}_{i,j}^{(k)} = \exp{\left\lbrace-{\left\Vert x_{i} - x_{j}^{(k)}  \right\Vert^2 }{/\varepsilon}\right\rbrace}, \quad\quad\quad x_{j,(m,\ell)}^{(k)} = x_{j,(m,\ell)} e^{\imath 2\pi m k/K}.
\end{equation}
\State \label{alg1-step:implicit debiasing} Implicit debiasing \textbf{(optional)}: Set ${W}_{i,i}^{(k)} = 0$ for $1\leq i \leq N$ and $k=0,\ldots,K-1$.
\State \label{alg1-step:w_hat} For every angular index $m=-M,\ldots,M$ and $1\leq i,j \leq N$, evaluate
\begin{equation}
\hat{W}_{i,j}^{(m)} = \frac{2\pi}{K} \sum_{k=0}^{K-1} {W}_{i,j}^{(k)} e^{ \imath 2\pi m k/K}, \quad\quad\quad D_{i} = \sum_{j=1}^N \hat{W}_{i,j}^{(0)}.
\end{equation}
\State \label{alg1-step:density normalization} Density normalization \textbf{(optional)}: 
\begin{enumerate} [label=(\alph*)]
\item For every angular index $m=-M,\ldots,M$ update:
\begin{equation}
\hat{W}^{(m)} \leftarrow D^{-1} \hat{W}^{(m)} D^{-1},
\end{equation} 
where $D$ is a diagonal matrix with $\left\{D_{i}\right\}_{i=1}^N$ on its diagonal.
\item For every $i=1,\ldots,N$ update:
\begin{equation}
D_{i} \leftarrow \sum_{j=1}^N \hat{W}_{i,j}^{(0)}.
\end{equation} 
\end{enumerate}
\State \label{alg1-step:eigen-decomposition} For every angular index $m=-M,\ldots,M$ form the matrix
\begin{equation}
\tilde{S}_m = I-D^{-1}\hat{W}^{(m)},
\end{equation}
and return its eigenvectors $\left\{\tilde{v}_{m,k}\right\}_{k=1}^N$ and eigenvalues $\left\{\tilde{\lambda}_{m,k}\right\}_{k=1}^N$.
\end{algorithmic}
\end{algorithm}

\begin{algorithm}
\caption{Rotationally-invariant dataset filtering}\label{alg:Rotationally-invariant dataset filtering}
\begin{algorithmic}[1]
\Statex{\textbf{Required:} 
\begin{enumerate}[label=(\alph*)]
\item A dataset of $N$ points $\left\{x_1,\ldots,x_N\right\}\subset\CD$, where $x_{i,(m,\ell)}$ is the $(m,\ell)$'th coordinate of $x_i$ (see Section~\ref{subsection:Problem se-tup}).
\item Eigenvectors $\left\{\tilde{v}_{m,1},\ldots,\tilde{v}_{m,N}\right\}_{m=-M}^M$ and eigenvalues $\left\{\tilde{\lambda}_{m,1},\ldots,\tilde{\lambda}_{m,N}\right\}_{m=-M}^M$ of $\left\{\tilde{S}_m\right\}_{m=-M}^M$ from Algorithm~\ref{alg:Evaluating the steerable manifold harmonics}.
\end{enumerate} }
\State Choose a cut-off frequency $\lambda_c$.
\State For $m=-M,\ldots,M$ do \label{alg2-step:filter procedure}
\begin{enumerate}[label=(\alph*)]
\item \label{alg2-step:truncation rule} Compute $k_m = \max\left\{k: \tilde{\lambda}_{m,k} < \lambda_c \right\}$, and form the matrices $X^{(m)}$ and $\widetilde{V}^{(m)}$ of~\eqref{eq:X^m and V_tilde^m def}.
\item \label{alg2-step:B est by ls} Estimate the coefficients matrix $B^{(m)}$ by solving the least squares system of~\eqref{eq:B^m least squares}.
\item \label{alg2-step:filter output} Compute $\hat{X}^{(m)} = \widetilde{V}^{(m)} B^{(m)}.$
\end{enumerate}
\State The filtered dataset is given by $\hat{X} = [\hat{X}^{(-M)} \cdots \hat{X}^{(M)}]$.
\end{algorithmic}
\end{algorithm}

We now turn our attention to the computational complexity of Algorithms~\ref{alg:Evaluating the steerable manifold harmonics} and~\ref{alg:Rotationally-invariant dataset filtering}.
We begin with Algorithm~\ref{alg:Evaluating the steerable manifold harmonics}. The first step is to compute all affinity measures $W^k_{i,j}$, which can be evaluated efficiently by the FFT if we notice that
\begin{align}
&\left\Vert x_{i} - x_{j}^{(k)}  \right\Vert^2 = \left\Vert x_i \right\Vert^2_2 + \left\Vert x_j \right\Vert^2_2 -2 \operatorname{Re}{ \left\{ \sum_{m=-M}^M c_{i,j}^{(m)} e^{-\imath 2\pi m k/K}\right\}}, \label{eq:W_ij^k computation efficient}
\end{align}
where we defined 
\begin{equation}
c_{i,j}^{(m)} =  \sum_{\ell=1}^{\ell_m} x_{i,(m,\ell)} x_{j,(m,\ell)}^*.
\end{equation}
Note that computing $c_{i,j}^{(m)}$ for all $i,j,m$ takes $O(N^2 \mathcal{D})$ operations. Therefore, if we denote 
\begin{equation}
\bar{M}=\max{\left\{M,K\right\}},
\end{equation}
the computational complexity of this step is $O(N^2D+N^2\bar{M}\log{\bar{M}})$ when using the FFT to compute~\eqref{eq:W_ij^k computation efficient}. In a similar fashion, computing $\hat{W}^m_{i,j}$ (of step~\ref{alg1-step:w_hat}) by the FFT takes $O(N^2\bar{M}\log{\bar{M}})$ operations. Lastly, forming the matrices $\left\{\tilde{S}_m\right\}_{m=-M}^M$ requires $O(MN^2)$ operations, and evaluating its eigenvectors and eigenvalues takes $O(MN^3)$ operations. Overall, the computational complexity of Algorithm~\ref{alg:Evaluating the steerable manifold harmonics} is therefore
\begin{equation}
O\left(MN^3+N^2(\mathcal{D}+\bar{M}\log{\bar{M}})\right).
\end{equation}
In practice, it is often the case that only a small fraction of pairs of indices $(i,j)$ contributes significantly to $W_{i,j}^k$, since only images which are similar up to a planar rotation admit a non-negligible affinity (assuming that $\varepsilon$ is sufficiently small). Hence, it is often possible to zero-out the small values of $W_{i,j}^k$, allowing for cheaper sparse-matrix computations. Additionally, computing the eigen-decomposition in step~\ref{alg1-step:eigen-decomposition} for large datasets (large $N$) may be accomplished more efficiently using randomized methods~\cite{halko2011finding,aizenbud2016matrix}.

As of Algorithm~\ref{alg:Rotationally-invariant dataset filtering}, in part~\ref{alg2-step:B est by ls} of step~\ref{alg2-step:filter procedure} we need to minimize $\left\Vert
X^{(m)} - \widetilde{V}^{(m)}B^{(m)}\right\Vert^2_F$ over ${B^{(m)}}$, where $X^{(m)}$ is of dimension $N\times \ell_m$, $\widetilde{V}^{(m)}$ is $N\times k_m$, and $B^{(m)}$ is $k_m\times \ell_m$. For each angular index $m$, this amounts to solving $\ell_m$ least-squares problems (one for each column of $B^{(m)}$), each of size of size $N\times k_m$. Assuming that $N\geq k_m$ and using the QR factorization to solve least-squares, this part requires $O(k_m^2 N + N k_m \ell_m + k_m^2 \ell_m)$ operations for every angular index $m$, as we need $O(k_m^2 N)$ operations to compute the QR decomposition of $V^{(m)}$ (which needs to be computed only once), $O(k_m \ell_m N)$ operations to apply $Q$ of the QR to $X^{(m)}$, and $O(k_m^2 \ell_m)$ operations to solve the resulting $\ell_m$ triangular systems. Then, since part~\ref{alg2-step:filter output} of step~\ref{alg2-step:filter procedure} takes $O(N k_m \ell_m)$ operations for each $m$, it follows that Algorithm~\ref{alg:Rotationally-invariant dataset filtering} requires
\begin{equation}
O\left(\sum_{m=-M}^M k_m^2 N + N k_m \ell_m + k_m^2 \ell_m\right) = O\left(MN\bar{k}^2 + \mathcal{D}\bar{k}(N+\bar{k})\right)
\end{equation}
operations, where $\bar{k}=\max_m{\left\{k_m\right\}}$. As typically $\bar{k}<<N$, the computational cost of Algorithm~\ref{alg:Rotationally-invariant dataset filtering} is negligible compared to that of Algorithm~\ref{alg:Evaluating the steerable manifold harmonics}.

\section{Analysis under Gaussian noise} \label{section:Analysis under Gaussian noise}
Next, we analyze our method under white Gaussian noise, and argue that in a certain sense the steerable graph Laplacian  is robust to noise (after zeroing-out the diagonal of the steerable affinity operator ${W}$). Moreover, we argue that the filtering procedure (described in Section~\ref{subsection:Expanding image datasets by the steerable manifold harmonics}) allows us to reduce the amount of noise in the filtered dataset proportionally to the number of images $N$. 

In this section, we consider the noisy data points
\begin{equation}
y_{i,(m,\ell)} = x_{i,(m,\ell)} + \eta_{i,(m,\ell)}, \label{eq:noisy data}
\end{equation}
where $x_{i,(m,\ell)}$ is the $(m,\ell)$'th coordinate of the $i$'th clean data point, and $\left\{ \eta_{i,(m,\ell)} \right\}$ are independent and normally distributed complex-valued noise variables with mean zero and variance $\sigma^2$ .

\subsection{Noise robustness of the steerable graph Laplacian} \label{subsection:Noise robustness of the steerable graph Laplacian}
We start by considering the effect of noise on the construction of $\tilde{L}$ (of~\eqref{eq:normalized steerable graph laplacian}). Clearly, the noise changes the pairwise distances computed in $W_{i,j}(\vartheta,\varphi)$, where we note that from Theorem~\ref{thm:eigenfunctions and eigenvalues of L_tilde} and~\eqref{eq:w_hat_ij mat fourier} it is sufficient to consider the effect of noise only on $D^{-1}_{i,i} W_{i,j}(0,\alpha)$, for all $\alpha\in [0,2\pi)$ and $i,j=1,\ldots,N$. To this end, consider the set of points $\mathcal{Y}_{i}^\alpha=\left\{y_1^\alpha,\cdots,y_{i-1}^\alpha,y_i,y_{i+1}^\alpha,\cdots,y_N^\alpha\right\}\subset\CD$, where all points except the $i$'th were replaced with their rotations by an angle $\alpha$ (via~\eqref{eq:extrinsic rotation}). We have that
\begin{equation}
y_{j}^\alpha = x_{j}^\alpha + \eta_{j}^\alpha, \quad\quad y_{j,(m,\ell)}^\alpha = x_{j,(m,\ell)} e^{\imath m\alpha}+ \eta_{j,(m,\ell)} e^{\imath m\alpha},
\end{equation}
for $j\neq i, \; j=1,\ldots,N$, and it is evident that the set of noise points $\left\{\eta_1^\alpha,\cdots,\eta_{i-1}^\alpha,\eta_i,\eta_{i+1}^\alpha,\cdots,\eta_N^\alpha\right\}$ are still i.i.d Gaussian.
Then, Theorem~$2.1$ (and specifically equation $(1)$) in~\cite{el2010information}, when applied to the set $\mathcal{Y}_{i}^\alpha$, asserts that if we denote $\gamma=\mathcal{D}\sigma^2$ and vary $\mathcal{D}$ and $\sigma^2$ such that $\gamma$ remains constant, then
\begin{equation}
W_{i,j}(0,\alpha) = \exp{\left\{-\left\Vert y_i - y_j^\alpha \right\Vert^2/\varepsilon\right\}} \underset{\mathcal{D}\rightarrow \infty}{\longrightarrow} \exp{\left\{-\left(\left\Vert x_i - x_j^\alpha \right\Vert^2 + 2\gamma\right)/\varepsilon\right\}}
\end{equation}
in probability, for all $j\neq i$. Essentially, this result is due to the concentration of measure of high-dimensional Gaussian random vectors, and in particular the fact that $\left\{ \eta_i^\alpha \right\}$ are uncorrelated and are concentrated around the surface of a sphere in $\CD$. Therefore, in the regime of high dimensionality and small noise-variance, the effect of the noise on the pairwise distances (between different data-points and rotations) is only an additive constant bias term. Note that even though the noise variance tends to zero, the overall noise magnitude $\gamma=\mathcal{D}\sigma^2$ is kept constant and may be large, corresponding to a low signal-to-noise ratio (SNR). We further mention that this constant-bias effect is not restricted to Gaussian white noise, as it occurs also when the noise admits a general covariance matrix $\Sigma$, and even when the noise takes other certain non-Gaussian distributions (see~\cite{el2010information} for specific details and conditions). 

Next, in order to correct for the bias in the distances, we follow~\cite{el2016graph} and zero-out the diagonal of $W$, that is, we update
\begin{equation}
W_{i,j}(0,\alpha) \leftarrow 
\begin{dcases}
W_{i,j}(0,\alpha), & i\neq j, \\
0, & i=j.
\end{dcases}
\end{equation}
Then, we expect $D^{-1}$ to correct (implicitly) for the bias in $W$, since
\begin{align}
D_{i,i} = &\sum_{j\neq i, j=1}^N \int_0^{2\pi} W_{i,j}(0,\alpha) d\alpha = \sum_{j\neq i, j=1}^N \int_0^{2\pi} \exp{\left\{-\left\Vert y_i - y_j^\alpha \right\Vert^2/\varepsilon\right\}} d\alpha \nonumber \\ 
\underset{\mathcal{D}\rightarrow \infty}{\longrightarrow} &e^{-2\gamma/\varepsilon} \sum_{j\neq i, j=1}^N \int_0^{2\pi} \exp{\left\{-\left\Vert x_i - x_j^\alpha \right\Vert^2/\varepsilon\right\}} d\alpha,
\end{align}
in probability, and thus 
\begin{equation}
D^{-1}_{i,i} W_{i,j}(0,\alpha) \underset{\mathcal{D}\rightarrow \infty}{\longrightarrow} \frac{\exp{\left\{-\left\Vert x_i - x_j^\alpha \right\Vert^2/\varepsilon\right\}}}{\sum_{j\neq i, j=1}^N \int_0^{2\pi} \exp{\left\{-\left\Vert x_i - x_j^{\alpha^{'}}\right\Vert^2/\varepsilon\right\}} d\alpha^{'}}
\end{equation}
in probability, which is equivalent to its clean counterpart for $i\neq j$ (after zeroing-out the diagonal).
Lastly, we argue that zeroing-out the diagonal of $W$ does not change the point-wise convergence rate of the clean steerable graph Laplacian (as reported by Theorem~\ref{thm:steerable graph Laplacian convergence}), as it results in an error which is negligible compared to the leading error terms (see the end of Section~\ref{subsection:the variance term} in the proof of Theorem~\ref{thm:steerable graph Laplacian convergence}, and~\cite{singer2006graph} for an analogous argument in the case of the standard graph Laplacian). 

In Figure 6, we show the error in estimating $\Delta_\mathcal{M}f(x_{0})$ in a noisy high-dimensional counterpart of the numerical example of Section~\ref{subsection:Numerical example: Steerable graph Laplacian convergence rate} (using the same setting of $N=2,000$ and the optimal choice $\varepsilon=2^{-0.75}$). To generate Figure~\ref{fig:debiased steerable laplacian toy exm}, we embedded the unit sphere in increasing dimensions $\mathcal{D}$ (using a random orthogonal transformation) and added white Gaussian noise with variance $\sigma^2$ to each dimension, such that $\mathcal{D}\sigma^2 = \gamma$ is kept fixed. We then compared the error in estimating $\Delta_\mathcal{M}f(x_{0})$ to the error obtained in the clean setting. Note that for the unit sphere the signal-to-noise ratio (SNR) is equal to $1/\gamma=1/(\mathcal{D}\sigma^2)$. It is evident that as predicted by our analysis, the debiased steerable graph Laplacian converges to the clean steerable graph Laplacian in the regime of high dimensionality and small noise variance. Particularly, in the case of $SNR=10$ ($\gamma=0.1$), already for $\mathcal{D}=100$ the error resulting only from the noise becomes comparable to the approximation error in the clean setting. In the case of $SNR=1$ ($\gamma=1$), this happens roughly at $\mathcal{D}=1,000$.

\begin{figure}
  \centering  	
    \includegraphics[width=0.6\textwidth]{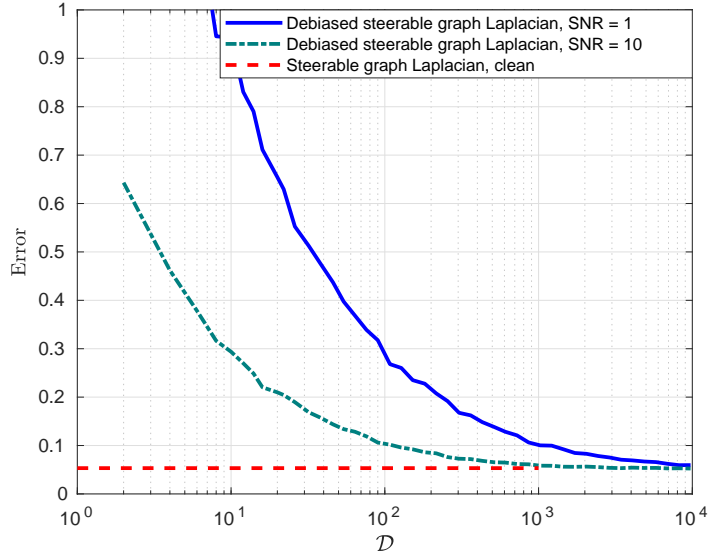}
	\caption[LaplacianError] 
	{Errors in approximating the Laplace-Beltrami operator on the unit sphere (following the numerical example in Section~\ref{subsection:Numerical example: Steerable graph Laplacian convergence rate}) using the debiased steerable graph Laplacian constructed from noisy data points, as function of the ambient dimension $\mathcal{D}$. The debiased steerable graph Laplacian is computed using $\varepsilon=2^{-0.75}$ (which was optimal for the clean case) and $N=2,000$ noisy measurements, where the noise is additive white Gaussian with variance $\sigma^2$, while $SNR = 1/(\mathcal{D}\sigma^2)$ is kept fixed. The red dashed horizontal line corresponds to the error obtained from the clean steerable graph Laplacian in this setting (see Figure~\ref{fig:steerable laplacian toy exm}).
	}  \label{fig:debiased steerable laplacian toy exm}
\end{figure}

In summary, the analysis and numerical example in this section suggest that when the dimension $\mathcal{D}$ is large, the noise variance $\sigma^2$ is small, and the overall noise magnitude $\gamma=\mathcal{D}\sigma^2$ is fixed (and may be large compared to the magnitude of the signal), the steerable graph Laplacian constructed from the noisy data after implicit debiasing (by zeroing-out the diagonal of $W$) is expected to be close to its clean counterpart.

\subsection{Performence of the filtering procedure} \label{subsection:Performence of the filtering procedure}
Next, we consider the eigenvectors and eigenvalues computed from the clean (normalized) steerable graph Laplacian, and analyze the effect of the filtering procedure (described in Section~\ref{subsection:Expanding image datasets by the steerable manifold harmonics}) on the noise in the dataset. From~\eqref{eq:x_hat_m filtering}, the de-noised data for angular frequency $m$ is given by
\begin{equation}
\widetilde{X}^{(m)} = C^{(m)} Y^{(m)} = Q^{(m)} \left[Q^{(m)}\right]^{*} Y^{(m)},
\end{equation}
where $Y^{(m)}$ is the matrix of noisy data points corresponding to angular index $m$, i.e. $Y^{(m)}_{i,\ell} = y_{i,(m,\ell)}$.
Since $Q^{(m)}$ consists of $k_m$ orthonormal column vectors independent of the noise, and recalling that $\gamma=\mathcal{D}\sigma^2$, $\mathcal{D} = \sum_{m=-M}^M \ell_m$, we have that
\begin{equation}
\frac{1}{N}\mathbb{E}\left\Vert \hat{X} - \widetilde{X} \right\Vert^2_F = \frac{1}{N}\sum_{m=-M}^M \mathbb{E}\left\Vert \hat{X}^{(m)} - \widetilde{X}^{(m)} \right\Vert^2_F = \frac{\sigma^2 \sum_{m=-M}^M k_m\ell_m}{N} \leq \max_{m}{\left\{ k_m\right\}}\frac{\gamma}{N},	\label{eq:variance error term}
\end{equation}
where $\hat{X} = [\hat{X}^{(-M)} \cdots \hat{X}^{(M)}]$ ($\hat{X}^{(m)}$ is defined in~\eqref{eq:x_hat_m filtering}) represents the clean filtered dataset, and $\widetilde{X}=[\widetilde{X}^{(-M)} \cdots \widetilde{X}^{(M)}]$ represents the noisy filtered dataset. Hence, larger datasets are expected to provide improved de-noising results, as the noise in the filtered dataset $\widetilde{X}$ reduces proportionally to $1/N$.

At this point, it is worthwhile to point out that the error bound of~\eqref{eq:variance error term} is significantly better than what we would expect from using the standard graph Laplacian and its eigenvectors (to filter the coordinates of the dataset). Fundamentally, this is due to the block diagonal structure of the coefficients matrix $A$ (see Section~\ref{subsection:Expanding image datasets by the steerable manifold harmonics}), and more specifically, the fact that we only need to use eigenfunctions with angular index $m$ to expand data coordinates with the same angular index, in contrast to using all eigenfunctions. In particular, since the limiting operators of the steerable and standard graph Laplacians are the same (the Laplace-Beltrami operator), we expect the truncation rule of~\eqref{eq:truncation rule} to provide a similar number of eigenfunctions/eigenvectors from both methods. Then, if we were to use the eigenvectors of the standard graph Laplacian to filter our dataset, we would be required to use all $\sim\sum_{m=-M}^M k_m$ eigenvectors, and by a computation equivalent to~\eqref{eq:variance error term} we would expect an error of $\sim \sum_{m=-M}^M k_m \frac{\gamma}{N}$, which is considerably larger than $\max_{m}{\left\{ k_m\right\}}\frac{\gamma}{N}$. In conclusion, as the steerable graph Laplacian is more informative than the standard graph Laplacian, in the sense that it provides us with the angular part of each eigenfunction, it allows us to be more precise when filtering our dataset by matching the angular frequencies of the eigenfunctions to those of the data, thereby reducing the computational complexity and improving the de-noising performance considerably. This feature of the steerable graph Laplacian stands on its own, and is separate from the improved convergence rate to the Laplace-Beltrami operator (Theorem~\ref{thm:steerable graph Laplacian convergence}), which improves the accuracy of the eigenfunctions and eigenvalues compared to those of the standard graph Laplacian.

Lastly, we mention that the error term~\eqref{eq:variance error term} can be viewed as a variance error term in a classical bias-variance trade-off, as we can write, conditioned on the clean dataset $X$, that
\begin{align}
\frac{1}{N}\mathbb{E}\left\Vert {X} - \widetilde{X} \right\Vert^2_F &= \frac{1}{N}\mathbb{E}\biggl\Vert {X} - \overbrace{\mathbb{E}\left[\widetilde{X} \right]}^{=\hat{X}} + \overbrace{\mathbb{E}\left[\widetilde{X} \right]}^{=\hat{X}} - \widetilde{X} \biggr\Vert^2_F \nonumber \\ 
&= \underbrace{\frac{1}{N}\mathbb{E}\left\Vert {X} - \hat{X} \right\Vert^2_F}_\text{Bias} +\underbrace{\frac{1}{N}\mathbb{E}\left\Vert \hat{X} - \widetilde{X} \right\Vert^2_F}_\text{Variance}. \label{eq:bias variance trade-off}
\end{align}
Consequently, the overall error cannot get arbitrarily small, as there exists a bias term when approximating the clean data points by finitely many eigenfunctions (see Proposition~\ref{prop:man approx bias error}). Therefore, in practice, the optimal de-noising results for a given dataset and noise variance would be attained as an optimum in a bias-variance trade-off, where a large cut-off frequency $\lambda_c$ would result in larger $\left\{ k_m\right\}$ values and a larger variance error (as noise is mapped to more expansion coefficients), and a smaller cut-off frequency $\lambda_c$ would result in smaller $\left\{ k_m\right\}$ values and a larger bias error.

\begin{remark}
While the discussion in this section suggests that our method is robust to noise in the high-dimensional regime, it is not to say that reducing the dimensionality of a given dataset (with a given and fixed noise variance $\sigma^2$) would degrade the accuracy of the quantities computed by our method. On the contrary, a close examination of the results in~\cite{el2010information} reveals that the errors in pairwise distances computed from noisy data points are dominated by $\sqrt{\mathcal{D}}\sigma^2$, meaning that projecting the data onto a lower-dimensional subspace (while retaining a sufficient approximation accuracy w.r.t the clean data) is encouraged -- as it improves the accuracy of the pairwise affinities on one hand, and reduces the overall noise magnitude $\gamma=\mathcal{D}\sigma^2$ on the other. 
\end{remark}

\section{Example: De-noising cryo-EM projection images} \label{section:Denoising rotationally-invariant image datasets}
In this section, we demonstrate how we can use our framework to de-noise single-particle cryo-electron microscopy (cryo-EM) image datasets. 

\subsection{Cryo-EM}
In single-particle cryo-EM~\cite{Frank,cheng2015primer}, one is interested in reconstructing a three-dimensional model of a macromolecule (such as a protein) from a set of two-dimensional images taken by an electron microscope. The procedure begins by embedding many copies of the macromolecule in a thin layer of ice, where due to the experimental set-up, the different copies are frozen at random unknown orientations. Then, an electron microscope acquires two-dimensional images of the these macromolecules (more precisely, it samples the Radon transform of the density function of the macromolecule). Consequently, it can be shown that the set of all projection images lies on a three-dimensional manifold diffeomorphic to the group SO(3). Thus, the manifold model assumption discussed in this work is natural for describing cryo-EM datasets. Note that due to the experimental set-up in cryo-EM, the in-plane rotation of each copy of the macromolecule is arbitrary, and therefore, so are the planar rotations of the two-dimensional images. Additionally, the images acquired in cryo-EM experiments are very noisy, with a typical SNR (Signal-to-Noise Ratio) of $1/10$ and lower.
Simulated clean and noisy cryo-EM images of the 70S ribosome subunit can be seen in Figure~\ref{fig:70S 1e4 snr -10dB denoised} (top two rows).

\subsection{De-noising recipe}
Given a collection of cryo-EM projection images $\left\{ I_1,\ldots,I_N\right\}$ sampled on a Cartesian grid, we start by performing steerable principal components analysis (sPCA), as described in~\cite{landa2017steerable}. This procedure provides us with steerable basis functions (the steerable principal components) $\left\{\psi_{m,\ell}\right\}$ of the form of~\eqref{eq:image subspace and psi polar form}, which are optimal for expanding the images in the dataset and all of their rotations. For each basis function $\psi_{m,\ell}$, the steerable PCA also returns its associated eigenvalue $\nu_{m,k}$, which encodes the contribution of $\psi_{m,\ell}$ to the expansion (analogously to the eigenvalues of the covariance matrix in standard PCA). 
Therefore, we have that
\begin{equation}
I_i \approx \sum_{m=-M}^M\sum_{\ell=1}^{\ell_m} y_{i,(m,\ell)} \psi_{m,\ell}, \label{eq:noisy images expansion in basis}
\end{equation}
where $y_{i,(m,\ell)}$ is the $(m,\ell)$'th expansion coefficient of the $i$'th image (provided by sPCA, see~\cite{landa2017steerable} for appropriate error bounds associated with~\eqref{eq:noisy images expansion in basis}). Expanding the image dataset using such basis functions
allows us to apply our filtering scheme in the domain of the expansion coefficients, as required by our algorithms.
We note that for images corrupted by additive white Gaussian noise, the noise variance $\sigma^2$ is estimated from the corners of the images (where no molecule is expected to be present), and the number of basis functions used in the expansion, governed by $M$ and $\left\{\ell_m\right\}$, is determined by estimating which eigenvalues $\nu_{m,k}$ are above the noise level (i.e. exceed the Baik-Ben Arous-P{\'e}ch{\'e} transition point~\cite{baik2005phase}, see also~\cite{zhao2013fourier,zhao2014fast}) via
\begin{equation}
\ell_m = \max\left\{\ell: \; \nu_{m,\ell}> \sigma^2 \left( 1+\sqrt{\frac{n_m}{N}}\right) ^2 \right\}, \label{eq:spca truncation automatic}
\end{equation}
where $\left\{n_m\right\}$ can be found in~\cite{landa2017steerable} ($n_m$ is the size of the $m$'th block in the block-diagonal covariance matrix associated with steerable PCA), and assuming that $\left\{\nu_{m,k}\right\}
_k$ are sorted in a non-increasing order for every $m$. Correspondingly, $M$ in~\eqref{eq:noisy images expansion in basis} is simply the largest $\left\vert m \right\vert$ s.t. $\ell_m>0$.

Using the above setting, the task of de-noising the images $\left\{I_i\right\}$ is reduced to the task of de-noising the sPCA coefficients $\left\{y_{i,(m,\ell)}\right\}$.
We then estimate the steerable manifold harmonics $\left\{\tilde{v}_{m,k}\right\}$ (as described by Algorithm~\ref{alg:Evaluating the steerable manifold harmonics}) from the dataset $\left\{y_{i}\right\}_{i=1}^N$, and follow by employing $\left\{\tilde{v}_{m,k}\right\}$ for filtering the dataset according to Algorithm~\ref{alg:Rotationally-invariant dataset filtering}. 
After obtaining the de-noised expansion coefficients $\left\{\hat{x}_{i,(m,\ell)}\right\}$, we can plug them back in the expansion~\eqref{eq:image subspace and psi polar form} to get de-noised images.
The procedure is summarized in Figure~\ref{fig:denoising schem illustration}. Note that since $\left\{I_i\right\}$ are real-valued images, their expansion coefficients satisfy the symmetry
\begin{equation}
y_{i,(-m,\ell)} = y_{i,(m,\ell)}^*.
\end{equation}
Therefore, it is sufficient to de-noise only the coefficients with non-negative angular frequencies.

\begin{figure}
  \centering  	
    \includegraphics[width=1.0 \textwidth]{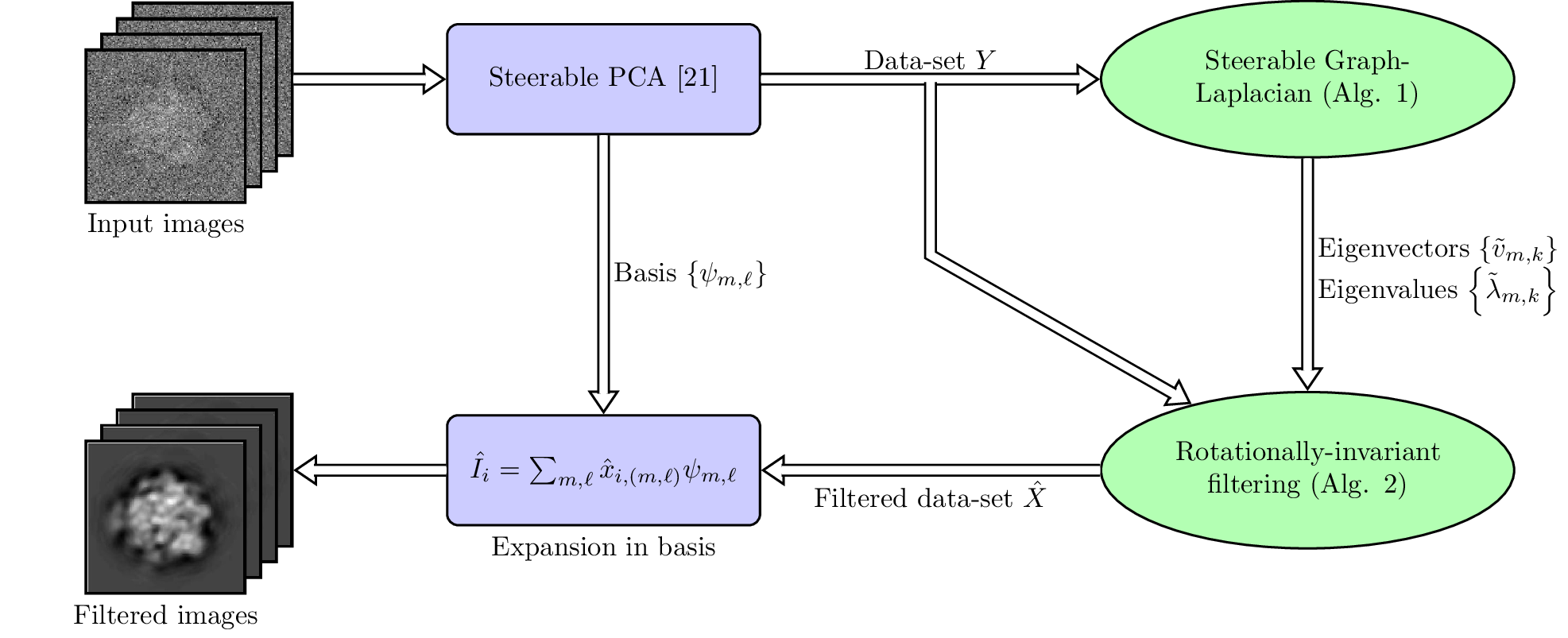}
	\caption[Denoised images] 
	{Schematic view of the de-noising procedure. We start by applying the steerable PCA~\cite{landa2017steerable} to the input images, obtaining basis functions $\left\{ \psi_{m,\ell} \right\}$ and associated expansion coefficients $\left\{ y_{i,(m,\ell)} \right\}$ (organized into the matrix $Y$ following the layout of~\eqref{eq:dataset X}), where the truncation of the expansion is due to~\eqref{eq:spca truncation automatic}. Then, the dataset $Y$ is used to construct the steerable graph Laplacian, whose eigenfunctions (the steerable manifold harmonics) are obtained via Algorithm~\ref{alg:Evaluating the steerable manifold harmonics} with implicit debiasing (step \ref{alg1-step:implicit debiasing}) and without denisty normalization (step \ref{alg1-step:density normalization}). Using the steerable manifold hamronics, we filter the dataset via Algorithm~\ref{alg:Rotationally-invariant dataset filtering}, and use the filtered coefficients $\left\{ \hat{x}_{i,(m,\ell)} \right\}$ in conjunction with the basis  functions $\left\{ \psi_{m,\ell} \right\}$ to get back the filtered images.} \label{fig:denoising schem illustration}
\end{figure}

\subsection{Experimental results}
We demonstrate the de-noising performance of our approach using simulated images of the 70S ribosome, of size $128\times 128$ pixels, after applying a filter to all images corresponding to a typical Contrast Transfer Function (CTF)~\cite{Frank} of the electron microscope. As described previously, we first map all images to their sPCA coefficients via~\cite{landa2017steerable} (with $T=10$ and half-Nyquist bandlimit), and then proceed according to our filtering scheme (Algorithms~\ref{alg:Evaluating the steerable manifold harmonics} and~\ref{alg:Rotationally-invariant dataset filtering}). We mention that throughout our experiments the choice $K=256$ was found satisfactory, and that $\varepsilon$ and $\lambda_c$ were chosen automatically for every experimental set-up (determined by the number of images $N$ and noise variance $\sigma^2$) as described in Appendix~\ref{appendix: Automatic choice of parameters}. In every experiment, we compare the de-noised images resulting from our method to the images obtained directly from the sPCA coefficients (i.e. images computed from the coefficients $\left\{y_{i,(m,\ell)}\right\}$), and to images obtained after applying a shrinkage to the sPCA coefficients via $y_{i,(m,\ell)} w_{m,\ell}$, where the weights $\left\{w_{m,1},\ldots,w_{m,\ell_m}\right\}_{m=-M}^M$, which were computed as described in~\cite{zhao2013fourier}, correspond to the asymptotically-optimal Wiener filter~\cite{singer2013two}. Essentially, this is the optimal filter for the expansion coefficients in the sense of minimizing the mean squared error. 

First, we demonstrate our method on $10,000$ projection images at signal-to-noise ratio of $1/20$. The de-noised images can be seen in Figure~\ref{fig:70S 1e4 snr -10dB denoised}, where it is visually evident that the final de-noised images using our method contain many more details compared to sPCA Wiener filtering, which results in somewhat blurred images due to the aggressive shrinkage of sPCA coefficients. In terms of performance measures, our method  (which we term ``sMH filtering'', where sMH stands for ``steerable manifold harmonics'') results in an average peak-SNR (pSNR) of $25.37$dB, where the sPCA Wiener filter provided $21.17$dB pSNR, and sPCA alone resulted in $17.64$dB pSNR.

It is therefore evident that Wiener filtering of sPCA coefficients is far from optimal in terms of de-noising and image recovery, as it essentially applies a single linear operator on the individual images, which is only optimal when the data resides on a linear subspace. However, in the case of cryo-EM, as the data resides on a manifold, it is reasonable to apply non-linear methods which account for the geometry and topology of the manifold. In this respect, our method is able to make use of all images and their rotations simultaneously to accurately estimate the structure of the manifold, and thereby provides an improved de-noising of the image dataset.

\begin{figure}
  \centering  	
    \includegraphics[width=0.65\textwidth]{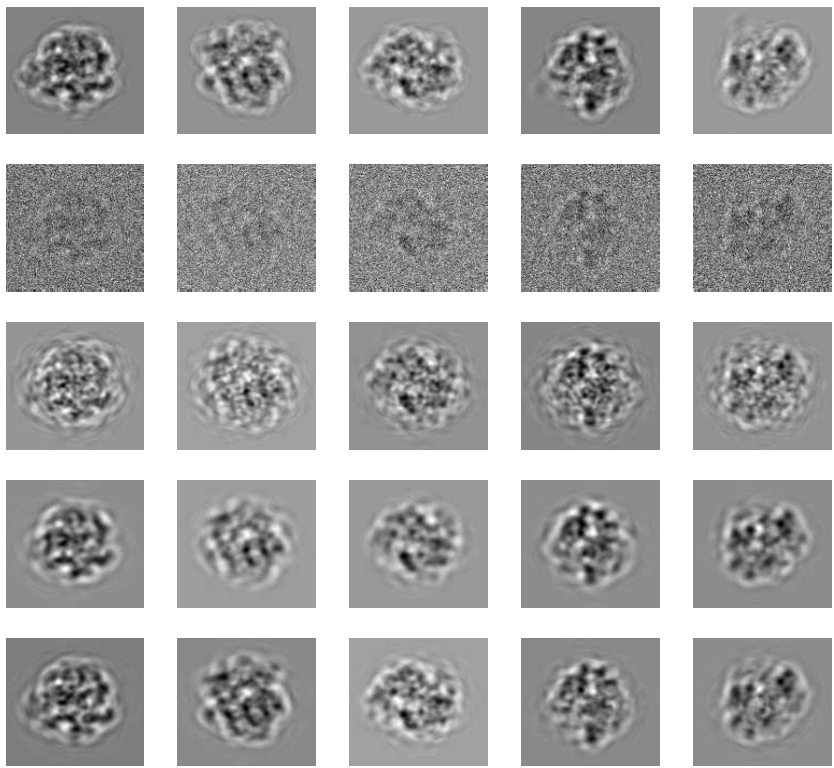}
	\caption[Denoised images] 
	{Images after de-noising, for $N=10,000$ and SNR~$=1/20$. Different colummns correspond to different images from the dataset (different in-plane rotations and viewing directions of the molecule), while different rows correspond to (from top to bottom): clean images, noisy images, sPCA (pSNR~$=17.64$dB), sPCA Wiener filter (pSNR~$=21.17$dB), and sMH filtering (this paper, pSNR~$=25.37$dB).}  \label{fig:70S 1e4 snr -10dB denoised}
\end{figure}

Next, Figure~\ref{fig:70S snr -10dB different N denoised} demonstrates the performance of our method for SNR~$=1/10$ and different values of $N$ (dataset size). As anticipated, our method is able to exploit larger datasets for improved de-noising, whereas the sPCA Wiener filtering offers only a mild gain beyond $2,000$ images. The reason for that is that the Wiener filtering is applied to each image separately, and therefore reaches saturation once the estimation of the sPCA from the noisy data is sufficiently accurate (approaches the sPCA of the clean data). Note that the pSNR from the projection onto the sPCA components (without shrinkage) reduces with $N$, because more basis functions $\psi_{m,\ell}$ are chosen (according to~\eqref{eq:spca truncation automatic}) as $N$ increases, even if their contribution to expanding the dataset is negligible. Therefore, the dimension $\mathcal{D}$ increases, and with it also the overall noise magnitude $\mathcal{D}\sigma^2$.
It is important to mention that even though the variance error term in~\eqref{eq:variance error term} behaves like $1/N$, the improvement in the pSNR of our method is not expected to follow this trend, since the overall error also includes a bias error term (see~\eqref{eq:bias variance trade-off}), such that the minimal error for every value of $N$ is attained as a different optimum in the bias-variance trade-off. 

\begin{figure}
  \centering  	
    \includegraphics[width=0.65\textwidth]{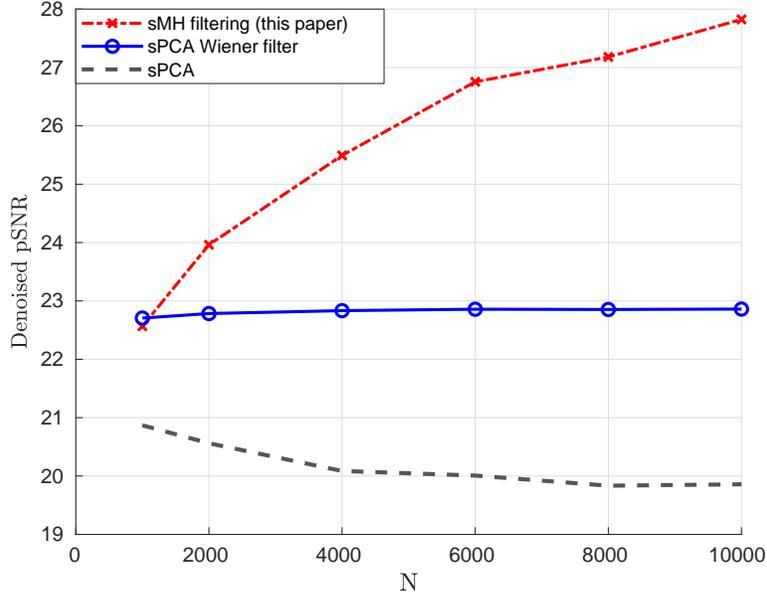}
	\caption[Denoised images] 
	{pSNR of de-noised images for SNR~$=1/10$ and different dataset sizes $N$.}  \label{fig:70S snr -10dB different N denoised}
\end{figure}

Lastly, we evaluated the de-noising quality for $N=10,000$ images and varying amounts of noise. The results are displayed in Figure~\ref{fig:70S N=1E4 different SNR denoised}, where we can see that our method outperforms the sPCA Wiener filter considerably in a wide range of SNRs. We remark that as the SNR decreases the asymptotics considered in Section~\ref{section:Analysis under Gaussian noise} become less valid, thus at some point the steerable graph Laplacian becomes too noisy, and the performance gain of our method drops. This phenomenon is mostly evident for SNRs below $-14$dB, and our method eventually under-performs the sPCA Wiener filter at $-20$dB SNR.

\begin{figure}
  \centering  	
    \includegraphics[width=0.65\textwidth]{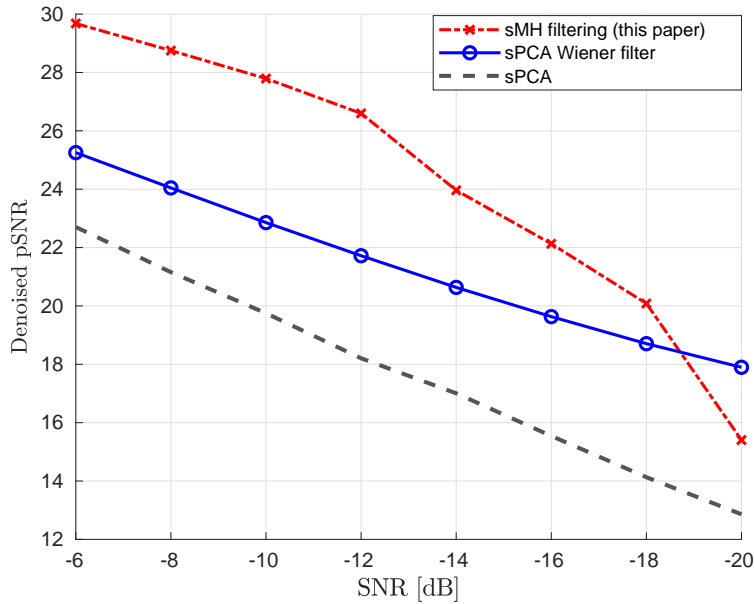}
	\caption[Denoised images] 
	{pSNR of de-noised images for $N=10,000$ and various SNR levels.}  \label{fig:70S N=1E4 different SNR denoised}
\end{figure}

\section{Conclusions and discussion} \label{section:Summary and discussion}
In this work, we introduced the steerable graph Laplacian, which generalizes the standard graph Laplacian by incorporating all planar rotations of all images in the dataset. 
We demonstrated that the (normalized) steerable graph Laplacian is both more accurate and more informative than the standard graph Laplacian, in the sense that it allows for an improved approximation of the Laplace-Beltrami operator on one hand, and admits eigenfunctions with a closed-form analytic expression of their angular part (i.e. the angular Fourier modes) on the other. This closed-form expression is essentially what allows for the efficient filtering procedure of the data coordinates (see Section~\ref{subsection:Expanding image datasets by the steerable manifold harmonics}), as we only need to estimate a block-diagonal coefficients matrix. Then, we have shown that under a suitable modification, the (normalized) steerable graph Laplacian is robust to noise in the regime of high dimensionality due to the concentration of measure of Gaussian noise. Moreover, we have seen that the proposed filtering procedure reduces the noise proportionally to the number of images in the dataset, which was corroborated by the experiments of de-noising cryo-EM projection images, where we demonstrated that our method can provide excellent de-noising results on highly noisy image datasets.

It is interesting to point-out that the steerable graph Laplacian, while utilized for filtering image datasets, can be employed for many other purposes. One application immediately coming to mind is the filtering of datasets consisting of periodic signals (see last remark in Section~\ref{subsection:Problem se-tup}). However, and more importantly, the steerable graph Laplacian can replace the standard graph Laplacian in all applications where the domain is known to be rotationally-invariant (by our definition in Section~\ref{subsection:Problem se-tup}). For instance, it can be used for regularization over general signal/data recovery inverse problems, or for dimensionality reduction in the framework of Diffusion Maps~\cite{coifman2006diffusion} and Laplacian Eigenmaps~\cite{belkin2003laplacian}. In this context, we mention the method of Vector Diffusion Maps (VDM)~\cite{singer2012vector}, which allows for diffusion-based dimensionality reduction for manifold data in the presence of nuisance parameters (such as planar rotations). However, as VDM computes the group ratios only between pairs of data points (e.g. optimal rotational alignments), it may be of interest to compare it to the steerable graph Laplacian, which essentially considers all planar rotations of all images. Lastly, we note that a possible future research direction is the extension of our techniques to other group actions (such as SO(3)).

\section*{Acknowledgements }
We would like to thank Amit Singer for useful comments and suggestions regarding this work.
This research was supported by THE ISRAEL SCIENCE FOUNDATION grant No. 578/14, by Award Number R01GM090200 from the NIGMS, and by the European Research Council (ERC) under the European Union’s Horizon 2020 research and innovation programme (grant agreement 723991 - CRYOMATH).

\begin{appendices}

\section{Choosing $\varepsilon$ and $\lambda_c$ } \label{appendix: Automatic choice of parameters}
In this section, we provide some guidelines on how to choose the parameters $\varepsilon$ and $\lambda_c$, and propose a method for determining them automatically. 

In the clean setting, it is reasonable to choose the optimal value of $\varepsilon$ by minimizing the bias-variance error related to the convergence of the steerable graph Laplacian (i.e. equation~\eqref{eq:steerable graph Laplacian convegence}). However, in general this cannot be achieved without prior knowledge on geometrical quantities of $\man$, such as its curvature (see~\cite{singer2006graph} and the related discussion therein). Nonetheless, methods for automatic picking of $\varepsilon$ for the standard graph Laplacian were proposed~(see for example~\cite{coifman2008graph}). Unfortunately, such methods cannot be applied to noisy data in a straightforward manner, due to the effects of noise on the pairwise distances, and we are not aware of any work detailing the automatic choice of $\varepsilon$ for this case. We note however, that when the distances are computed from the noisy data-points $\left\{y_1,\ldots,y_N \right\}$, and if $N$ is sufficiently large such that the noise is the dominant factor in determining $\varepsilon$, then it is reasonable to choose 
\begin{equation}
\varepsilon \propto \sqrt{\mathcal{D}}\sigma^2,	\label{eq:eps rule of thumb}
\end{equation}
as this expression dominates the standard deviation of the errors in the pairwise (squared) distances (see~\cite{el2010information} and analysis therein), and so can be used to set a region in which it is likely to find the optimal $\varepsilon$.

In order to find the optimal $\varepsilon$ and cut-off frequency $\lambda_c$ for a given dataset, we propose to use a cross-validation procedure. In particular, we choose $\varepsilon$ and $\lambda_c$ such that the log-likelihood of a noisy set of points, given a de-noised set (where the sets are disjoint), is maximized. In more detail, suppose that $\left\{ y_1, \ldots, y_N\right\}$ is a collection of noisy data-points as in~\eqref{eq:noisy data}, i.e.
\begin{equation}
y_i = x_i + \eta_i,
\end{equation}
where $\eta_i\in \CD$ are i.i.d Gaussian vectors with mean zero and covariance $\sigma^2 I$, and $x_i$ are sampled from the manifold $\man$ with uniform distribution. Then, the log-likelihood of obtaining the subset $\left\{ y_{N^{'}+1}, \ldots, y_N\right\}$, where $N^{'}<N$, is given (up to additive constants) by
\begin{equation}
\sum_{i={N^{'}+1}}^N \log \left[ \frac{1}{\operatorname{Vol}\left\lbrace\man\right\rbrace} \int_\man \exp{\left\{-\left\Vert y_i - x\right\Vert^2/2 \sigma^2 \right\}} dx \right].
\end{equation}
We then propose to approximate this log-likelihood by Monte-Carlo integration via the points 
\begin{equation}
\left\{ \hat{x}_{1}(\varepsilon,\lambda_c), \ldots, \hat{x}_{N^{'}}(\varepsilon,\lambda_c)\right\},
\end{equation}
obtained from the de-noising of $\left\{ y_{1}, \ldots, y_{N^{'}}\right\}$ using the parameters $(\varepsilon,\lambda_c)$. That is, we define the empirical log-likelihood of $\left\{ y_{N^{'}+1}, \ldots, y_N\right\}$ (as a function of the parameters $\varepsilon$ and $\lambda_c$) by
\begin{equation}
J(\varepsilon,\lambda_c) = \sum_{i={N^{'}+1}}^N \log{\left[\sum_{j=1}^{N^{'}}\frac{2\pi}{K}\sum_{k=0}^{K-1} \exp{\left\{-\left\Vert y_i - \hat{x}^{2\pi k/K}_j(\varepsilon,\lambda_c)\right\Vert^2/2 \sigma^2 \right\}} \right]},
\end{equation}
where $\hat{x}^{2\pi k/K}_j(\varepsilon,\lambda_c)$ stands for the rotation of $\hat{x}_j(\varepsilon,\lambda_c)$ by an angle of $2\pi k/K$ according to~\eqref{eq:extrinsic rotation}. We mention that summing over the rotations improves the accuracy of the Monte-Carlo integration, as we account for one of the dimensions of the manifold $\man$ in the integration. Finally, we choose the parameters $\varepsilon $ and $\lambda_c$ by maximizing $J$, i.e.
\begin{equation}
\left\{\varepsilon^{opt},\lambda_c^{opt}\right\} = \argmax_{(\varepsilon,\lambda_c)}{\left\{ J(\varepsilon,\lambda_c) \right\}}.
\end{equation}

Essentially, we expect the realizations of the noisy data-points $\left\{ y_{N^{'}+1}, \ldots, y_N\right\}$ to be best explained when the de-noised points $\left\{ \hat{x}_{1}(\varepsilon,\lambda_c), \ldots, \hat{x}_{N^{'}}(\varepsilon,\lambda_c)\right\}$ (and their rotations) lay as close as possible to $\man$, and we seek the best parameters for achieving that. For example, if we take $\lambda_c$ to be too small, then the de-noising error will be dominated by a bias term as we did not take a sufficient number of components to represent the features of the clean images accurately. Therefore we would expect the empirical log-likelihood to be small as the de-noised images will not be close to $\man$. On the other hand, if we take $\lambda_c$ to be too large, then over-fitting will occur, in the sense that some features of the noise will be preserved in the de-noised images. In that case, as the empirical likelihood is computed on a set independent from the set of de-noised points, we would again expect the empirical log-likelihood to be small.

\section{Quadratic form of $L$} \label{appendix:Quadratic form of L}
In what follows, we derive the quadratic form of $L$ appearing in equation~\eqref{eq:L quad form}. First, for $f=\left[f_1(\vartheta),\ldots,f_N(\vartheta)\right]^T$ we have that
\begin{equation}
(Lf)(i,\vartheta) = D_{i,i} f_i(\vartheta) - \sum_{j=1}^N \int_0^{2\pi} W_{i,j}(\vartheta,\varphi)f_j(\varphi) d\varphi,
\end{equation}
and therefore
\begin{equation}
\left\langle f, Lf \right\rangle_{\mathcal{H}} = \sum_{i=1}^N D_{i,i} \int_0^{2\pi} \left\vert f_i(\vartheta) \right\vert^2 d\vartheta - \sum_{i,j=1}^N \int_0^{2\pi} \int_0^{2\pi} f_i^*(\vartheta) W_{i,j}(\vartheta,\varphi) f_j(\varphi) d\vartheta d\varphi.
\end{equation}
Then, if we notice that
\begin{equation}
D_{i,i} = \sum_{j=1}^N \int_0^{2\pi} W_{i,j} (0,\alpha) d\alpha = \sum_{j=1}^N \int_0^{2\pi} W_{i,j} (\vartheta,\varphi) d\varphi
\end{equation}
for every $\vartheta\in [0,2\pi)$ (by changing the integration parameters), we have that
\begin{equation}
\sum_{i=1}^N D_{i,i} \int_0^{2\pi} \left\vert f_i(\vartheta) \right\vert^2 d\vartheta = \sum_{i,j=1}^N \int_0^{2\pi} \int_0^{2\pi} W_{i,j}(\vartheta,\varphi) \left\vert f_i(\vartheta) \right\vert^2 d\vartheta d\varphi = \sum_{i,j=1}^N \int_0^{2\pi} \int_0^{2\pi} W_{i,j}(\vartheta,\varphi) \left\vert f_j(\varphi) \right\vert^2 d\vartheta d\varphi,	\label{eq:D_ii sum 1}
\end{equation}
due to the symmetry $W_{i,j}(\vartheta,\varphi) = W_{j,i}(\varphi,\vartheta)$ (see the definition of $W$ in~\eqref{eq:steerable affinity matrix}).
Finally, using~\eqref{eq:D_ii sum 1} we can write
\begin{align}
\left\langle f, Lf \right\rangle_{\mathcal{H}} &= \frac{1}{2} \sum_{i,j=1}^N \int_0^{2\pi} \int_0^{2\pi} W_{i,j}(\vartheta,\varphi) \left[ \left\vert f_i(\vartheta) \right\vert^2 + \left\vert f_j(\varphi) \right\vert^2 -f_i(\vartheta) f_j^*(\varphi) -f_i^*(\vartheta) f_j(\varphi)\right]   d\vartheta d\varphi \nonumber \\
&= \frac{1}{2} \sum_{i,j=1}^N \int_0^{2\pi} \int_0^{2\pi} W_{i,j}(\vartheta,\varphi) \left\vert f_i(\vartheta) - f_j(\varphi) \right\vert^2   d\vartheta d\varphi.
\end{align}

\section{Linear rotationally-invariant operators} \label{appendix: LRI operators}
We start with the following definition of linear rotationally-invariant operators over $\mathcal{H}$.
\begin{defn} [\textbf{LRI operators}] \label{def:LRI operators}
An operator $G:\mathcal{H}\rightarrow\mathcal{H}$ is linear rotationally-invariant (LRI) over $\mathcal{H}$, if
\begin{enumerate}
\item For any fixed $(i,\vartheta)\in \Gamma$ (where $\Gamma$ defined in Section~\ref{subsection:The steerable graph Laplacian for image-manifolds}), the functional $\left\{ Gf\right\}(i,\vartheta)$ is linear and continuous in $f$. 
\item $G$ satisfies
\begin{equation}
\left\{ Gf\right\}(i,\vartheta-\alpha) = \left\{ Gf^\alpha\right\}(i,\vartheta), \quad\quad f^\alpha(i,\vartheta) \triangleq f_i(\vartheta-\alpha), \label{eq:operator rotatioal invariance}
\end{equation}    
for all $f\in\mathcal{H}$, $\alpha\in [0,2\pi)$, and $(i,\vartheta)\in\Gamma$.
\end{enumerate}
\end{defn}
In the first requirement of Definition~\ref{def:LRI operators}, the continuity property essentially means that if $f_1$ and $f_2$ are close (in $\mathcal{H}$), then $Gf_1(i,\vartheta)$ and $Gf_2(i,\vartheta)$ are also close (in absolute value). As for the second requirement (rotational-invariance), loosely speaking, it means that shifting the output of the operator cyclically by an angle $\alpha$ is equivalent to shifting the input by $\alpha$, hence the action of the operator itself does not depend on the angle $\vartheta$ (of $(i,\vartheta)\in\Gamma$). 
We mention that this property of linearity and rotational-invariance can be viewed as analogous to that of Linear-Time Invariant (LTI) operators, native to classical signal processing.
\begin{remark}
Our definition of LRI operators is somewhat more restrictive than the name suggests (compared also to classical LTI operators) because of our requirement for continuity of every functional $\left\{ G\cdot\right\}(i,\vartheta)$. We note that while this requirement can be removed, allowing for a broader class of operators, it is simpler to handle and sufficient for our purposes.
\end{remark}

The next lemma characterizes the form of LRI operators explicitly.
\begin{lem} [\textbf{Explicit form of LRI operators}] \label{lem:LRI operator explicit form}
Let $G$ be an LRI operator over $\mathcal{H}$. Then, there exist unique $\left\{G_{i,j}\right\}_{i,j=1}^N\in\mathcal{L}^2(\mathbb{S}^1)$, s.t. for any $f\in\mathcal{H}$ and $(i,\vartheta)\in \Gamma$
\begin{equation}
\left\{ Gf\right\}(i,\vartheta) 
= \sum_{j=1}^N \int_0^{2\pi} G_{i,j}(\varphi-\vartheta) f_j(\varphi) d\varphi. \label{eq:LRI operator explicit form}
\end{equation}
\end{lem}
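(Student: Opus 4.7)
The plan is to first apply the Riesz representation theorem to reduce the existence statement to finding a convenient representer of each pointwise evaluation functional, and then use the rotational-invariance condition~\eqref{eq:operator rotatioal invariance} to show that all these representers are generated from a single one per pair $(i,j)$ by a translation in the angular variable.

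More concretely, fix $(i,\vartheta)\in\Gamma$. The first property in Definition~\ref{def:LRI operators} says that the functional $f\mapsto \{Gf\}(i,\vartheta)$ is continuous and linear on the Hilbert space $\mathcal{H}$. By the Riesz representation theorem, there is a unique $h_{i,\vartheta}\in\mathcal{H}$ such that
\begin{equation}
\{Gf\}(i,\vartheta) \;=\; \langle h_{i,\vartheta},f\rangle_{\mathcal{H}} \;=\; \sum_{j=1}^N \int_0^{2\pi} h_{i,\vartheta,j}^*(\varphi)\, f_j(\varphi)\, d\varphi,
\end{equation}
where $h_{i,\vartheta,j}(\cdot)\in\mathcal{L}^2(\mathbb{S}^1)$ denotes the $j$'th component of $h_{i,\vartheta}$. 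This already gives a kernel representation; it remains only to identify the $\vartheta$-dependence of the kernel.

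Next I would exploit the rotational-invariance requirement. Writing out both sides of~\eqref{eq:operator rotatioal invariance} using the Riesz representers, and performing the change of variables $\varphi\mapsto\varphi+\alpha$ (together with $2\pi$-periodicity) on the right-hand side, gives
\begin{equation}
\sum_{j=1}^N \int_0^{2\pi} h_{i,\vartheta-\alpha,j}^*(\varphi)\, f_j(\varphi)\, d\varphi \;=\; \sum_{j=1}^N \int_0^{2\pi} h_{i,\vartheta,j}^*(\varphi+\alpha)\, f_j(\varphi)\, d\varphi
\end{equation}
for every $f\in\mathcal{H}$. Uniqueness of the Riesz representer forces $h_{i,\vartheta-\alpha,j}^*(\varphi) = h_{i,\vartheta,j}^*(\varphi+\alpha)$ as elements of $\mathcal{L}^2(\mathbb{S}^1)$, for every $\alpha\in[0,2\pi)$. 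Specializing $\alpha=\vartheta$ yields $h_{i,\vartheta,j}^*(\varphi) = h_{i,0,j}^*(\varphi-\vartheta)$, so defining
\begin{equation}
G_{i,j}(\psi) \;\triangleq\; h_{i,0,j}^*(\psi)
\end{equation}
produces the claimed representation~\eqref{eq:LRI operator explicit form}. Uniqueness of the $G_{i,j}$ follows from the uniqueness in Riesz's theorem, since the $G_{i,j}(\varphi-\vartheta)$ are precisely the complex conjugates of the representer components and are hence determined by $G$.

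The only genuinely subtle point is the handling of null sets in the equality $h_{i,\vartheta-\alpha,j}^*(\varphi) = h_{i,\vartheta,j}^*(\varphi+\alpha)$, which a priori holds for each fixed $\alpha$ only up to a $\varphi$-measure-zero set (and the exceptional set could in principle depend on $\alpha$). I expect this to be the main technical obstacle, but it is resolved in the standard way: since the equality holds as an identity in $\mathcal{L}^2(\mathbb{S}^1)$ for \emph{every} $\alpha$, one can select a single representative $G_{i,j}\in\mathcal{L}^2(\mathbb{S}^1)$ of $h_{i,0,j}^*$ and then define $h_{i,\vartheta,j}^*(\varphi):=G_{i,j}(\varphi-\vartheta)$; this is consistent with the previous definition as an element of $\mathcal{L}^2(\mathbb{S}^1)$, and plugging it into the Riesz identity recovers $\{Gf\}(i,\vartheta)$ in the form~\eqref{eq:LRI operator explicit form}. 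The remaining routine verifications (that each $G_{i,j}$ lies in $\mathcal{L}^2(\mathbb{S}^1)$, which is immediate from $h_{i,0}\in\mathcal{H}$) complete the proof.
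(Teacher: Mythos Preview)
Your proposal is correct and follows essentially the same approach as the paper: apply the Riesz representation theorem to obtain a kernel $h_{i,\vartheta,j}^*(\varphi)$, then use the rotational-invariance condition together with a change of variables to reduce the $\vartheta$-dependence to a translation, and finally specialize the shift parameter to identify $G_{i,j}$ with the $\vartheta=0$ representer. Your treatment is in fact somewhat more careful than the paper's, as you explicitly address uniqueness and the null-set issue that the paper leaves implicit.
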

\begin{proof}
By the Riesz representation theorem, if $\left\{ G\cdot\right\}(i,\vartheta)$ is a linear and continuous functional over $\mathcal{H}$, then there exists a unique $g_{(i,\vartheta)}\in\mathcal{H}$ such that
\begin{equation}
\left\{ Gf\right\}(i,\vartheta)
= \left\langle g_{(i,\vartheta)},f\right\rangle_{\mathcal{H}} =\sum_{j=1}^N \int_0^{2\pi} g^*_{i,j}(\vartheta,\varphi) f_j(\varphi) d\varphi,
\end{equation}
where $g_{(i,\vartheta)} \triangleq \left[ g_{i,1}(\vartheta,\cdot),\ldots, g_{i,N}(\vartheta,\cdot)\right]^T$.
Additionally, from~\eqref{eq:operator rotatioal invariance}, we have that
\begin{equation}
\left\{ Gf\right\}(i,\vartheta) = \left\{ Gf\right\}(i,\vartheta+\alpha-\alpha) = \left\{ Gf^\alpha\right\}(i,\vartheta+\alpha) = \sum_{j=1}^N \int_0^{2\pi} g^*_{i,j}(\vartheta+\alpha,\varphi^{'}+\alpha) f_j(\varphi^{'}) \varphi^{'},
\end{equation}
when changing the integration parameter via $\varphi^{'}= \varphi-\alpha$. Lastly, taking $\alpha=-\vartheta$ and defining $G_{i,j}(\varphi-\vartheta) = g^*_{i,j}(0,\varphi-\vartheta)$ concludes the proof.
\end{proof}

The main contribution of Lemma~\ref{lem:LRI operator explicit form} is to point out that every LRI operator can be characterized by a finite number of functions $\left\{G_{i,j}\right\}_{i,j=1}^N$ which can be expanded in a Fourier series. Therefore, $G$ can be mapped to (and described by) a sequence of matrices $\left\{ \hat{G}^{(m)}\right\}_{m=-\infty}^\infty$ defined by
\begin{equation}
\hat{G}^{(m)}_{i,j} = \int_0^{2\pi} G_{i,j}(\alpha) e^{ \imath m \alpha} d\alpha. \label{eq:G fourier transform}
\end{equation}

It is important to notice that from~\eqref{eq:extrinsic rotation} and~\eqref{eq:steerable affinity matrix}, it immediately follows that the steerable affinity operator $W$ of~\eqref{eq:steerable affinity matrix} is LRI with $G_{i,j} = W_{i,j}(0,\vartheta-\varphi)$. However, we note that for any fixed $(i,\vartheta)\in\Gamma$, it is evident that $Df(i,\vartheta)$ (where $D$ is the diagonal matrix defined in~\eqref{eq:steerable graph Laplacian}) is not a continuous functional of $f$ (in $\mathcal{H}$), as small perturbations in $f$ may lead to arbitrarily large changes in $Df(i,\vartheta)$ as it depends on point-wise values of $f$.  Therefore, the steerable graph Laplacian $L$ (from~\eqref{eq:steerable graph Laplacian}) is not LRI.
Nonetheless, as we shall see next, we can still employ the properties of LRI operators to characterize a broader family of operators, which includes the steerable affinity operator $W$ and the steerable graph Laplacian $L$ as special cases, with eigenfunctions admitting a particularly convenient form. This is the subject of the next proposition.

\begin{prop} \label{prop:eigenfunctions and eigenvalues of H}
Consider an operator $H:\mathcal{H}\rightarrow\mathcal{H}$ of the form
\begin{equation}
H = A + G,	\label{eq:generalized LRI operator}
\end{equation}
where $G$ is LRI and $A\in\mathbb{C}^{N\times N}$ is a complex-valued matrix. If $(\lambda,v)$ is an eigenvalue-eigenvector pair of the matrix 
\begin{equation}
\hat{H}^{(m)} = A+\hat{G}^{(m)},
\end{equation}
where $\hat{G}^{(m)}$ is from~\eqref{eq:G fourier transform}, then $\Phi = v \cdot e^{\imath m \vartheta}$ is an eigenfunction of $H$ with eigenvalue $\lambda$.
\end{prop}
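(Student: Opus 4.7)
The plan is to verify the claim by direct computation: apply the operator $H = A + G$ to the candidate eigenfunction $\Phi(i,\vartheta) = v_i\, e^{\imath m \vartheta}$ and check that the result equals $\lambda \Phi$. The key leverage comes from Lemma~\ref{lem:LRI operator explicit form}, which guarantees that $G$ admits the integral representation $\{Gf\}(i,\vartheta) = \sum_j \int_0^{2\pi} G_{i,j}(\varphi-\vartheta) f_j(\varphi)\,d\varphi$, so that its action on a function whose angular part is a single Fourier mode can be analyzed in closed form.

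First, I would handle the matrix part. Since $A\in\mathbb{C}^{N\times N}$ is lifted to $\mathcal{H}$ via $\{Af\}(i,\vartheta)=\sum_{j=1}^N A_{i,j} f_j(\vartheta)$, applying $A$ to $\Phi$ gives $\{A\Phi\}(i,\vartheta) = e^{\imath m \vartheta} \sum_{j} A_{i,j} v_j = e^{\imath m \vartheta}(Av)_i$, which is immediate since the Fourier factor $e^{\imath m \vartheta}$ is common to all coordinates and pulls out of the sum.

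Next, I would compute $\{G\Phi\}(i,\vartheta)$. Plugging in the explicit form from Lemma~\ref{lem:LRI operator explicit form} and performing the change of variables $\alpha = \varphi - \vartheta$, together with the $2\pi$-periodicity of $G_{i,j}$ (inherited from $\mathcal{L}^2(\mathbb{S}^1)$) to shift the integration limits back to $[0,2\pi)$, yields
\begin{equation}
\{G\Phi\}(i,\vartheta) = \sum_{j=1}^N v_j \int_0^{2\pi} G_{i,j}(\alpha) e^{\imath m (\alpha+\vartheta)} d\alpha = e^{\imath m \vartheta} \sum_{j=1}^N \hat{G}^{(m)}_{i,j} v_j = e^{\imath m \vartheta}\bigl(\hat{G}^{(m)} v\bigr)_i,
\end{equation}
where the definition~\eqref{eq:G fourier transform} of $\hat{G}^{(m)}$ is used at the last step.

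Combining the two pieces gives $\{H\Phi\}(i,\vartheta) = e^{\imath m \vartheta}\bigl[(A+\hat{G}^{(m)})v\bigr]_i = \lambda\, e^{\imath m \vartheta} v_i = \lambda\,\Phi(i,\vartheta)$, as desired. There is no real obstacle: the only subtle point is making sure the change-of-variables step is justified, which is handled by the periodicity of $G_{i,j}$; checking the sign convention in the Fourier transform~\eqref{eq:G fourier transform} is what ensures that the correct matrix $\hat{G}^{(m)}$ (and not its conjugate) appears. The argument is essentially a one-line computation, and its interest lies not in difficulty but in reducing the infinite-dimensional spectral problem for $H$ to the family of finite-dimensional eigenproblems for $\{\hat{H}^{(m)}\}_{m\in\mathbb{Z}}$.
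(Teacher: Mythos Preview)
Your proposal is correct and follows essentially the same approach as the paper: both use Lemma~\ref{lem:LRI operator explicit form} to write $G$ in its integral form and then compute $G\Phi$ directly. The only cosmetic difference is that the paper expands $G_{i,j}(\varphi-\vartheta)$ in its Fourier series and invokes orthogonality of $\{e^{\imath m'\vartheta}\}$ to isolate the $m$-th coefficient, whereas you perform the change of variables $\alpha=\varphi-\vartheta$ and read off $\hat{G}^{(m)}_{i,j}$ directly from~\eqref{eq:G fourier transform}; these are equivalent one-step computations.
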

\begin{proof}
The proof follows directly from Lemma~\ref{lem:LRI operator explicit form} and the Fourier expansion of $G_{i,j}$.
Let us write
\begin{equation}
\left\{ H \Phi \right\}(i,\vartheta) = \sum_{j=1}^N A_{i,j} \Phi(j,\vartheta) + \sum_{j=1}^N \int_0^{2\pi} G_{i,j}(\varphi-\vartheta) \Phi(j,\varphi) d\varphi,
\end{equation}
where we have used the explicit form of the LRI operator $G$ given by~\eqref{eq:LRI operator explicit form}.
Then, if we expand $G_{i,j}(\cdot)$ in a Fourier series as
\begin{equation}
G_{i,j}(\varphi-\vartheta) = \frac{1}{2\pi}\sum_{m=-\infty}^{\infty} \hat{G}_{i,j}^{(m)} e^{ -\imath m (\varphi-\vartheta)}, \quad\quad  \hat{G}^{(m)}_{i,j} = \int_0^{2\pi} G_{i,j}(\alpha) e^{ \imath m \alpha} d\alpha,
\end{equation} 
we have that
\begin{equation}
\left\{ H \Phi \right\}(i,\vartheta) = \sum_{j=1}^N A_{i,j} \Phi(j,\vartheta) + \sum_{j=1}^N \frac{1}{2\pi}\sum_{m^{'}=-\infty}^\infty \hat{G}_{i,j}^{m^{'}} e^{\imath m^{'} \vartheta} \int_0^{2\pi} \Phi(j,\varphi) e^{-\imath m^{'} \varphi} d\varphi.
\end{equation}
Therefore, by substituting $\Phi(i,\vartheta) = v_{i}e^{\imath m \vartheta}$, where $v_{i}$ stands for $i$'th element of $v$, we get
\begin{equation}
\left\{ H \Phi \right\}(i,\vartheta) = e^{\imath m \vartheta}\sum_{j=1}^N A_{i,j} v_{j} + e^{\imath m \vartheta} \sum_{j=1}^N \hat{G}_{i,j}^{m} v_{j},
\end{equation}
where we have used the orthogonality of $\left\{ e^{\imath m \vartheta}\right\}_{m=-\infty}^\infty$ over $[0,2\pi)$. Finally, it follows that 
\begin{equation}
 H \Phi  = \left[(A+\hat{G}^{(m)}) v\right] e^{\imath m \vartheta} = \lambda v e^{\imath m \vartheta} = \lambda \Phi,
\end{equation}
since $v$ is an eigenvector of $A+\hat{G}^{(m)}$ with eigenvalue $\lambda$.
\end{proof}

Therefore, even though the steerable graph Laplacian $L$ is not strictly LRI according to Definition~\ref{def:LRI operators}, it still takes the form of the operators considered in Proposition~\ref{prop:eigenfunctions and eigenvalues of H}, and consequently, we can derive its eigen-decomposition by making use of the sequence of matrices $\left\{\hat{W}^{(m)}\right\}_{m=-\infty}^\infty$ of~\eqref{eq:w_hat_ij mat fourier}. 

\section{Proof of Theorem~\ref{thm:eigenfunctions and eigenvalues of L}}
\label{appendix:proof of spectral properties of L}
\begin{proof}
First, we note that by~\eqref{eq:S_m matrix def} and~\eqref{eq:steerable affinity matrix}, it follows directly that $\hat{W}^{(m)}$ is Hermitian, which implies that $S_m$ is also Hermitian, and therefore can be diagonalized by a set of orthogonal eigenvectors.
Next, as $L$ is of the form $A+G$ as required by Proposition~\ref{prop:eigenfunctions and eigenvalues of H} (where $A$ is a matrix and $G$ is an LRI operator), we can obtain a sequence of eigenfunctions and eigenvalues of $L$ by diagonalizing the matrices $S_m = D-\hat{W}^{(m)}$ for every $m\in\mathbb{Z}$. Then, the eigenvalues $\left\{\lambda_{m,k}\right\}_{k=1}^N$ must be real-valued (since $S_m$ is Hermitian), and moreover, by the quadratic form of $L$~\eqref{eq:L quad form}
it follows that $\left\langle f, Lf \right\rangle_{\mathcal{H}} \geq 0$, which implies that $L$ is semi-positive definite and thus the eigenvalues $\left\{\lambda_{m,k}\right\}_{k=1}^N$ are non-negative.

Lastly, the fact that $\left\{\Phi_{m,k}\right\}_{m,k}$ are orthogonal and complete follows from the orthogonality and completeness of $\left\{ e^{\imath m \vartheta}\right\}_{m=-\infty}^\infty$ over $\mathcal{L}^2(\mathbb{S}^1),$ and the orthogonality and completeness of $\left\{v_{m,k}\right\}_{k=1}^N$ over $\mathbb{C}^N$ (since $D-\hat{W}_m$ is Hermitian) for every $m\in\mathbb{Z}$. In particular, it easily follows that we can expand every $f\in \mathcal{H}$ as
\begin{equation}
f(i,\vartheta) = \sum_{m=\infty}^\infty \alpha_m^i e^{\imath m \vartheta} = \sum_{m=\infty}^\infty \sum_{j=1}^N \beta_{m,j} v_{i,(m,j)} e^{\imath m \vartheta},
\end{equation}
where $v_{i,(m,j)}$ stands for the $i$'th element of the vector $v_{m,j}$ (which is the $j$'th eigenvector of $S_m = D-\hat{W}^{(m)}$), which concludes the proof.
\end{proof}

\section{Proof of Theorem~\ref{thm:steerable graph Laplacian convergence}} \label{appendix:proof of steerable graph Laplacian convergence}
\subsection{The limit and bias terms}\label{subsection:The limit and bias terms}
By~\eqref{eq:normalized steerable graph laplacian} and~\eqref{eq:steerable graph Laplacian}, we can write
\begin{align}
\frac{4}{\varepsilon} \left\{\tilde{L}g\right\}(i,\vartheta) &= \frac{4}{\varepsilon}\left[f(x_i^\vartheta) - \sum_{j=1}^N \int_0^{2\pi} D^{-1}_{i,i}{W}_{i,j}(\vartheta,\varphi)f(x_j^\varphi)d\varphi \right] \nonumber \\
 &= \frac{4}{\varepsilon}\left[f(x_i^\vartheta) -  \frac{\frac{1}{N}\sum_{j=1}^N \int_0^{2\pi} \exp{\left\lbrace-{\left\Vert x_i^\vartheta - x_j^\varphi\right\Vert^2}{/\varepsilon}\right\rbrace}f(x_j^\varphi) d\varphi}{\frac{1}{N} \sum_{j=1}^{N}  \int_0^{2\pi} \exp{\left\lbrace-{\left\Vert x_i^\vartheta -x_j^{\varphi}\right\Vert^2}{/\varepsilon}\right\rbrace} d\varphi}\right]. \label{eq:steerable graph laplacian fraction}
\end{align}
We begin by deriving the limit of~\eqref{eq:steerable graph laplacian fraction} for $N\rightarrow\infty$ and a fixed $\varepsilon>0$, showing that it is essentially the Laplace-Beltrami operator $\Delta_\man$ with an additional bias error term of $O(\varepsilon)$. 
First, let us focus our attention on the expression
\begin{equation}
C_{i,N}^1(\vartheta) \triangleq \frac{1}{N}\sum_{j=1}^N \int_0^{2\pi} \exp{\left\lbrace-{\left\Vert x_i^\vartheta - x_j^\varphi\right\Vert^2}{/\varepsilon}\right\rbrace}f(x_j^\varphi) d\varphi, \label{eq:steerable graph laplacian numerator}
\end{equation}
which is the numerator of the second term of~\eqref{eq:steerable graph laplacian fraction} (inside the brackets).
Before we proceed with the evaluation of the expression in~\eqref{eq:steerable graph laplacian numerator}, we construct a convenient parametrization of the manifold $\man$. To this end, since our manifold $\man$ is rotationally-invariant, it would be beneficial to parametrize it by a rotationally-invariant coordinate $z$ coupled with a rotation angle $\beta\in [0,2\pi)$ (analogously to polar coordinates in $\mathbb{R}^2$). In particular, in Section~\ref{subsection:construction of a rotationally-invariant parametrizaion} we construct a parametrization $x\mapsto (z,\beta)$ for every $x\in \man^{'}$, where $\man^{'}\subset \man$ is a certain smooth neighbourhood of $x_i^\vartheta$ (defined explicitly in Section~\ref{subsection:construction of a rotationally-invariant parametrizaion}), such that 
\begin{equation}
x = \mathcal{R}(z,\beta) = z^\beta, \quad\quad\quad z\in\mathcal{N}, \quad\quad\quad \beta\in [0,2\pi),
\end{equation}
and $\mathcal{N}\subset\man^{'}$ is a smooth $(d-1)$-dimensional submanifold.

Next, let us continue with the evaluation of~\eqref{eq:steerable graph laplacian numerator}, and define
\begin{equation}
H_i^\vartheta(x) \triangleq \int_0^{2\pi} \exp{\left\lbrace-{\left\Vert x_i^\vartheta - x^{\varphi}\right\Vert^2}{/\varepsilon}\right\rbrace}f(x^{\varphi})d\varphi. \label{eq:H def}
\end{equation}
Now, using the rotationally-invariant parametrization $x\mapsto (z,\beta)$ for every $x\in \man^{'}$, we can write
\begin{equation}
x^\varphi = \mathcal{R}(x,\varphi) = \mathcal{R}(z^{\beta},\varphi) = z^{\varphi+\beta},
\end{equation}
and thus 
\begin{equation}
H_i^\vartheta(x) = \int_0^{2\pi} \exp{\left\lbrace-{\left\Vert x_i^\vartheta - z^{\varphi+\beta}\right\Vert^2}{/\varepsilon}\right\rbrace}f(z^{\varphi+\beta})d\varphi 
= \int_0^{2\pi} \exp{\left\lbrace-{\left\Vert x_i^\vartheta - z^{\varphi}\right\Vert^2}{/\varepsilon}\right\rbrace}f(z^{\varphi})d\varphi = H_i^\vartheta(z), \label{eq:H_i_theta invariant to beta}
\end{equation}
establishing that $H_i^\vartheta$ is only a function of $z\in\mathcal{N}$ for all points $x\in\man^{'}$. 
Now, since $\left\lbrace x_i \right\rbrace$ are i.i.d samples from $\man$, then by the law of large numbers 
\begin{align}
&\lim_{N\rightarrow\infty} C_{i,N}^1(\vartheta) = \lim_{N\rightarrow\infty} \frac{1}{N}\sum_{j=1}^N H_i^\vartheta(x_j) = \lim_{N\rightarrow\infty} \frac{1}{N}\sum_{j\neq i,j=1}^N H_i^\vartheta(x_j) = \mathbb{E}{\left[ H_i^\vartheta(x) \right]} = \int_{\man} H_i^\vartheta(x) p(x) dx. \label{eq:C1 lim}
\end{align}
By our construction of the rotationally-invariant parametrization, and in particular the set $\man^{'}$ (see Section~\eqref{subsection:construction of a rotationally-invariant parametrizaion}), we have that $\left\Vert x_i^\vartheta - x\right\Vert^2 > (\delta^{'})^2$ for all $x \notin \man^{'}$ and some positive constant $\delta^{'}>0$. Hence
\begin{equation}
\int_{\man} H_i^\vartheta(x) p(x) dx = \int_{\man^{'}} H_i^\vartheta(x) p(x) dx + O\left(e^{-(\delta^{'})^2 /\varepsilon}\right), \label{eq:H_i_theta int replacing M with M_prime}
\end{equation}
and we mention that the exponential term $O\left(e^{-(\delta^{'})^2 /\varepsilon}\right)$ is negligible w.r.t to any polynomial asymptotic expansion in $\varepsilon$, and is therefore omitted in subsequent analysis.

Continuing, we are interested in changing the integration variable from $x$ to $(z,\beta)$, as considered by the following lemma.
\begin{lem} [Integration and volume form on $\man^{'}$] \label{lem:rotation invariant measure}
For any smooth $h:\man^{'}\rightarrow \mathbb{R}$, we have that
\begin{equation} \label{eq:manifold polar parametrization}
\int_{\man^{'}} h(x) dx = \int_{z\in\mathcal{N}}\int_{\beta=0}^{2\pi} h(z^\beta) V(z^\beta) dz d\beta,
\end{equation}
where $V(x)$ is associated with the volume form of $\man^{'}$ at $x$ when integrating by $(z,\beta)$, and is invariant to $\beta$, i.e.
\begin{equation}
V(z^\beta) = V(z) \label{eq:volume form invariance}
\end{equation}
for all $\beta\in [0,2\pi)$.
\end{lem}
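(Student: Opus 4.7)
The plan is to exploit the parametrization $\Psi:\mathcal{N}\times[0,2\pi)\to\man^{'}$ defined by $\Psi(z,\beta)=\mathcal{R}(z,\beta)=z^\beta$, whose existence and smoothness are provided by the construction in Section~\ref{subsection:construction of a rotationally-invariant parametrizaion}. Assuming $\Psi$ is a diffeomorphism onto $\man^{'}$ (up to a measure-zero set), the standard Riemannian change-of-variables formula gives
\begin{equation*}
\int_{\man^{'}} h(x)\,dx = \int_{\mathcal{N}}\int_{0}^{2\pi} h(\Psi(z,\beta))\sqrt{\det G(z,\beta)}\,d\beta\,dz,
\end{equation*}
where $G(z,\beta)$ is the $d\times d$ Gram matrix of the differential $d\Psi_{(z,\beta)}$ taken in local coordinates $(u_1,\ldots,u_{d-1})$ on $\mathcal{N}$ together with the coordinate $\beta$, with respect to the ambient real inner product on $\mathbb{C}^\mathcal{D}\simeq\mathbb{R}^{2\mathcal{D}}$. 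Defining $V(z^\beta)\triangleq\sqrt{\det G(z,\beta)}$ then produces the claimed integral representation~\eqref{eq:manifold polar parametrization}.

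The second step is to verify the rotation invariance $V(z^\beta)=V(z)$. The key observation is that $\mathcal{R}_\alpha:\mathbb{C}^\mathcal{D}\to\mathbb{C}^\mathcal{D}$, $x\mapsto x^\alpha$, is unitary (since each coordinate is multiplied by $e^{\imath m\alpha}$, which has unit modulus), and is therefore a real-linear isometry of $\mathbb{R}^{2\mathcal{D}}$. From the group structure of $\mathcal{R}$ one has the intertwining identity $\Psi(z,\beta+\alpha)=\mathcal{R}_\alpha(\Psi(z,\beta))$. Differentiating both sides, and using that the shift $(z,\beta)\mapsto(z,\beta+\alpha)$ has identity Jacobian, one obtains
\begin{equation*}
d\Psi_{(z,\beta+\alpha)}=d\mathcal{R}_\alpha\circ d\Psi_{(z,\beta)}.
\end{equation*}
Since $d\mathcal{R}_\alpha$ preserves the ambient inner product, the Gram matrices $G(z,\beta+\alpha)$ and $G(z,\beta)$ coincide, so $V(z^{\beta+\alpha})=V(z^\beta)$ for every $\alpha$. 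Setting $\beta=0$ yields $V(z^\alpha)=V(z)$ for all $\alpha\in[0,2\pi)$, which is exactly~\eqref{eq:volume form invariance}.

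The bulk of the work is not the computation itself but checking the prerequisites on the parametrization that are inherited from Section~\ref{subsection:construction of a rotationally-invariant parametrizaion}: one needs $\Psi$ to be injective on $\mathcal{N}\times[0,2\pi)$ (so that no overcounting occurs), and one needs $d\Psi$ to have full rank everywhere, which amounts to requiring that the orbit-direction $\partial_\beta \Psi(z,\beta)$ is never tangent to the image of the $\mathcal{N}$-differential. This non-degeneracy is precisely what is guaranteed by the assumption $\sum_{m\neq 0}\sum_{\ell=1}^{\ell_m}\left\vert x_{m,\ell}\right\vert^2>0$ in Theorem~\ref{thm:steerable graph Laplacian convergence}, which ensures that the rotation $\mathcal{R}$ acts non-trivially on every point of $\man$. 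Once this is secured, the invariance step follows cleanly from the unitarity of $\mathcal{R}_\alpha$, and the lemma is established.
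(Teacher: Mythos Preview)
Your proof is correct and follows the same overall strategy as the paper: set up the change-of-variables via $\Psi(z,\beta)=z^\beta$, define $V$ as the square root of the Gram determinant of $d\Psi$, and then argue that this Gram matrix is independent of $\beta$. The only difference is in how the $\beta$-independence is verified. The paper writes out the Jacobian columns explicitly using $x_{m,\ell}=z_{m,\ell}e^{\imath m\beta}$, computes $\partial x_{m,\ell}/\partial u_i$ and $\partial x_{m,\ell}/\partial\beta$, and observes that the phases $e^{\imath m\beta}$ cancel entrywise in $\operatorname{Re}\{J^*J\}$. Your argument is the coordinate-free version of exactly this cancellation: you recognize that $\mathcal{R}_\alpha$ is unitary on $\mathbb{C}^{\mathcal{D}}$, hence a real isometry, and use the intertwining relation $\Psi(z,\beta+\alpha)=\mathcal{R}_\alpha\Psi(z,\beta)$ to conclude $G(z,\beta+\alpha)=G(z,\beta)$. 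This is cleaner and makes clear that the result would hold for any action by ambient isometries, whereas the paper's computation is tied to the specific diagonal form of $\mathcal{R}$. Both arguments are equivalent in content; yours just names the structural reason behind the cancellation the paper performs by hand.
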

The proof of Lemma~\ref{lem:rotation invariant measure} is provided in Section~\ref{subsection:angle invariant measure lemma proof }.
Hence, By Lemma~\ref{lem:rotation invariant measure}, equations~\eqref{eq:H_i_theta invariant to beta}~--~\eqref{eq:H_i_theta int replacing M with M_prime}, and the fact that $p(x)={1}/{\operatorname{Vol}\left\lbrace\man\right\rbrace}$, we have
\begin{equation}
\lim_{N\rightarrow\infty} C_{i,N}^1(\vartheta) = \int_{\mathcal{N}} \int_0^{2\pi} H_i^\vartheta(z) V(z) p(z^\beta) dz d\beta = \frac{2\pi}{\operatorname{Vol}\left\lbrace\man\right\rbrace} \int_{\mathcal{N}} H_i^\vartheta(z) V(z) dz. \label{eq:C1 lim in z}
\end{equation}
Then, by substituting~\eqref{eq:H def} into~\eqref{eq:C1 lim in z} we obtain
\begin{equation}
\lim_{N\rightarrow\infty} C_{i,N}^1(\vartheta) = \frac{2\pi}{\operatorname{Vol}\left\lbrace\man\right\rbrace} \int_{\mathcal{N}} \int_0^{2\pi} \exp{\left\lbrace-{\left\Vert x_i^\vartheta - z^\varphi\right\Vert^2}{/\varepsilon}\right\rbrace}f(z^\varphi) V(z) dz d\varphi. \label{eq:steerable graph laplacian lim integral chg var}
\end{equation}
Eventually, if we change parametrization from $(z,\varphi)$ back to $x$, via $x=z^\varphi$, then by Lemma~\ref{lem:rotation invariant measure} we arrive at
\begin{align}
\lim_{N\rightarrow\infty} C_{i,N}^1(\vartheta) &= \frac{2\pi}{\operatorname{Vol}\left\lbrace\man\right\rbrace} \int_{\man^{'}}  \exp{\left\lbrace-{\left\Vert x_i^\vartheta - x \right\Vert^2}{/\varepsilon}\right\rbrace}f(x) dx \nonumber \\
&= \frac{2\pi}{\operatorname{Vol}\left\lbrace\man\right\rbrace} \int_{\man}  \exp{\left\lbrace-{\left\Vert x_i^\vartheta - x \right\Vert^2}{/\varepsilon}\right\rbrace}f(x) dx + O\left(e^{-(\delta^{'})^2 /\varepsilon}\right) \nonumber \\
& = \frac{2\pi}{\operatorname{Vol}\left\lbrace\man\right\rbrace} \int_{\man}  \exp{\left\lbrace-{\left\Vert x_i^\vartheta - x \right\Vert^2}{/\varepsilon}\right\rbrace}f(x) dx, \label{eq:num lim}
\end{align}
where we again used the fact that $\left\Vert x_i^\vartheta - x\right\Vert^2 > (\delta^{'})^2$ for all $x \notin \man^{'}$ (see Section~\eqref{subsection:construction of a rotationally-invariant parametrizaion}), and omitted the resulting $ O\left(e^{-(\delta^{'})^2 /\varepsilon}\right)$ term.

In a similar fashion, if we consider the denominator of the second term in~\eqref{eq:steerable graph laplacian fraction}
\begin{equation}
C_{i,N}^2(\vartheta) \triangleq \frac{1}{N}\sum_{j=1}^N \int_0^{2\pi} \exp{\left\lbrace-{\left\Vert x_i^\vartheta - x_j^\varphi\right\Vert^2}{/\varepsilon}\right\rbrace} d\varphi, \label{eq:C2 def}
\end{equation}
and by repeating the calculations for $C_{i,N}^1(\vartheta)$ with $f\equiv 1$, we get that
\begin{equation}
\lim_{N\rightarrow\infty} C_{i,N}^2(\vartheta) = \frac{2\pi}{\operatorname{Vol}\left\lbrace\man\right\rbrace} \int_{\man}  \exp{\left\lbrace-{\left\Vert x_i^\vartheta - x \right\Vert^2}{/\varepsilon}\right\rbrace} dx. \label{eq:denom lim}
\end{equation}

Lastly, if we substitute~\eqref{eq:num lim} and~\eqref{eq:denom lim} into~\eqref{eq:steerable graph laplacian fraction}, we have that
\begin{align}
\lim_{N\rightarrow\infty} \frac{4}{\varepsilon} \left\{\tilde{L}g\right\}(i,\vartheta) &=  \frac{4}{\varepsilon}\left[f(x_i^{\vartheta}) -  \frac{\frac{1}{\operatorname{Vol}\left\lbrace\man\right\rbrace} \int_{\man}  \exp{\left\lbrace-{\left\Vert x_i^{\vartheta} - x\right\Vert^2}{/\varepsilon}\right\rbrace}f(x) dx}{\frac{1}{\operatorname{Vol}\left\lbrace\man\right\rbrace} \int_{\man}  \exp{\left\lbrace-{\left\Vert x_i^{\vartheta} - x\right\Vert^2}{/\varepsilon}\right\rbrace} dx}\right] \label{eq:fraction limit}
\\ &= \Delta_\man f(x_i^\vartheta) + O(\varepsilon), \label{eq:bias error in proof}
\end{align}
where the last simplification (eq.~\eqref{eq:bias error in proof}) is justified in~\cite{singer2006graph}.

\subsection{The variance term} \label{subsection:the variance term}
The variance error term in the convergence of the steerable graph Laplacian to the Laplace Beltrami operator arises from the discrepancy between the values of $C_{i,N}^1(\vartheta)$, $C_{i,N}^2(\vartheta)$ for finite $N$, and their limits~\eqref{eq:num lim},~\eqref{eq:denom lim}, respectively, when $N\rightarrow\infty$. 
To prove the improved convergence rate of steerable graph Laplacian, we follow the technique used in~\cite{singer2006graph} which makes use of the Chernoff tail inequality. Such an inequality provides a bound for the probability that a sum of random variables deviates from its mean by a certain quantity. 

Let us begin by defining
\begin{equation}
G_i^\vartheta(x) \triangleq \int_0^{2\pi} \exp{\left\lbrace-{\left\Vert x_i^\vartheta - x^\varphi \right\Vert^2}{/\varepsilon}\right\rbrace}d\varphi. \label{eq:G def}
\end{equation}
Then, we are interested in evaluating the probabilities
\begin{align}
p_+(N,\alpha) &= Pr\left\lbrace \frac{\sum_{j\neq i}^N H_i^\vartheta(x_j)}{\sum_{j\neq i}^N G_i^\vartheta(x_j)}  - \frac{\mathbb{E}\left[H_i^\vartheta\right]}{\mathbb{E}\left[G_i^\vartheta\right]} > \alpha\right\rbrace, \\ \quad p_-(N,\alpha)  &= 
Pr\left\lbrace \frac{\sum_{j\neq i}^N H_i^\vartheta(x_j)}{\sum_{j\neq i}^N G_i^\vartheta(x_j)}  - \frac{\mathbb{E}\left[H_i^\vartheta\right]}{\mathbb{E}\left[G_i^\vartheta\right]} < -\alpha\right\rbrace,
\end{align}
where we mention that the use of ${\sum_{j\neq i}^N H_i^\vartheta(x_j)}/{\sum_{j\neq i}^N G_i^\vartheta(x_j)}$ instead of ${\sum_{j=1}^N H_i^\vartheta(x_j)}/{\sum_{j=1}^N G_i^\vartheta(x_j)}$ (i.e. without the diagonal) is justified at the end of the proof.
We proceed by evaluating $p_+(N,\alpha)$, as $p_-(N,\alpha)$ can be obtained in a similar fashion.
As was shown in~\cite{singer2006graph}, $p_+(N,\alpha)$ can be rewritten as 
\begin{equation}
p_+(N,\alpha) = Pr\left\lbrace \sum_{j\neq i}^N J_i^\vartheta(x_j) > \alpha(N-1)\left(\mathbb{E}\left[G_i^\vartheta\right]\right)^2\right\rbrace,
\end{equation}
where $J_i^\vartheta(x_j)$ are zero-mean i.i.d random variables (indexed by $j$), given by
\begin{equation}
J_i^\vartheta(x_j) \triangleq \mathbb{E}\left[G_i^\vartheta\right] H_i^\vartheta(x_j) - \mathbb{E}\left[H_i^\vartheta\right] G_i^\vartheta(x_j) + \alpha\mathbb{E}\left[G_i^\vartheta\right]\left(\mathbb{E}\left[G_i^\vartheta\right] - G_i^\vartheta(x_j) \right).
\end{equation}

At this point, making use of Chernoff's inequality gives
\begin{equation}
p_+(N,\alpha) \leq \exp{\left\{ -\frac{\alpha^2(N-1)^2\left(\mathbb{E}\left[G_i^\vartheta\right]\right)^4}{2 (N-1) \mathbb{E}\left[\left(J_i^\vartheta \right)^2\right] + O(\alpha)}\right\}},
\end{equation}
and it remains to evaluate the variance term $\mathbb{E}\left[\left(J_i^\vartheta \right)^2\right]$, which can be expressed as
\begin{equation}
\mathbb{E}\left[\left(J_i^\vartheta \right)^2\right] = \left(\mathbb{E}\left[G_i^\vartheta\right]\right)^2 \mathbb{E}\left[\left(H_i^\vartheta \right)^2\right]
 - 2\mathbb{E}\left[G_i^\vartheta\right]\mathbb{E}\left[H_i^\vartheta\right]\mathbb{E}\left[H_i^\vartheta G_i^\vartheta\right]
  + \left(\mathbb{E}\left[H_i^\vartheta\right]\right)^2\mathbb{E}\left[\left(G_i^\vartheta \right)^2\right] + O(\alpha). \label{eq:J_i_theta asym}
\end{equation}
Now, the integral expressions of $\mathbb{E}\left[H_i^\vartheta\right]$ and $\mathbb{E}\left[G_i^\vartheta\right]$ (obtained in the previous section in equations~\eqref{eq:num lim} and~\eqref{eq:denom lim}, respectively), admit asymptotic expansions via the following proposition.
\begin{prop}{\cite{singer2006graph}} \label{prop:asym analysis}
Let $\widetilde{\man}$ be a smooth and compact $\tilde{d}$-dimensional submanifold, and let $\tilde{f}:\widetilde{\man}\rightarrow \mathbb{R}$ be a smooth function. Then, for any $y\in\widetilde{\man}$
\begin{equation}
\left(\pi \varepsilon\right)^{-\tilde{d}/2} \int_{\widetilde{\man}} \exp{\left\lbrace-{\left\Vert  y - x\right\Vert^2}{/\varepsilon}\right\rbrace} \tilde{f}(x)dx = \tilde{f}(y) + \frac{\varepsilon}{4}\left[ E(y)\tilde{f}(y) + \Delta_{\widetilde{\man}}\tilde{f}(y)\right] + O(\varepsilon^2), \label{eq:asym analysis general int formula} 
\end{equation}
where $E(y)$ is a scalar function of the curvature of $\widetilde{\man}$ at $y$.
\end{prop}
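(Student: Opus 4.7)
The plan is to establish the expansion by localization followed by a careful Taylor expansion in local coordinates on $\widetilde{\man}$. First, since the Gaussian factor $\exp\{-\|y-x\|^2/\varepsilon\}$ decays exponentially in the extrinsic distance, I would fix a small geodesic ball $B_\delta(y)\subset\widetilde{\man}$ and replace the integral over $\widetilde{\man}$ by the integral over $B_\delta(y)$; by compactness of $\widetilde{\man}$ together with the smooth embedding, points outside $B_\delta(y)$ satisfy $\|y-x\|\geq c(\delta)>0$, so the discarded piece is $O(e^{-c(\delta)^2/\varepsilon})$, which is negligible against any polynomial expansion in $\varepsilon$.

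Next, I would introduce normal coordinates $u\in\mathbb{R}^{\tilde d}$ on $\widetilde{\man}$ centered at $y$, so that the parametrization $x=x(u)$ satisfies $x(0)=y$ and the columns of $dx(0)$ form an orthonormal basis of $T_y\widetilde{\man}$. In these coordinates the volume element expands as $\sqrt{\det g(u)}\,du = \bigl(1-\tfrac{1}{6}\mathrm{Ric}_{\alpha\beta}(y)u^\alpha u^\beta + O(|u|^3)\bigr)\,du$, and the squared extrinsic chord satisfies
\begin{equation*}
\|y-x(u)\|^2 = |u|^2 - \tfrac{1}{12}\bigl\|\Pi_y(u,u)\bigr\|^2 + O(|u|^5),
\end{equation*}
where $\Pi_y$ is the second fundamental form of $\widetilde{\man}\hookrightarrow\mathbb{R}^{\mathcal{D}}$ at $y$; the latter quartic correction is the sole source of the extrinsic-curvature contribution to $E(y)$. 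The integrand $\tilde f(x(u))$ is Taylor-expanded in the same way, $\tilde f(x(u)) = \tilde f(y)+\partial_\alpha\tilde f(y)u^\alpha + \tfrac12\partial_{\alpha\beta}\tilde f(y)u^\alpha u^\beta + O(|u|^3)$, where partial derivatives are along the normal coordinates (so that in particular $g^{\alpha\beta}(y)\partial_{\alpha\beta}\tilde f(y) = \Delta_{\widetilde{\man}}\tilde f(y)$).

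The workhorse step is the rescaling $u=\sqrt{\varepsilon}\,v$, which converts the localized integral into a standard Gaussian over $\mathbb{R}^{\tilde d}$ up to exponentially small boundary errors. Odd monomials vanish by symmetry, while the even Gaussian moments give
\begin{equation*}
(\pi\varepsilon)^{-\tilde d/2}\int_{\mathbb{R}^{\tilde d}} e^{-|v|^2} v^\alpha v^\beta\,dv = \tfrac{\varepsilon}{2}\delta_{\alpha\beta},
\end{equation*}
and analogously for the quartic moments needed from the chord-length and volume-form corrections. Expanding $\exp\{-\|y-x(u)\|^2/\varepsilon\} = e^{-|v|^2}\bigl(1+\tfrac{\varepsilon}{12}\|\Pi_y(v,v)\|^2 + \cdots\bigr)$ and collecting terms of order $\varepsilon^0$ yields $\tilde f(y)$, while the $\varepsilon^1$ contributions split into three pieces: the trace of the intrinsic Hessian of $\tilde f$, which assembles into $\tfrac{\varepsilon}{4}\Delta_{\widetilde{\man}}\tilde f(y)$; the Ricci contribution from the volume form acting on $\tilde f(y)$; and the second-fundamental-form contribution from the chord-length correction acting on $\tilde f(y)$. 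The latter two are purely multiplicative on $\tilde f(y)$ and combine into $\tfrac{\varepsilon}{4}E(y)\tilde f(y)$.

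The main obstacle is the careful bookkeeping in the third paragraph: verifying that the quartic pieces from the extrinsic chord and from the volume form yield coefficient exactly $\tfrac{1}{4}$ in front of $\Delta_{\widetilde{\man}}\tilde f(y)$, with no spurious contribution mixing into the Laplace-Beltrami coefficient, and that all higher-order remainders contribute only $O(\varepsilon^2)$ uniformly in $y$ (using smoothness of $\tilde f$ and compactness of $\widetilde{\man}$ to bound the Taylor remainders). Once the coefficients are checked, the scalar $E(y)$ is identified as the manifest combination of the mean-curvature-squared and Ricci-trace terms, which is indeed a function of the curvature of $\widetilde{\man}$ at $y$; since this is the standard Laplace-type expansion used in Singer's analysis of the graph Laplacian, I would also reference that derivation to avoid reproducing the routine Gaussian-moment calculus in full detail.
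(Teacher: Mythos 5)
The paper does not actually prove this proposition---it is imported directly from \cite{singer2006graph}---and your sketch is precisely the standard derivation underlying that reference (and the analogous expansions in the diffusion-maps literature): exponential localization to a geodesic ball, normal coordinates with the chord-length correction $-\tfrac{1}{12}\|\Pi_y(u,u)\|^2$ and the Ricci correction to the volume form, rescaling $u=\sqrt{\varepsilon}\,v$, and Gaussian moment calculus, with the odd moments killing the cross terms so that only $\tfrac{\varepsilon}{4}\Delta_{\widetilde{\man}}\tilde f(y)$ and the multiplicative curvature term $\tfrac{\varepsilon}{4}E(y)\tilde f(y)$ survive at order $\varepsilon$. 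The one quibble is notational: your displayed second-moment identity should read $(\pi\varepsilon)^{-\tilde{d}/2}\int_{\mathbb{R}^{\tilde{d}}} e^{-|u|^2/\varepsilon}\,u^\alpha u^\beta\,du=\tfrac{\varepsilon}{2}\delta_{\alpha\beta}$; as written with the integrand $e^{-|v|^2}$ and measure $dv$ it omits the Jacobian $\varepsilon^{\tilde{d}/2}$ and the factor $\varepsilon$ from $u^\alpha u^\beta=\varepsilon v^\alpha v^\beta$, though the coefficient $\tfrac{\varepsilon}{2}\delta_{\alpha\beta}$ you go on to use is the correct one.
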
 
Then, it follows from Proposition~\ref{prop:asym analysis} (see also~\cite{coifman2006diffusion,belkin2008towards}) that
\begin{align}
\mathbb{E}\left[H_i^\vartheta\right] &= \frac{2\pi}{\operatorname{Vol}\left\lbrace\man\right\rbrace} \int_{\man}  \exp{\left\lbrace-{\left\Vert x_i^\vartheta - x \right\Vert^2}{/\varepsilon}\right\rbrace}f(x) dx = 
\frac{2\pi}{\operatorname{Vol}(\man)} (\pi\varepsilon)^{d/2} \left[ f(x_i^\vartheta) + O(\varepsilon) \right], \label{eq:H_i_theta asym} \\ 
\mathbb{E}\left[G_i^\vartheta\right] &= \frac{2\pi}{\operatorname{Vol}\left\lbrace\man\right\rbrace} \int_{\man}  \exp{\left\lbrace-{\left\Vert x_i^\vartheta - x \right\Vert^2}{/\varepsilon}\right\rbrace} dx = \frac{2\pi}{\operatorname{Vol}(\man)} (\pi\varepsilon)^{d/2} \left[1 + O(\varepsilon)\right]. \label{eq:G_i_theta asym}
\end{align}
Thus, it remains to evaluate the second order moments $\mathbb{E}\left[\left(H_i^\vartheta \right)^2\right]$,$\mathbb{E}\left[\left(G_i^\vartheta \right)^2\right]$, and $\mathbb{E}\left[H_i^\vartheta G_i^\vartheta\right]$, which is the subject of the next lemma, whose proof is given in Section~\ref{subsection:proof of lemma second order moments asymptotics}.
\begin{lem} \label{lem:second order moments asymptotics}
Let $\mu(x)\triangleq\sqrt{\sum_{m=-M}^M \sum_{\ell=1}^{\ell_m} m^2 \left\vert x_{m,\ell}^2 \right\vert^2} >0$ for every $x\in\mathcal{N}$. Then
\begin{align}
\mathbb{E}\left[\left(H_i^\vartheta \right)^2\right] &= \frac{\left( \pi \varepsilon\right)^{(d+1)/2}}{2^{(d-1)/2} } \left[\frac{f^2(x_i^{\vartheta}) p_{\mathcal{N}}(x_i^{\vartheta})}{\mu^2(x_i^{\vartheta})} +O(\varepsilon) \right], \\
\mathbb{E}\left[\left(G_i^\vartheta \right)^2\right] &= \frac{\left( \pi \varepsilon\right)^{(d+1)/2}}{2^{(d-1)/2} } \left[\frac{p_{\mathcal{N}}(x_i^{\vartheta})}{\mu^2(x_i^{\vartheta})} +O(\varepsilon) \right], \\
\mathbb{E}\left[H_i^\vartheta G_i^\vartheta\right] &= \frac{\left( \pi \varepsilon\right)^{(d+1)/2}}{2^{(d-1)/2} } \left[\frac{f(x_i^{\vartheta}) p_{\mathcal{N}}(x_i^{\vartheta})}{\mu^2(x_i^{\vartheta})} +O(\varepsilon) \right],
\end{align} 
where $p_{\mathcal{N}}(x) = 2\pi V(x)/\operatorname{Vol}\left\{\man\right\}$ (and $V(x)$ is from Lemma~\ref{lem:rotation invariant measure}).
\end{lem}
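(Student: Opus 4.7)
The plan is to use Lemma~\ref{lem:rotation invariant measure} to reduce each second moment to a concentrated $(d{+}1)$-dimensional Gaussian integral over $\man' \times [0,2\pi)$, and then apply Laplace's method at its unique critical point.

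I would start from $\mathbb{E}[(H_i^\vartheta)^2] = \frac{1}{\operatorname{Vol}\{\man\}}\int_\man (H_i^\vartheta(x))^2 dx$, restrict to $\man'$ up to the usual exponentially small tail, and apply Lemma~\ref{lem:rotation invariant measure} together with the invariance $H_i^\vartheta(z^\beta) = H_i^\vartheta(z)$ from~\eqref{eq:H_i_theta invariant to beta} to obtain $\tfrac{2\pi}{\operatorname{Vol}\{\man\}}\int_\mathcal{N} V(z)(H_i^\vartheta(z))^2 dz$. Next, expanding $(H_i^\vartheta(z))^2$ as a double integral over $(\varphi_1,\varphi_2)$, substituting $\alpha = \varphi_2-\varphi_1$, and re-applying Lemma~\ref{lem:rotation invariant measure} in the reverse direction with $x = z^{\varphi_1}$ (so that $z^{\varphi_2}= x^\alpha$) folds the $z$- and $\varphi_1$-integrations into a single integral over $\man'$, giving
\begin{equation*}
\mathbb{E}[(H_i^\vartheta)^2] = \frac{2\pi}{\operatorname{Vol}\{\man\}}\int_0^{2\pi}\!\!d\alpha \int_{\man'} e^{-\psi(x,\alpha)/\varepsilon}\, f(x) f(x^\alpha)\, dx, \qquad \psi(x,\alpha) \triangleq \|u-x\|^2 + \|u-x^\alpha\|^2,
\end{equation*}
where $u \triangleq x_i^\vartheta$. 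The formulas for $\mathbb{E}[(G_i^\vartheta)^2]$ and $\mathbb{E}[H_i^\vartheta G_i^\vartheta]$ are identical with the amplitude $f(x)f(x^\alpha)$ replaced by $1$ and $f(x)$, respectively.

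I would then apply Laplace's method at the unique critical point $(x,\alpha)=(u,0)$, where $\psi$ vanishes. Let $T$ denote the generator of the rotation action, $[T(x)]_{m,\ell}=\imath m x_{m,\ell}$, so that $x^\alpha = x + \alpha T(x)+O(\alpha^2)$ and $\|T(u)\|^2 = \mu^2(u)$ (note $T(u)\in T_u\man$ as the tangent to the orbit). Choose normal coordinates $(s_1,\ldots,s_d)$ on $\man$ at $u$ with $s_d$ along $T(u)/\mu(u)$ and $(s_1,\ldots,s_{d-1})$ orthonormal in $T_u\mathcal{N}$. A direct Taylor expansion, using that $\operatorname{Re}\langle w, T(w)\rangle = 0$ for $w\in\CD$, gives
\begin{equation*}
\psi(x,\alpha) = 2\sum_{i=1}^{d-1} s_i^2 + 2\bigl(s_d + \tfrac{\alpha\mu(u)}{2}\bigr)^{2} + \frac{\alpha^2 \mu^2(u)}{2} + O\bigl(|s|^3+\alpha|s|^2+\alpha^3\bigr).
\end{equation*}
The leading Gaussian factorizes: the $d$ integrals in $s$ yield $(\pi\varepsilon/2)^{d/2}$, and the $\alpha$-Gaussian yields $\sqrt{2\pi\varepsilon}/\mu(u)$, for a total of $(\pi\varepsilon)^{(d+1)/2}/(2^{(d-1)/2}\mu(u))$. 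The slowly varying amplitude evaluates to its value at the critical point ($f^2(u)$, $1$, or $f(u)$, respectively).

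Finally, to match the stated form I would use that in the parametrization where $\mathcal{N}$ is taken orthogonal to the orbits with $s_{1:d-1}$ orthonormal on $\mathcal{N}$, the induced volume form on $\man$ at $u$ satisfies $dx|_u = \mu(u)\, dz\, d\beta$, so $V(u) = \mu(u)$ and $\tfrac{2\pi}{\operatorname{Vol}\{\man\}\mu(u)} = \tfrac{p_\mathcal{N}(u)}{\mu^2(u)}$. Combining with the amplitude value at the critical point delivers all three formulas. I expect the main obstacle to be the careful expansion of $\psi$ to quadratic order while tracking the cross term $2\alpha\mu(u)s_d$: completing the square in $s_d$ is precisely what both eliminates the $(s_d,\alpha)$-coupling and leaves the residual $\alpha^2\mu^2(u)/2$ responsible for the decisive $1/\mu^2$ factor in the final answer; conceptually, the orbit direction absorbs one of the $d$ extrinsic coordinates, which is exactly the intuitive origin of the $d{-}1$ improvement seen in Theorem~\ref{thm:steerable graph Laplacian convergence}. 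The subleading $O(\varepsilon)$ corrections then arise from the next-order Taylor terms in $\psi$, the amplitude, and the metric correction of the induced volume form, and are controlled by the same asymptotic machinery as in Proposition~\ref{prop:asym analysis}.
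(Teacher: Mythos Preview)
Your approach is correct and genuinely different from the paper's. The paper proceeds \emph{sequentially}: it first derives an asymptotic for $H_i^\vartheta(z)$ itself by decomposing $\|x_i^\vartheta - z^\varphi\|^2 = \|x_i^\vartheta - z\|^2 + \delta_i^\vartheta(z,z^\varphi) + \|z-z^\varphi\|^2$, Taylor-expanding $e^{-\delta_i^\vartheta/\varepsilon}$, and applying Proposition~\ref{prop:asym analysis} to the resulting one-dimensional integrals along the orbit $\mathcal{C}_z$ (yielding the factor $\sqrt{\pi\varepsilon}/\mu(z)$); it then squares and integrates over the $(d-1)$-dimensional slice~$\mathcal{N}$, applying Proposition~\ref{prop:asym analysis} a second time with Gaussian width $\varepsilon/2$. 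The orthogonality $T(u)\perp T_u\mathcal{N}$ enters there to show $\nabla_{\mathcal{C}_z}\delta_i^\vartheta(z,x)\big|_{x=z}=0$, which forces the auxiliary function $q$ to vanish at $x_i^\vartheta$. By contrast, you fold both integrations into a single $(d{+}1)$-dimensional Laplace integral over $\man'\times[0,2\pi)$ via the reverse change of variables in Lemma~\ref{lem:rotation invariant measure}, compute the Hessian of the combined phase $\psi$ once, and read off the answer; the same orthogonality enters as the block-diagonality of that Hessian after completing the square in~$s_d$. Your route is more compact and makes the origin of the $1/\mu^2$ factor and of the $d-1$ improvement transparent; the paper's route is more explicit about the $O(\varepsilon)$ remainder, since every integral is literally of the form covered by Proposition~\ref{prop:asym analysis}, whereas you invoke the general Laplace expansion (which is standard but not exactly Proposition~\ref{prop:asym analysis}).

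Two small remarks. First, your claim $T(u)\perp T_u\mathcal{N}$ is correct but deserves one line: differentiating the constraint~\eqref{eq:rot-inv parametrization linear subspace} gives $T_u\mathcal{N}=\{v\in T_u\man:\operatorname{Re}\langle T(v),u\rangle=0\}$, and the identity $\operatorname{Re}\langle T(v),u\rangle=-\operatorname{Re}\langle T(u),v\rangle$ shows this is exactly $T(u)^\perp\cap T_u\man$. Second, your cubic remainder should also include an $O(\alpha^2|s|)$ term (from $\langle s,T^2(u)\rangle$ and $\langle T(u),T(s)\rangle$), though this is harmless under Laplace scaling.
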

We mention that since we required (in Theorem~\ref{thm:steerable graph Laplacian convergence}) that $\sum_{m\neq 0}\sum_{\ell=1}^{\ell_m} \left\vert x_{m,\ell}\right\vert^2 >0 $ for all $x\in\man$ (up to a set of measure zero on $\man$), then it is also the case that $\mu(x)>0$ for all $x\in\mathcal{N}$ with probability one.

Now, using Lemma~\ref{lem:second order moments asymptotics},~\eqref{eq:H_i_theta asym},~\eqref{eq:G_i_theta asym}, and substituting all quantities into~\eqref{eq:J_i_theta asym}, we get that
\begin{align}
\mathbb{E}\left[\left(J_i^\vartheta \right)^2\right] &= \left(\frac{2\pi}{\operatorname{Vol}(\man)}\right)^2 \left( \pi \varepsilon\right)^d \frac{\left( \pi \varepsilon\right)^{(d+1)/2}}{2^{(d-1)/2} } \left[\frac{f^2(x_i^{\vartheta}) p_{\mathcal{N}}(x_i^{\vartheta})}{\mu^2(x_i^{\vartheta})} +O(\varepsilon) \right] \nonumber \\
 &-2 \left(\frac{2\pi}{\operatorname{Vol}(\man)}\right)^2 \left( \pi \varepsilon\right)^d \frac{\left( \pi \varepsilon\right)^{(d+1)/2}}{2^{(d-1)/2} } \left[\frac{f^2(x_i^{\vartheta}) p_{\mathcal{N}}(x_i^{\vartheta})}{\mu^2(x_i^{\vartheta})} +O(\varepsilon) \right] \nonumber \\
 &+ \left(\frac{2\pi}{\operatorname{Vol}(\man)}\right)^2 \left( \pi \varepsilon\right)^d \frac{\left( \pi \varepsilon\right)^{(d+1)/2}}{2^{(d-1)/2} } \left[\frac{f^2(x_i^{\vartheta}) p_{\mathcal{N}}(x_i^{\vartheta})}{\mu^2(x_i^{\vartheta})} +O(\varepsilon) \right]  + O(\alpha) \nonumber \\
 &= \left(\frac{2\pi}{\operatorname{Vol}(\man)}\right)^2 \left( \pi \varepsilon\right)^d \frac{\left( \pi \varepsilon\right)^{(d+1)/2}}{2^{(d-1)/2} } \cdot O(\varepsilon) + O(\alpha) = O(\varepsilon^{3d/2 + 3/2}) + O(\alpha).
\end{align}
Additionally, from~\eqref{eq:G_i_theta asym} we have
\begin{equation}
\mathbb{E}\left(\left[G_i^\vartheta \right]\right)^4 = O(\varepsilon^{2d}),
\end{equation}
and thus
\begin{equation}
p_+(N,\alpha) \leq \exp{\left\{ -\frac{\alpha^2}{O(\varepsilon^{-d/2+3/2}/N) + O(\alpha)}\right\}}.
\end{equation}
Henceforth, by taking $\alpha=O(\varepsilon^{-d/4+3/4}/\sqrt{N})$ we can make $p_+(N,\alpha)$ arbitrarily small with exponential decay. Additionally, we mention that $p_-(N,\alpha)$ leads to the same asymptotic expression. Therefore, it follows that with high probability
\begin{equation}
\left\vert \frac{\sum_{j\neq i}^N H_i^\vartheta(x_j)}{\sum_{j\neq i}^N G_i^\vartheta(x_j)}  - \frac{\mathbb{E}\left[H_i^\vartheta\right]}{\mathbb{E}\left[G_i^\vartheta\right]} \right\vert = \left\vert \alpha\right\vert = O(\frac{\varepsilon^{-d/4+3/4}}{{N}^{1/2}}) = O(\frac{1}{{N}^{1/2} \varepsilon^{(d-1)/4-1/2}}).
\end{equation}
Continuing, we can write (using~\eqref{eq:bias error in proof})
\begin{align}
&\frac{4}{\varepsilon}\left\vert \frac{\sum_{j\neq i}^N H_i^\vartheta(x_j)}{\sum_{j\neq i}^N G_i^\vartheta(x_j)} - \frac{\mathbb{E}\left[H_i^\vartheta\right]}{\mathbb{E}\left[G_i^\vartheta\right]} \right\vert 
= \left\vert \frac{4}{\varepsilon}\left( f(x_i^\vartheta) - \frac{\sum_{j\neq i}^N H_i^\vartheta(x_j)}{\sum_{j\neq i}^N G_i^\vartheta(x_j)}\right)  - \frac{4}{\varepsilon}\left( f(x_i^\vartheta) - \frac{\mathbb{E}\left[H_i^\vartheta\right]}{\mathbb{E}\left[G_i^\vartheta\right]} \right) \right\vert \nonumber \\
&= \left\vert \frac{4}{\varepsilon}\left( f(x_i^\vartheta) - \frac{\sum_{j\neq i}^N H_i^\vartheta(x_j)}{\sum_{j\neq i}^N G_i^\vartheta(x_j)} \right)  - \left( \Delta_\man f(x_i^\vartheta) +O(\varepsilon) \right) \right\vert = O(\frac{1}{{N}^{1/2} \varepsilon^{(d-1)/4+1/2}}),
\end{align}
which gives us that
\begin{equation}
\frac{4}{\varepsilon}\left( f(x_i^\vartheta) - \frac{\sum_{j\neq i}^N H_i^\vartheta(x_j)}{\sum_{j\neq i}^N G_i^\vartheta(x_j)} \right) = \Delta_\man f(x_i^\vartheta) + O(\frac{1}{{N}^{1/2} \varepsilon^{(d-1)/4-1/2}}) +O(\varepsilon). \label{eq:asym variance error without diag}
\end{equation}
The last step of the proof is to justify that removing the diagonal of the steerable affinity operator $W$ (i.e. computing all sums with $j\neq i$) does not change the convergence rate. Indeed, this is the case since
\begin{align}
&\frac{\sum_{j=1}^N H_i^\vartheta(x_j)}{\sum_{j=1}^N G_i^\vartheta(x_j)} - \frac{\sum_{j\neq i}^N H_i^\vartheta(x_j)}{\sum_{j\neq i}^N G_i^\vartheta(x_j)} 
= \frac{\sum_{j=1}^N H_i^\vartheta(x_j)}{\sum_{j=1}^N G_i^\vartheta(x_j)} - \frac{\sum_{j\neq i}^N H_i^\vartheta(x_j)}{\sum_{j=1}^N G_i^\vartheta(x_j)} \frac{\sum_{j=1}^N G_i^\vartheta(x_j)}{\sum_{j\neq i}^N G_i^\vartheta(x_j)} \nonumber \\
& = \frac{\sum_{j=1}^N H_i^\vartheta(x_j)}{\sum_{j=1}^N G_i^\vartheta(x_j)} - \frac{\sum_{j\neq i}^N H_i^\vartheta(x_j)}{\sum_{j=1}^N G_i^\vartheta(x_j)} \left( 1 + \frac{ G_i^\vartheta(x_i)}{\sum_{j\neq i}^N G_i^\vartheta(x_j)} \right) 
= \frac{H_i^\vartheta(x_i)}{\sum_{j=1}^N G_i^\vartheta(x_j)} - \frac{ G_i^\vartheta(x_i) }{\sum_{j=1}^N G_i^\vartheta(x_j)} \frac{\sum_{j\neq i}^N H_i^\vartheta(x_j) }{\sum_{j\neq i}^N G_i^\vartheta(x_j)} \nonumber \\
&= O\left( \frac{G_i^\vartheta(x_i)}{\sum_{j=1}^N G_i^\vartheta(x_j)} \right),
\end{align}
where we used the fact that since $f$ is smooth then it is also bounded on $\man$, satisfying $\left\vert f(x) \right\vert \leq c$, and hence $\left\vert H_i^\vartheta(x_i) \right\vert \leq c \left\vert G_i^\vartheta(x_i) \right\vert$. Therefore, by using Proposition~\ref{prop:asym analysis} (specifically, retaining the zero-order element in the asymptotic expansion in~\eqref{eq:asym analysis general int formula}) it follows that
\begin{equation}
\frac{G_i^\vartheta(x_i)}{\sum_{j=1}^N G_i^\vartheta(x_j)} = \frac{\frac{1}{N} G_i^\vartheta(x_i)}{\frac{1}{N} \sum_{j=1}^N G_i^\vartheta(x_j)}  = O(\frac{\varepsilon^{1/2}/N}{\varepsilon^{d/2}}) = O(\frac{1}{N \varepsilon^{(d-1)/2}}),
\end{equation}
which is negligible compared to variance error term of~\eqref{eq:asym variance error without diag}. Overall, we get that
\begin{equation}
\frac{4}{\varepsilon}\left\{ \tilde{L}g\right\}(i,\vartheta) = \frac{4}{\varepsilon}\left( f(x_i^\vartheta) - \frac{\sum_{j=1}^N H_i^\vartheta(x_j)}{\sum_{j=1}^N G_i^\vartheta(x_j)} \right) = \Delta_\man f(x_i^\vartheta) + O(\frac{1}{{N}^{1/2} \varepsilon^{(d-1)/4-1/2}}) +O(\varepsilon),
\end{equation}
which concludes the proof.

\subsection{Construction of a rotationally-invariant parametrization} \label{subsection:construction of a rotationally-invariant parametrizaion}
We construct a parametrization $(z,\beta)$ of all points $x$ in a certain neighbourhood of $x_i^\vartheta$. This parametrization has favorable properties for our purposes, and is specific for every index $i$ and rotation angle $\vartheta$. 
The parametrization is defined by the mapping $x\mapsto (z,\beta)$, given by
\begin{equation}
z(x) = \mathcal{R}(x,{\hat{\alpha}(x)}), \quad\quad\quad \beta(x) = -\hat{\alpha}(x), \quad\quad\quad \hat{\alpha}(x) = \argmin_{\alpha\in [0,2\pi)}{\left\Vert x^\alpha - x_i^\vartheta\right\Vert_2^2}. \label{eq:rot inv parametrizaion explicit}
\end{equation}
That is, $z$ is the rotation of $x$ (by~\eqref{eq:extrinsic rotation}) which is closest to $x_i^\vartheta$, and therefore this parametrization satisfies
\begin{equation}
x = \mathcal{R}(z,\beta) = z^\beta.
\end{equation}
Note that this mapping is invariant to the intrinsic rotation, that is, different values of $x$ which differ only by a rotation will be mapped to the same $z$. Therefore, the parametrization $(z,\beta)$ can be perceived as a form of a polar parametrization, where coordinates are parametrized by a radius (the equivalent of $z$) and a rotation angle (the equivalent of $\beta$).

Now, a solution to
\begin{equation}
\argmin_{\alpha\in [0,2\pi)} \left\Vert x^\alpha - x_i^\vartheta\right\Vert_2^2 \label{eq:rot inv param opt alpha}
\end{equation}
must exist (since the set of minimizers is compact), but may not be unique for all $x\in\man$. We start by showing that it is guaranteed to be unique for $x$ in a sufficiently small neighbourhood of $x_i^\vartheta$. To this end, note that due to our requirements in Theorem~\eqref{thm:steerable graph Laplacian convergence}, we have with probability one that $\sum_{\left\vert m\right\vert>0} \sum_{\ell} \left\vert x_{i,(m,\ell)} \right\vert^2 >0$ (i.e. the image corresponding to $x_i^\vartheta$ is not radially symmetric), and since $\man$ is smooth, there exists a neighbourhood of $x_i^\vartheta$ for which $\sum_{\left\vert m\right\vert>0} \sum_{\ell} \left\vert x_{(m,\ell)} \right\vert^2 >0$. 
Let us consider a ball of radius $\delta>0$ around $x_i^\vartheta$, denoted by $B_\delta(x_i^\vartheta)$, such that $\sum_{\left\vert m\right\vert>0} \sum_{\ell} \left\vert x_{(m,\ell)} \right\vert^2 >0$ for all points $x\in B_\delta(x_i^\vartheta) \cap \man$. Clearly, fixing some $x\in B_\delta(x_i^\vartheta) \cap \man$, we have that $\min_{\alpha\in [0,2\pi)}{\left\{\left\Vert x^\alpha - x_i^\vartheta\right\Vert_2^2\right\}} \leq \delta^2$, and moreover, the curve $\left\lbrace x^\alpha \right\rbrace_{\alpha=0}^{2\pi}$ has bounded curvature and does not self intersect, which means that 
there exists a sufficiently small $\delta^{'}\leq \delta$ such that any $\alpha$ minimzing $\left\Vert x^\alpha - x_i^\vartheta\right\Vert_2^2$ must satisfy $x^\alpha \in B_{\delta^{'}}(x_i^\vartheta) \cap \man$. Moreover, we choose $\delta^{'}$ such that the solution is unique, which is justified by the fact that the problem $\min_{\alpha\in [0,2\pi)}\left\Vert x^\alpha - x_i^\vartheta\right\Vert_2^2$ s.t. $x^\alpha \in B_{\delta^{'}}(x_i^\vartheta) \cap \man$, must be convex for a sufficiently small $\delta^{'}$ (due to the smoothness of the curve $\left\lbrace x^\alpha \right\rbrace_{\alpha=0}^{2\pi}$). To conclude, for points $x$ in a sufficiently small neighbourhood of $x_i^\vartheta$, the mapping $x\mapsto (z,\beta)$ given by~\eqref{eq:rot inv parametrizaion explicit} is unique.

Next, it is of interest to characterize the resulting set of feasible points of $z$, and we proceed by showing that
\begin{equation}
\mathcal{N} \triangleq \left\{z(x): \; x \in B_{\delta^{'}}(x_i^\vartheta) \cap \man \right\}
\end{equation}
is a smooth and compact $(d-1)$-dimensional submanifold.
Now, from~\eqref{eq:rot inv param opt alpha} it follows that $\alpha=\hat{\alpha}(x)$ must be a solution of
\begin{equation}
\operatorname{Re}{\left\{\left\langle \frac{\partial x^\alpha}{\partial \alpha},x^\alpha - x_i^\vartheta\right\rangle\right\}} = 0, \label{eq:rot inv param alpha explicit eq}
\end{equation}
which can be written explicitly via~\eqref{eq:extrinsic rotation} as 
\begin{align}
\operatorname{Re}{\left\{\left\langle \frac{\partial x^\alpha}{\partial \alpha},x^\alpha - x_i^\vartheta\right\rangle\right\}} &= \sum_{m=-M}^M\sum_{\ell=1}^{\ell_m} \operatorname{Re}{\left\{ \imath m \cdot  x_{(m,\ell)}^* e^{-\imath m \alpha} \cdot \left(x_{(m,\ell)}e^{\imath m \alpha} - x_{i,(m,\ell)}^\vartheta \right) \right\}} \nonumber \\
&= \sum_{m=-M}^M\sum_{\ell=1}^{\ell_m} \operatorname{Re}{\left\{ \imath m \cdot  \left\vert x_{(m,\ell)} \right\vert^2 - \imath m \cdot  x_{(m,\ell)}^* e^{-\imath m \alpha} \cdot x_{i,(m,\ell)}^\vartheta \right\}} \nonumber \\ 
&= -\sum_{m=-M}^M\sum_{\ell=1}^{\ell_m} \operatorname{Re}{\left\{ \imath m  x_{(m,\ell)}^*  x_{i,(m,\ell)}^\vartheta \cdot e^{-\imath m \alpha} \right\}}. \label{eq:rot inv param trig poly for alpha}
\end{align}
Next, since at $\alpha=\hat{\alpha}(x)$ we have that $z=x^\alpha$, then by~\eqref{eq:rot inv parametrizaion explicit} each $z$ satisfies
\begin{equation}
\operatorname{Re}{\left\{\left\langle \frac{\partial z^\beta}{\partial \beta}\bigg{\vert}_{\beta=0},z - x_i^\vartheta\right\rangle\right\}} = 0, \label{eq:parametrization orthogonality}
\end{equation}
which can be written explicitly via~\eqref{eq:rot inv param trig poly for alpha} as 
\begin{equation}
\sum_{m=-M}^M\sum_{\ell=1}^{\ell_m} \operatorname{Re}{\left\{ \imath m \cdot z_{(m,\ell)}^* \cdot x_{i,(m,\ell)}^\vartheta \right\}} = 0, \label{eq:rot-inv parametrization linear subspace}
\end{equation}
where $z_{(m,\ell)}$ denotes the $(m,\ell)$`th coordinate of $z$.
Essentially, equation~\eqref{eq:rot-inv parametrization linear subspace} defines a linear subspace of the ambient space, such that all feasible points for $z$ ($z=x^\alpha$ for $\alpha$ satisfying~\eqref{eq:rot inv param opt alpha}) must lie in the intersection between ${B}_{\delta^{'}}(x_i^\vartheta)\cap \man$ and this subspace. 
In particular, the submanifold $\mathcal{N}$ can be explicitly defined through
\begin{equation}
\mathcal{N} = \left\lbrace x: \; \sum_{m=-M}^M\sum_{\ell=1}^{\ell_m} \operatorname{Re}{\left\{ \imath m \cdot  x_{(m,\ell)}^* \cdot x_{i,(m,\ell)}^\vartheta\right\}} = 0,\; x\in {B}_{\delta^{'}}(x_i^\vartheta)\cap \man \right\rbrace,
\end{equation}
which is a smooth and compact submanifold due to the smoothness and compactness of $\man$, and is of intrinsic dimension $(d-1)$ due to the additional linear constraint (note that this constraint is not degenerate since $\sum_{\left\vert m\right\vert>0} \sum_{\ell} \left\vert x_{(m,\ell)} \right\vert^2 >0$ for $x\in B_\delta(x_i^\vartheta)\cap \man$).

Lastly, we make the observation that all rotations of any  point in the neighbourhood $B_{\delta^{'}}(x_i^\vartheta) \cap \man$, i.e. $x^\varphi$ for all $\varphi \in [0,2\pi)$ and any $x\in B_{\delta^{'}}(x_i^\vartheta) \cap \man$, share the same solution (the same $x^\alpha$ for $\alpha$ from~\eqref{eq:rot inv param opt alpha}) as the point $x$. This allows us to extend the neighbourhood in which our parametrization is valid by taking all rotations of all points in this neighbourhood, and in particular, we conclude that for all $x\in\man^{'}$, where
\begin{equation}
\man^{'} \triangleq \left\{x^\varphi: \; x\in B_{\delta^{'}}(x_i^\vartheta) \cap \man, \; \varphi \in [0,2\pi) \right\},
\end{equation}
the parametrization $x\mapsto (z,\beta)$, where $x=z^\beta$, is unique. Additionally, it is evident that $\man^{'}\subset\man$ is also a smooth and compact $d$-dimensional submanifold.

\subsection{Proof of Lemma~\ref{lem:rotation invariant measure} } \label{subsection:angle invariant measure lemma proof }
\begin{proof}
Let $\mathcal{N}$ be parametrized locally by $u=\left[ u_1,\ldots,u_{(d-1)}\right]\in\mathbb{R}^{d-1}$ around a point $z_0$. That is, every coordinate $z_{m,\ell}$ of the manifold $\mathcal{N}$ is expressed as a function of $u$ in the vicinity of the point $z_0$. Then, $\man^{'}$ can be parametrized locally around $x_0=z_0^{\beta_0}$ by $[ u,\beta]\in\mathbb{R}^d$, where $\beta\in [0,2\pi)$ is the rotation angle from our rotationally-invariant parametrization (see section~\ref{subsection:construction of a rotationally-invariant parametrizaion}), in the sense that every coordinate $x_{m,\ell}$ of $\man^{'}$ can be expressed as a function of $[u,\beta]$ in the neighbourhood of the point $x_0$. Hence, the integral of a function $h(x)$ over $\man^{'}$ can be expressed through the parametrization $x\mapsto (z,\beta)$ by
\begin{equation}
\int_{\man^{'}} h(x) dx = \int_{z\in\mathcal{N}} \int_{\beta=0}^{2\pi} h(z^\beta) dV(z^\beta) ,
\end{equation}
where 
\begin{equation}
dV(x) = \sqrt{\left\vert\det{ \left\{ g_{\man^{'}}(x)\right\} }\right\vert} du_1 \ldots du_{d-1} d\beta
\end{equation}
is the volume form at the point $x$, $g_{\man^{'}}(x)$ is the metric tensor on $\man^{'}$, given by pull-back as
\begin{equation}
g_{\man^{'}}(x) = \operatorname{Re}\left\{J_{\man^{'}}^*(x) J_{\man^{'}}(x)\right\},
\end{equation}
and $J_{\man^{'}}(x)$ is the Jacobian matrix
\begin{equation}
J_{\man^{'}}(x) = 
\begin{bmatrix}
J_{u} & J_\beta
\end{bmatrix}
, \quad\quad
J_u = 
\begin{bmatrix}
\frac{\partial x_{(-M,1)}}{\partial u_1} & \ldots & \frac{\partial x_{(-M,1)}}{\partial u_{(d-1)}} & \\
\vdots & \ldots & \vdots \\
\frac{\partial x_{(-M,\ell_m)}}{\partial u_1} & \ldots & \frac{\partial x_{(-M,\ell_m)}}{\partial u_{(d-1)}}
\\
\textbf{\vdots} & & \textbf{\vdots} \\
\frac{\partial x_{(M,1)}}{\partial u_1} & \ldots & \frac{\partial x_{(M,1)}}{\partial u_{(d-1)}} \\
\vdots & \ldots & \vdots \\
\frac{\partial x_{(M,\ell_m)}}{\partial u_1} & \ldots & \frac{\partial x_{(M,\ell_m)}}{\partial u_{(d-1)}}
\end{bmatrix}
,\quad\quad
J_\beta = 
\begin{bmatrix}
\frac{\partial x_{(-M,1)}}{\partial \beta} \\
\vdots \\
\frac{\partial x_{(-M,\ell_m)}}{\partial \beta} \\
\textbf{\vdots} \\
\frac{\partial x_{(M,1)}}{\partial \beta} \\
\vdots \\
\frac{\partial x_{(M,\ell_m)}}{\partial \beta}
\end{bmatrix}. \label{eq: Jacobian def}
\end{equation}
Note that since $x=z^\beta$ we have $x_{m,\ell}=z_{m,\ell} e^{\imath m \beta}$ (from~\eqref{eq:extrinsic rotation}), and thus
\begin{equation}
\frac{\partial x_{(m,\ell)}}{\partial u_i} = \frac{\partial z_{(m,\ell)}}{\partial u_i} \cdot e^{\imath m \beta},\quad i=1,\ldots,d-1; \quad\quad\quad \frac{\partial x_{(m,\ell)}}{\partial \beta} = \imath m \cdot z_{m,\ell} \cdot e^{\imath m \beta}. \label{eq:metric tensor partial derivatives explicit}
\end{equation}
Therefore, it is evident that the metric tensor
\begin{equation}
g_{\man^{'}}(x) = \operatorname{Re}\left\{J_{\man^{'}}^*(x) J_{\man^{'}}(x)\right\} = 
\begin{bmatrix}
\operatorname{Re}\left\{J_u^*(x) J_u(x)\right\} & \operatorname{Re}\left\{J_u^*(x) J_\beta(x)\right\} \\
\operatorname{Re}\left\{J_\beta^*(x) J_u(x)\right\} & \operatorname{Re}\left\{J_\beta^*(x) J_\beta(x)\right\}
\end{bmatrix}
\end{equation}
does not depend on $\beta$, i.e. $g_{\man^{'}}(x)=g_{\man^{'}}(z)$, since the term $e^{\imath m \beta}$ cancels-out in all entries of $g_{\man^{'}}(x)$. Consequently, we have that
\begin{equation}
dV(x) = \sqrt{\left\vert\det{\left\{ g_{\man^{'}}(z)\right\} }\right\vert } du_1 \ldots du_{d-1} d\beta = {V}(z) dz d\beta,
\end{equation}
where we denoted 
\begin{equation}
{V}(z)=\sqrt{\left\vert\det{\left\{ g_{\man^{'}}(z)\right\} }\right\vert}, \quad\quad\quad dz = du_1 \ldots du_{d-1}, \label{eq:V_tilde def}
\end{equation}
and it follows that
\begin{equation}
\int_{\man^{'}} h(x) dx = \int_{z\in\mathcal{N}} \int_{\beta=0}^{2\pi} h(z^\beta) {V}(z) dz d\beta.
\end{equation}
\end{proof}

\subsection{Proof of Lemma~\ref{lem:second order moments asymptotics}} \label{subsection:proof of lemma second order moments asymptotics}
We put our focus on evaluating the term $\mathbb{E}\left[\left(H_i^\vartheta \right)^2\right]$, as the other second-order terms $\mathbb{E}\left[\left(G_i^\vartheta \right)^2\right]$ and $\mathbb{E}\left[H_i^\vartheta G_i^\vartheta\right]$ can be obtained in a similar fashion. 
Thus, we are interested in evaluating the term
\begin{equation}
\mathbb{E}\left[\left(H_i^\vartheta \right)^2\right] = \int_{\mathcal{\man}} \left(H_i^\vartheta(x) \right)^2 p(x) dx, \label{eq:E[H] explicit}
\end{equation}
recalling that
\begin{equation}
H_i^\vartheta(x) = \int_0^{2\pi} \exp{\left\lbrace-{\left\Vert x_i^\vartheta - x^{\varphi}\right\Vert^2}{/\varepsilon}\right\rbrace}f(x^{\varphi})d\varphi.
\end{equation}
Using the construction of our rotationally-invariant parametrization (see Section~\ref{subsection:construction of a rotationally-invariant parametrizaion}),  and in particular the submanifold $\man^{'}$, we can write
\begin{equation}
\mathbb{E}\left[\left(H_i^\vartheta \right)^2\right] = \int_{\mathcal{\man^{'}}} \left(H_i^\vartheta(x) \right)^2 p(x) dx + O\left(e^{-(\delta^{'})^2 /\varepsilon}\right),
\end{equation}
for some constant $\delta^{'}>0$. Next, as the $O\left(e^{-(\delta^{'})^2 /\varepsilon}\right)$ term is negligible w.r.t to any polynomial asymptotic expansion in $\varepsilon$, we omit it. Then, by Lemma~\ref{lem:rotation invariant measure} (i.e. change of integration variables $x\mapsto (z,\beta)$) and~\eqref{eq:H_i_theta invariant to beta}, we have
\begin{align}
\mathbb{E}\left[\left(H_i^\vartheta \right)^2\right] &= \int_{\mathcal{N}}\int_0^{2\pi} \left(H_i^\vartheta(z^\beta) \right)^2 p(z^{\beta}) V(z) dz d\beta = \frac{2\pi}{\operatorname{Vol}\left\{\man\right\}} \int_{\mathcal{N}} \left(H_i^\vartheta(z) \right)^2 V(z) dz \nonumber \\
&=  \int_{\mathcal{N}} \left(H_i^\vartheta(z) \right)^2 p_{\mathcal{N}}(z) dz,
\end{align} 
where we defined
\begin{equation}
p_{\mathcal{N}}(z) =\frac{2\pi}{\operatorname{Vol}\left\{\man\right\}} V(z).
\end{equation}

We start by deriving an asymptotic expression for $H_i^\vartheta(z)$.
Let us write
\begin{align}
\left\Vert x_i^\vartheta - z^{\varphi}\right\Vert^2 &= \left\Vert \left( x_i^\vartheta - z\right) + \left( z - z^{\varphi}\right)\right\Vert^2 \nonumber\\ 
&= \left\Vert  x_i^\vartheta - z \right\Vert^2 + 2\operatorname{Re}\left\{ \left\langle x_i^\vartheta - z,z - z^\varphi \right\rangle\right\} + \left\Vert  z - z^\varphi\right\Vert^2,
\end{align}
and denote
\begin{equation}
\delta_i^{\vartheta}(z,x) \triangleq 2\operatorname{Re}\left\{ \left\langle x_i^\vartheta - z,z - x \right\rangle\right\}. \label{eq:Delta def}
\end{equation}
Therefore, we have that
\begin{equation}
H_i^\vartheta(z) = \exp{\left\lbrace-{\left\Vert x_i^\vartheta - z \right\Vert^2}{/\varepsilon}\right\rbrace} \int_0^{2\pi} \exp{\left\lbrace-{\left\Vert  z - z^\varphi\right\Vert^2}{/\varepsilon}\right\rbrace} \exp{\left\lbrace-{ \delta_i^{\vartheta}(z,z^\varphi)}{/\varepsilon}\right\rbrace} f(z^{\varphi})d\varphi. \label{eq:H_i_theta full integral}
\end{equation}
Next, we use Taylor expansion to write
\begin{equation}
\exp{\left\lbrace-{ \delta_i^{\vartheta}(z,z^\varphi)}{/\varepsilon}\right\rbrace} = 1 - \frac{\delta_i^{\vartheta}(z,z^\varphi)}{\varepsilon} + O\left(\frac{\left[\delta_i^{\vartheta}(z,z^\varphi)\right]^2}{\varepsilon^2}\right),
\end{equation}
which gives us that
\begin{align}
H_i^\vartheta(z) = \exp{\left\lbrace-{\left\Vert x_i^\vartheta - z \right\Vert^2}{/\varepsilon}\right\rbrace} &\left[ \int_0^{2\pi} \exp{\left\lbrace-{\left\Vert  z - z^\varphi\right\Vert^2}{/\varepsilon}\right\rbrace} f(z^{\varphi})d\varphi \right. \nonumber \\ 
&\left. - \frac{1}{\varepsilon} \int_0^{2\pi} \exp{\left\lbrace-{\left\Vert  z - z^\varphi\right\Vert^2}{/\varepsilon}\right\rbrace} \delta_i^{\vartheta}(z,z^\varphi) f(z^{\varphi})d\varphi \right. \nonumber \\
&\left. + O\left( \frac{1}{\varepsilon^2} \int_0^{2\pi} \exp{\left\lbrace-{\left\Vert  z - z^\varphi\right\Vert^2}{/\varepsilon}\right\rbrace} \left[ \delta_i^{\vartheta}(z,z^\varphi)\right]^2 \left| f(z^{\varphi}) \right| d\varphi \right) \right]. \label{eq:H_i_theta error terms expansion}
\end{align}

In what follows, we evaluate the terms in the square brackets of~\eqref{eq:H_i_theta error terms expansion} one by one, where we mention that Proposition~\ref{prop:asym analysis} is the main workhorse for obtaining the asymptotic expansions of the integrals taking part in our analysis. 
To this end, let us first define the set of points $\mathcal{C}_z=\left\{ z^\varphi\right\}_{\varphi=0}^{2\pi}$, which is a smooth curve in $\CD$. We then change the integration parameter in~\eqref{eq:H_i_theta error terms expansion} from the angle $\varphi$ to the variable $x=z^\varphi$ (which is equivalent to parametrizing by arc-length), and if we recall that $f(x)$ is a smooth function, then by Proposition~\ref{prop:asym analysis} we get that
\begin{align}
\int_0^{2\pi} \exp{\left\lbrace-{\left\Vert  z - z^\varphi\right\Vert^2}{/\varepsilon}\right\rbrace} f(z^{\varphi})d\varphi &= \frac{1}{\mu(z)}\int_{x\in \mathcal{C}_z} \exp{\left\lbrace-{\left\Vert  z - x\right\Vert^2}{/\varepsilon}\right\rbrace} f(x) dx \nonumber \\
&= \frac{\sqrt{\pi\varepsilon}}{\mu(z) }\left[f(z) + O(\varepsilon) \right], \label{eq:inner int asym term 1}
\end{align}
where $\mu(z) = \sqrt{\left\vert \det{\left[ \operatorname{Re}{\left\{ J_\beta^*(z) J_\beta(z) \right\}} \right] } \right\vert}$ ($J_\beta$ is defined in~\eqref{eq: Jacobian def}) is associated with the change of the integration variable, and is given explicitly by
\begin{equation}
\mu(z) = \sqrt{ \sum_{m=-M}^M \sum_{\ell=1}^{\ell_m} m^2 \left\vert z_{m,\ell} \right\vert^2} .
\end{equation}

Next, we evaluate the second term in the square brackets of~\eqref{eq:H_i_theta error terms expansion}. Since $\delta_i^{\vartheta}(z,z^\varphi)$ is a smooth function in $\varphi$, and using the previous change of variable $x=z^\varphi$ together with Proposition~\ref{prop:asym analysis}, we have
\begin{align}
&\frac{1}{\varepsilon} \int_0^{2\pi} \exp{\left\lbrace-{\left\Vert  z - z^\varphi\right\Vert^2}{/\varepsilon}\right\rbrace} \delta_i^{\vartheta}(z,z^\varphi) f(z^{\varphi})d\varphi = 
\frac{1}{\varepsilon \mu(z)} \int_{x\in\mathcal{C}_z} \exp{\left\lbrace-{\left\Vert  z - x\right\Vert^2}{/\varepsilon}\right\rbrace} \delta_i^{\vartheta}(z,x) f(x)dx \nonumber \\ 
&= \frac{\sqrt{\pi \varepsilon}}{\varepsilon \mu(z) } \left[ \delta_i^{\vartheta}(z,z)f(z) + \frac{\varepsilon}{4}\left[ E(y)\delta_i^{\vartheta}(z,z)f(z) + \Delta_{\mathcal{C}_z}\left\{\delta_i^{\vartheta}(z,x)f(x)\right\}\bigg\vert_{x=z}\right] + O(\varepsilon^2) \right] \nonumber \\
&= \frac{\sqrt{\pi \varepsilon}}{\varepsilon \mu(z) } \left[ \frac{\varepsilon}{4} \Delta_{\mathcal{C}_z}\left\{\delta_i^{\vartheta}(z,x)f(x)\right\}\bigg\vert_{x=z} + O(\varepsilon^2) \right]
\end{align}
since it is clear from~\eqref{eq:Delta def} that $\delta_i^{\vartheta}(z,z)=0$. Moreover, we have (see Lemma~3.3 in~\cite{rosenberg1997laplacian}) that
\begin{align}
\Delta_{\mathcal{C}_z}\left\{\delta_i^{\vartheta}(z,x)f(x)\right\}\vert_{x=z} &= \Delta_{\mathcal{C}_z}f(x)\vert_{x=z} \cdot\delta_i^{\vartheta}(z,z) -2\left\langle \nabla_{\mathcal{C}_z} \delta_i^{\vartheta}(z,x)\vert_{x=z}, \nabla_{\mathcal{C}_z} f(z)\right\rangle + f(z)\cdot \Delta_{\mathcal{C}_z}\delta_i^{\vartheta}(z,x)\vert_{x=z} \nonumber \\
&= f(z)\cdot \Delta_{\mathcal{C}_z}\delta_i^{\vartheta}(z,x)\vert_{x=z},
\end{align}
where we have used the fact that $\delta_i^{\vartheta}(z,z)=0$, and moreover, that (using~\eqref{eq:Delta def})
\begin{equation}
\nabla_{\mathcal{C}_z} \delta_i^{\vartheta}(z,x)\vert_{x=z} = -2\operatorname{Re}\left\{ \left\langle x_i^\vartheta - z, \nabla_{\mathcal{C}_z} x \vert_{x=z} \right\rangle\right\} = -2\operatorname{Re}\left\{ \left\langle x_i^\vartheta - z, \frac{1}{\mu(z)} \frac{\partial z^\beta}{\partial \beta}\bigg{\vert}_{\beta=0} \right\rangle\right\} = 0
\end{equation}
as $x_i^\vartheta - z$ is perpendicular to $\frac{\partial z^\beta}{\partial \beta} \bigg\vert_{\beta=0}$ by our rotationally-invariant parametrization (equation~\eqref{eq:parametrization orthogonality} in Section~\ref{subsection:construction of a rotationally-invariant parametrizaion}). Therefore, we are left with 
\begin{align}
\frac{1}{\varepsilon} \int_0^{2\pi} \exp{\left\lbrace-{\left\Vert  z - z^\varphi\right\Vert^2}{/\varepsilon}\right\rbrace} \delta_i^{\vartheta}(z,z^\varphi) f(z^{\varphi})d\varphi = \frac{\sqrt{\pi \varepsilon}}{\varepsilon \mu(z) } \left[ \varepsilon q(z) + O(\varepsilon^2) \right]
= \frac{ \sqrt{\pi \varepsilon}}{\mu(z)} \left[ q(z) + O(\varepsilon) \right], \label{eq:inner int asym term 2}
\end{align} 
where we defined the function $q(z)$ as
\begin{equation}
q(z) = \frac{f(z)}{4} \Delta_{\mathcal{C}_z}\delta_i^{\vartheta}(z,x)\vert_{x=z} = -\frac{f(z)}{2} \operatorname{Re}\left\{ \left\langle x_i^\vartheta - z, \Delta_{\mathcal{C}_z}x \vert_{x=z} \right\rangle\right\}, \label{eq:q(z) def}
\end{equation}
with the second equality following from~\eqref{eq:Delta def}. The notation $\Delta_{\mathcal{C}_z}x \vert_{x=z}$ denotes the Laplacian of each coordinate of $x$, taken w.r.t to the curve $\mathcal{C}_z$, and sampled at the point $z$. Consequently, $q(z)$ is a smooth function satisfying
\begin{equation}
q(x_i^\vartheta)=0, \quad\quad\quad q(z)  = O(\left\Vert x_i^\vartheta - z  \right\Vert), \label{eq:q(z) properties}
\end{equation}
where we applied the Cauchy-Schwarz inequality to~\eqref{eq:q(z) def}, together with the fact that $\left\Vert \Delta_{\mathcal{C}_z}x \vert_{x=z} \right\Vert$ is bounded (since $\mathcal{C}_z$ is smooth).

Now, as for the last term in the square brackets of~\eqref{eq:H_i_theta error terms expansion}, we first mention that since $f(x)$ is smooth, it is bounded (on a compact domain) and therefore
\begin{multline}
O\left( \frac{1}{\varepsilon^2} \int_0^{2\pi} \exp{\left\lbrace-{\left\Vert  z - z^\varphi\right\Vert^2}{/\varepsilon}\right\rbrace} \left[ \delta_i^{\vartheta}(z,z^\varphi)\right]^2 \left| f(z^{\varphi}) \right| d\varphi \right) = \\ 
O\left( \frac{1}{\varepsilon^2} \int_0^{2\pi} \exp{\left\lbrace-{\left\Vert  z - z^\varphi\right\Vert^2}{/\varepsilon}\right\rbrace} \left[ \delta_i^{\vartheta}(z,z^\varphi)\right]^2 d\varphi \right). \label{eq:delta^2 int}
\end{multline}
Moreover, if we expand
\begin{equation}
z^\varphi_{m,\ell} = z_{m,\ell} e^{\imath m \varphi} = z_{m,\ell} + \frac{\partial z_{m,\ell}^\varphi}{\partial \varphi} \bigg\vert_{\varphi=0} \cdot \varphi + O(\varphi^2),
\end{equation}
then we have that
\begin{equation}
z^\varphi-z=\frac{\partial z^\varphi}{\partial \varphi} \bigg\vert_{\varphi=0} \cdot \varphi + O(\varphi^2), \label{eq:z_varphi asym}
\end{equation}
and it is evident that
\begin{align}
\delta_i^{\vartheta}(z,z^\varphi) &=  -2\operatorname{Re}\left\{ \left\langle x_i^\vartheta - z,z - z^\varphi \right\rangle\right\} = 2\operatorname{Re}\left\{ \left\langle x_i^\vartheta - z,\frac{\partial z^\varphi}{\partial \varphi} \bigg\vert_{\varphi=0} \right\rangle\right\} \cdot \varphi + O(\varphi^2 \left\Vert x_i^\vartheta - z\right\Vert) \nonumber \\
&= O(\varphi^2 \left\Vert x_i^\vartheta - z\right\Vert),
\end{align}
where we used Cauchy-Schwarz inequality and again the fact that $x_i^\vartheta - z$ is perpendicular to $\frac{\partial z^\varphi}{\partial \varphi} \bigg\vert_{\varphi=0}$ due to our rotationally-invariant parametrization (eq.~\eqref{eq:parametrization orthogonality}).
Eventually, we obtain that
\begin{equation}
\left[ \delta_i^{\vartheta}(z,z^\varphi)\right]^2 = O(\varphi^4 \left\Vert x_i^\vartheta - z\right\Vert^2).
\end{equation}
Continuing, from~\eqref{eq:z_varphi asym} it is clear that
\begin{equation}
\varphi = O(\left\Vert z-z^\varphi \right\Vert),
\end{equation}
and therefore
\begin{equation}
\left[ \delta_i^{\vartheta}(z,z^\varphi)\right]^2 = O(\left\Vert z-z^\varphi \right\Vert^4 \left\Vert x_i^\vartheta - z\right\Vert^2). \label{eq:delta_i_theta suqared}
\end{equation}
When plugging~\eqref{eq:delta_i_theta suqared} back into~\eqref{eq:delta^2 int} and changing the integration parameter from $\varphi$ to $x=z^\varphi$, we arrive at
\begin{align}
&O\left( \frac{1}{\varepsilon^2} \int_0^{2\pi} \exp{\left\lbrace-{\left\Vert  z - z^\varphi\right\Vert^2}{/\varepsilon}\right\rbrace} \left[ \delta_i^{\vartheta}(z,z^\varphi)\right]^2 d\varphi \right) = O\left( \frac{1}{\varepsilon^2 \mu(z)} \int_{x\in\mathcal{C}_z} \exp{\left\lbrace-{\left\Vert  z - x\right\Vert^2}{/\varepsilon}\right\rbrace} \left[ \delta_i^{\vartheta}(z,x)\right]^2 dx \right) \nonumber \\
&= O\left( \frac{\left\Vert x_i^\vartheta - z\right\Vert^2}{\varepsilon^2 \mu(z)} \int_{x\in\mathcal{C}_z} \exp{\left\lbrace-{\left\Vert  z - x \right\Vert^2}{/\varepsilon}\right\rbrace} \left\Vert z-x \right\Vert^4 dx \right) = O\left( \frac{\left\Vert x_i^\vartheta - z\right\Vert^2}{\varepsilon^2 \mu(z)} \cdot {\varepsilon^2\sqrt{\pi\varepsilon}} \right) = O\left( \frac{\left\Vert x_i^\vartheta - z\right\Vert^2}{\mu(z)} \sqrt{\pi\varepsilon} \right), \label{eq:inner int asym term 3}
\end{align}
where we used the asymptotic expansion in Proposition~\ref{prop:asym analysis} together with the fact that the function $\left\Vert z-x \right\Vert^4$ (in $z$) and its Laplacian vanish at $z=x$ (leaving only the $O(\varepsilon^2)$ term in the asymptotic expansion of Proposition~\ref{prop:asym analysis}). 

Altogether, when plugging~\eqref{eq:inner int asym term 1},~\eqref{eq:inner int asym term 2}, and~\eqref{eq:inner int asym term 3} into~\eqref{eq:H_i_theta error terms expansion}, we get
\begin{align}
H_i^\vartheta(z) = \frac{\exp{\left\lbrace-{\left\Vert x_i^\vartheta - z \right\Vert^2}{/\varepsilon}\right\rbrace}}{\mu(z)} \sqrt{\pi\varepsilon} \left[ f(z) + q(z) + O(\left\Vert x_i^\vartheta - z\right\Vert^2) + O(\varepsilon)\right],
\end{align}
where $q(z)$ was defined in~\eqref{eq:q(z) def}. Therefore, we have
\begin{align}
\left( H_i^\vartheta(z) \right)^2 &= \frac{\exp{\left\lbrace-2{\left\Vert x_i^\vartheta - z \right\Vert^2}{/\varepsilon}\right\rbrace}}{\mu^2(z) } {\pi\varepsilon} \left[ f(z) + q(z) + O(\left\Vert x_i^\vartheta - z\right\Vert^2) + O(\varepsilon)\right]^2 \nonumber \\
&= \frac{\exp{\left\lbrace-2{\left\Vert x_i^\vartheta - z \right\Vert^2}{/\varepsilon}\right\rbrace}}{\mu^2(z)} {\pi\varepsilon} \left[ f^2(z) +2f(z)q(z) + O(\left\Vert x_i^\vartheta - z\right\Vert^2) + O(\varepsilon)\right], \label{eq:H_i_theta asym final}
\end{align}
where we used the fact (from~\eqref{eq:q(z) properties}) that $q(z)  = O(\left\Vert x_i^\vartheta - z  \right\Vert)$, and retained only the asymptotically dominant factors inside the square brackets.

We are now ready to evaluate $\mathbb{E}\left[\left(H_i^\vartheta \right)^2\right]$ by plugging ~\eqref{eq:H_i_theta asym final} into~\eqref{eq:E[H] explicit}. We have
\begin{align}
&\mathbb{E}\left[\left(H_i^\vartheta \right)^2\right] = \int_{\mathcal{N}} \left(H_i^\vartheta(z) \right)^2 p_{\mathcal{N}}(z) dz \nonumber \\
&= \pi\varepsilon\int_{\mathcal{N}} \frac{\exp{\left\lbrace-2{\left\Vert x_i^\vartheta - z \right\Vert^2}{/\varepsilon}\right\rbrace}}{\mu^2(z) } \left[ f^2(z) +2f(z)q(z) + O(\left\Vert x_i^\vartheta - z\right\Vert^2) + O(\varepsilon)\right] p_{\mathcal{N}}(z) dz. \label{eq:E[H]^2 int initial}
\end{align}
Before we proceed with the asymptotic analysis, we mention that if $\mu(z)> 0$ for all $z\in\mathcal{N}$, then $1/\mu^2(z)$ is a smooth function. Additionally, the smoothness of $p_{\mathcal{N}}(z)=2\pi V(z)/\operatorname{Vol}\left\{\man\right\}$ is guaranteed by the smoothness of $\man$ and $\mathcal{N}$ (see the definition of $V(x)$ in Section~\ref{subsection:angle invariant measure lemma proof }). Then, we expand the square brackets in~\eqref{eq:E[H]^2 int initial} and evaluate (asymptotically) all resulting integrals by applying Proposition~\ref{prop:asym analysis}. We get
\begin{equation}
\int_{\mathcal{N}} \exp{\left\lbrace-2{\left\Vert x_i^\vartheta - z \right\Vert^2}{/\varepsilon}\right\rbrace} \frac{f^2(z) p_{\mathcal{N}}(z)}{\mu^2(z) }  dz = {\left( \pi \varepsilon /2\right)^{(d-1)/2}} \left[ \frac{f^2(x_i^\vartheta) p_{\mathcal{N}}(x_i^\vartheta)}{\mu^2(x_i^\vartheta) } + O(\varepsilon) \right],
\end{equation}
\begin{align}
\int_{\mathcal{N}} \exp{\left\lbrace-2{\left\Vert x_i^\vartheta - z \right\Vert^2}{/\varepsilon}\right\rbrace} \frac{f(z) q(z) p_{\mathcal{N}}(z)}{\mu^2(z) }  dz &= {\left( \pi \varepsilon /2\right)^{(d-1)/2}} \left[ \frac{f(x_i^\vartheta) q(x_i^\vartheta) p_{\mathcal{N}}(x_i^\vartheta)}{\mu^2(x_i^\vartheta) } + O(\varepsilon) \right] \nonumber \\
&= {\left( \pi \varepsilon /2\right)^{(d-1)/2}} \cdot O(\varepsilon),
\end{align}
since $q(x_i^\vartheta)=0$ (see~\eqref{eq:q(z) properties}), and 
\begin{equation}
\int_{\mathcal{N}} \frac{\exp{\left\lbrace-2{\left\Vert x_i^\vartheta - z \right\Vert^2}{/\varepsilon}\right\rbrace}}{\mu^2(z) } O(\left\Vert x_i^\vartheta - z\right\Vert^2) p_{\mathcal{N}}(z) dz = {\left( \pi \varepsilon /2\right)^{(d-1)/2}} \cdot O(\varepsilon),
\end{equation}
where we used the fact that $\left\Vert x_i^\vartheta - z\right\Vert^2$ is smooth and vanishes at $z=x_i^\vartheta$ (using Proposition~\ref{prop:asym analysis}, we are left only with the $O(\varepsilon)$ term in the expansion).

Finally, by substituting all of the above asymptotic integral expansions into~\eqref{eq:E[H]^2 int initial}, it follows that
\begin{equation}
\mathbb{E}\left[\left(H_i^\vartheta \right)^2\right] = {\pi\varepsilon} {\left( \pi \varepsilon /2\right)^{(d-1)/2}} \left[ \frac{f^2(x_i^\vartheta) p_{\mathcal{N}}(x_i^\vartheta)}{\mu^2(x_i^\vartheta) } + O(\varepsilon) \right] = \frac{\left( \pi \varepsilon\right)^{(d+1)/2}}{2^{(d-1)/2} } \left[ \frac{f^2(x_i^\vartheta) p_{\mathcal{N}}(x_i^\vartheta)}{\mu^2(x_i^\vartheta) } + O(\varepsilon) \right].
\end{equation}
Then, $\mathbb{E}\left[\left(G_i^\vartheta \right)^2\right]$ and $\mathbb{E}\left[H_i^\vartheta G_i^\vartheta\right]$ can be obtained in exactly the same manner, and we omit the derivation for the sake of brevity (note that to compute $\mathbb{E}\left[\left(G_i^\vartheta \right)^2\right]$ it is sufficient to take $f\equiv 1$ throughout the derivation).

\section{Non-uniform sampling distribution} \label{appendix: Non uniform sampling}
Let us consider the case where the sampling distribution $p(x)$ is not uniform, and analyze the resulting limiting operator by following the analysis of the bias error term in Section~\ref{subsection:The limit and bias terms}. 
From~\eqref{eq:C1 lim in z}, we have that
\begin{align}
&\lim_{N\rightarrow\infty} C_{i,N}^1(\vartheta) = \lim_{N\rightarrow\infty} \frac{1}{N}\sum_{j=1}^N \int_0^{2\pi} \exp{\left\lbrace-{\left\Vert x_i^\vartheta - x_j^\varphi\right\Vert^2}{/\varepsilon}\right\rbrace}f(x_j^\varphi) d\varphi \nonumber \\
 &= \int_{\mathcal{N}} \int_0^{2\pi} H_i^\vartheta(z) V(z) p(z^\beta) dz d\beta = {2\pi} \int_{\mathcal{N}} H_i^\vartheta(z) V(z) \tilde{p}(z) dz,
\end{align}
where the submanifold $\mathcal{N}$ is from the rotationally-invariant parametrization $x\mapsto (z,\beta)$ of Section~\ref{subsection:construction of a rotationally-invariant parametrizaion}, and we defined
\begin{equation}
\tilde{p}(x) = \frac{1}{2\pi} \int_0^{2\pi} p(x^\varphi) d\varphi.
\end{equation} 
Then, by following the derivation in Section~\ref{subsection:The limit and bias terms} we get that 
\begin{align}
\lim_{N\rightarrow\infty} C_{i,N}^1(\vartheta) = {2\pi} \int_{\man}  \exp{\left\lbrace-{\left\Vert x_i^\vartheta - x \right\Vert^2}{/\varepsilon}\right\rbrace} f(x) \tilde{p}(x) dx,
\end{align}
which is the same expression as in the case of uniform distribution except for the added density $\tilde{p}$.
In a similar way, we also get that the analogue of~\eqref{eq:denom lim} in the case of non-uniform density is
\begin{align}
\lim_{N\rightarrow\infty} C_{i,N}^2(\vartheta) = \lim_{N\rightarrow\infty} \frac{1}{N}\sum_{j=1}^N \int_0^{2\pi} \exp{\left\lbrace-{\left\Vert x_i^\vartheta - x_j^\varphi\right\Vert^2}{/\varepsilon}\right\rbrace} d\varphi = {2\pi} \int_{\man}  \exp{\left\lbrace-{\left\Vert x_i^\vartheta - x \right\Vert^2}{/\varepsilon}\right\rbrace} \tilde{p}(x) dx.
\end{align}
If we use these results, then the equivalent of~\eqref{eq:fraction limit} for non-uniform density is
\begin{align}
\lim_{N\rightarrow\infty}\frac{4}{\varepsilon} \left\{\tilde{L}g\right\}(i,\vartheta) = 
\frac{4}{\varepsilon}\left[f(x_i^\vartheta) -  \frac{\int_{\man}  \exp{\left\lbrace-{\left\Vert x_i^\vartheta - x \right\Vert^2}{/\varepsilon}\right\rbrace} f(x) \tilde{p}(x) dx}{\int_{\man}  \exp{\left\lbrace-{\left\Vert x_i^\vartheta - x \right\Vert^2}{/\varepsilon}\right\rbrace} \tilde{p}(x) dx}\right],
\end{align}
and from the results of~\cite{coifman2006diffusion} it directly follows that
\begin{align}
\lim_{\varepsilon\rightarrow 0}\lim_{N\rightarrow\infty}\frac{4}{\varepsilon} \left\{\tilde{L}g\right\}(i,\vartheta) &= \frac{\Delta_\man \left( f \cdot \tilde{p}\right)(x_i^\vartheta)}{\tilde{p}(x_i^\vartheta)} - \frac{\Delta_\man \tilde{p}(x_i^\vartheta)}{\tilde{p}(x_i^\vartheta)}\cdot f(x_i^\vartheta) \nonumber \\ 
&= \Delta_\man f(x_i^\vartheta) - 2\frac{\left\langle \nabla_\man f(x_i^\vartheta),\nabla_\man \tilde{p}(x_i^\vartheta) \right\rangle}{\tilde{p}(x_i^\vartheta)}.
\end{align}
Therefore, it is evident that the steerable graph Laplacian $\tilde{L}$ does not converge to the Laplace-Beltrami operator $\Delta_\man$, but rather to a Fokker-Planck operator which depends on the rotationally-invariant distribution $\tilde{p}$. Note that if $p$ was uniform, i.e. $p(x) = 1/\operatorname{Vol}\left\{\man\right\}$, then the two operators would coincide.

Next, following~\cite{coifman2006diffusion}, we propose to normalize the sampling density by considering a re-weighted version of the steerable affinity operator $W_{i,j}(\vartheta,\varphi)$. Specifically, we define
\begin{align}
&\bar{W}_{i,j}(\vartheta,\varphi) = \frac{W_{i,j}(\vartheta,\varphi)}{D_{i,i} D_{j,j}}, \\
&\bar{D}_{i,i} = \sum_{j=1}^{N} \int_0^{2\pi} \bar{W}_{i,j}(0,\alpha) d\alpha,
\end{align} 
and then the density-normalized steerable graph Laplacian $\bar{L}$ is defined via
\begin{equation}
\bar{L}f = f - \bar{D}^{-1} \bar{W} f.
\end{equation}
Note that we can write
\begin{equation}
\left\{\bar{D}^{-1} \bar{W}\right\}_{i,j}(\vartheta,\varphi) = \frac{W_{i,j}(\vartheta,\varphi) / D_{j,j}}{\sum_{j=1}^{N} \int_0^{2\pi} \left[ W_{i,j}(\vartheta,\varphi) / D_{j,j} \right] d\alpha} 
= \frac{N^{-1}{W_{i,j}(\vartheta,\varphi)}/{(D_{j,j}/N)}}{N^{-1}\sum_{j=1}^{N} \int_0^{2\pi} \left[ W_{i,j}(\vartheta,\varphi) / (D_{j,j}/N) \right] d\alpha},
\end{equation}
where we have that
\begin{align}
\lim_{N\rightarrow\infty}\frac{1}{N}{D}_{j,j} = \lim_{N\rightarrow \infty} \frac{1}{N} \sum_{k=1}^{N} \int_0^{2\pi} {W}_{j,k}(0,\alpha) d\alpha = \lim_{N\rightarrow\infty} C_{j,N}^2(0)
= {2\pi} \int_{\man}  \exp{\left\lbrace-{\left\Vert x_j - x \right\Vert^2}{/\varepsilon}\right\rbrace} \tilde{p}(x) dx,
\end{align}
and therefore
\begin{equation}
\lim_{N\rightarrow\infty}\frac{4}{\varepsilon} \left\{\bar{L}g\right\}(i,\vartheta) = \frac{4}{\varepsilon}\left[ f(x_i^\vartheta) - \frac{\int_{\man}{\exp{\left\lbrace-{\left\Vert x_i^\vartheta - x \right\Vert^2}{/\varepsilon}\right\rbrace} f(x) \hat{p}(x) dx} }{\int_{\man}{\exp{\left\lbrace-{\left\Vert x_i^\vartheta - x \right\Vert^2}{/\varepsilon}\right\rbrace} \hat{p}(x) dx} }\right],
\end{equation}
where $\hat{p}(x)$ is a ``corrected'' density given by
\begin{equation}
\hat{p}(x) = \frac{\tilde{p}(x)}{\int_{\man} \exp{\left\lbrace-{\left\Vert x - y \right\Vert^2}{/\varepsilon}\right\rbrace} \tilde{p}(y) dy}.
\end{equation}
Lastly, the derivation in~\cite{coifman2006diffusion} establishes that 
\begin{equation}
\lim_{\varepsilon\rightarrow 0} \lim_{N\rightarrow\infty}\frac{4}{\varepsilon} \left\{\bar{L}g\right\}(i,\vartheta) =  \Delta_{\man} f(x_i^\vartheta).
\end{equation}

\section{Proof of Theorem~\ref{thm:eigenfunctions and eigenvalues of L_tilde}}
\label{appendix:proof of spectral properties of L_tilde}
We mention that this proof follows very closely the proof of Theorem~\ref{thm:eigenfunctions and eigenvalues of L}.
\begin{proof}
First, as
\begin{equation}
\left\{D^{-1}Wf\right\}(i,\vartheta) = \sum_{j=1}^N \int_0^{2\pi} \left(W_{i,j}(\vartheta,\varphi)/D_{i,i}\right) f_j(\varphi) d\varphi,
\end{equation}
it is evident that $D^{-1}W$ is also LRI (since $D^{-1}W$ merely alters $W_{i,j}$ by constant factors independent of $\vartheta$), and hence $L = I-D^{-1}W$ is of the form $A+G$ as required by Proposition~\ref{prop:eigenfunctions and eigenvalues of H}.
Therefore, we can obtain a sequence of eigenfunctions and eigenvalues of $L$ by diagonalizing the matrices $\tilde{S}_m = I-D^{-1}\hat{W}^{(m)}$ for every $m\in\mathbb{Z}$. However, it is important to mention that in contrast to $S_m=D-\hat{W}^{(m)}$, the matrix $\tilde{S}_m=I-D^{-1}\hat{W}^{(m)}$ is not Hermitian. Nonetheless, if we make the observation that $\tilde{S}_m$ is similar to the Hermitian matrix $S_m^{'} = I - D^{-1/2}\hat{W}^{(m)} D^{-1/2}$ by
\begin{equation}
D^{1/2} \tilde{S}_m D^{-1/2} = S^{'}_m,
\end{equation}
then it follows that $\tilde{S}_m$ can be diagonalized with a set of eigenvectors complete in $\mathbb{C}^N$ and the eigenvalues of $S_m^{'}$, which are real-valued.
Then, as it follows from Theorem~\ref{thm:eigenfunctions and eigenvalues of L} that the eigenvalues of $S_m$ are non-negative, we have that the eigenvalues of $S_m^{'} = D^{-1/2} S_m D^{-1/2}$ must be also non-negative (since surely $v^* D^{-1/2} S_m D^{-1/2} v \geq 0 $ for any $v\in\mathbb{C}^N$), which lastly implies that the eigenvalues of $\tilde{S}_m$ are non-negative.
As to the fact that $\left\{\Phi_{m,k}\right\}_{m,k}$ are complete in $\mathcal{H}$, the same arguments as in the proof of Theorem~\ref{thm:eigenfunctions and eigenvalues of L} hold.
\end{proof}

\end{appendices}

\bibliographystyle{plain}
\bibliography{mybib}

\end{document}